\newtheorem{theorem}{Theorem}[section]
\newtheorem{definition}[theorem]{Definition}
\newtheorem{corollary}[theorem]{Corollary}
\newtheorem{proposition}[theorem]{Proposition}
\newtheorem{lemma}[theorem]{Lemma}
\def\eqref#1{equation~\ref{#1}}
\def\1{\bm{1}}
\def\vzero{{\bm{0}}}
\def\vone{{\bm{1}}}
\def\vtheta{{\bm{\theta}}}
\def\va{{\bm{a}}}
\def\vb{{\bm{b}}}
\def\ve{{\bm{e}}}
\def\vf{{\bm{f}}}
\def\vg{{\bm{g}}}
\def\vh{{\bm{h}}}
\def\vk{{\bm{k}}}
\def\vo{{\bm{o}}}
\def\vp{{\bm{p}}}
\def\vq{{\bm{q}}}
\def\vs{{\bm{s}}}
\def\vu{{\bm{u}}}
\def\vv{{\bm{v}}}
\def\vx{{\bm{x}}}
\def\vy{{\bm{y}}}
\def\vz{{\bm{z}}}
\def\mA{{\bm{A}}}
\def\mB{{\bm{B}}}
\def\mD{{\bm{D}}}
\def\mF{{\bm{F}}}
\def\mK{{\bm{K}}}
\def\mM{{\bm{M}}}
\def\mR{{\bm{R}}}
\def\mS{{\bm{S}}}
\def\mU{{\bm{U}}}
\def\mV{{\bm{V}}}
\def\mW{{\bm{W}}}
\def\mX{{\bm{X}}}
\def\mY{{\bm{Y}}}
\def\mZ{{\bm{Z}}}
\DeclareMathAlphabet{\mathsfit}{\encodingdefault}{\sfdefault}{m}{sl}
\SetMathAlphabet{\mathsfit}{bold}{\encodingdefault}{\sfdefault}{bx}{n}
\newcommand{\tens}[1]{\bm{\mathsfit{#1}}}
\def\tB{{\tens{B}}}
\def\tD{{\tens{D}}}
\def\tE{{\tens{E}}}
\def\tK{{\tens{K}}}
\def\tR{{\tens{R}}}
\def\tS{{\tens{S}}}
\def\tT{{\tens{T}}}
\def\tU{{\tens{U}}}
\def\tX{{\tens{X}}}
\def\sF{{\mathbb{F}}}
\def\sN{{\mathbb{N}}}
\def\sQ{{\mathbb{Q}}}
\def\sZ{{\mathbb{Z}}}
\newcommand{\R}{\mathbb{R}}
\newcommand{\softmax}{\mathrm{softmax}}
\newcommand{\sigmoid}{\sigma}
\DeclareMathOperator*{\argmax}{arg\,max}
\DeclareMathOperator*{\argmin}{arg\,min}
\newcommand{\key}{\mK}
\newcommand{\val}{\mV}
\newcommand{\bkey}{{\mK}^{\textbf{e}}}
\newcommand{\bval}{{\mV}^{\textbf{e}}}
\newcommand{\Att}{\operatorname{Att}}
\newcommand{\Enc}{\operatorname{Enc}}
\newcommand{\TEnc}{\operatorname{TEnc}}
\newcommand{\Dec}{\operatorname{Dec}}
\newcommand{\TDec}{\operatorname{TDec}}
\newcommand{\Trans}{\operatorname{Trans}}
\newcommand{\NGPU}{\textsc{NGpu}}
\newcommand{\score}{\operatorname{score}}
\newcommand{\relu}{\operatorname{relu}}
\newcommand{\hardmax}{\operatorname{hardmax}}
\newcommand{\PropInv}{\operatorname{PropInv}}
\newcommand{\values}{\operatorname{vals}}
\newcommand{\prop}{\operatorname{prop}}
\newcommand{\penc}{\operatorname{pos}}
\newcommand{\N}{\sN}
\newcommand{\Q}{\sQ}
\title{On the Turing Completeness of \\
Modern Neural Network Architectures}
\author{Jorge P\'erez, Javier Marinkovi\'c, Pablo Barcel\'o \\ 
Department of Computer Science, Universidad de Chile\; \&\; IMFD Chile\\
\texttt{\{jperez,jmarinkovic,pbarcelo\}@dcc.uchile.cl}
}
\begin{document}

\maketitle

\begin{abstract}
Alternatives to recurrent neural networks, in particular, 
architectures based on \emph{attention} or \emph{convolutions},  
have been gaining momentum for 
processing input sequences. 
In spite of their relevance, 
the computational properties of these alternatives have not yet 
been fully explored.
We study the computational power 
of two of the most paradigmatic architectures exemplifying these mechanisms:  
the \emph{Transformer}~\citep{Transformer} and the 
\emph{Neural GPU}~\citep{NeuralGPU}.
We show both models to be Turing complete exclusively based on their capacity 
to compute and access internal dense representations of the data.  
In particular, neither the Transformer nor the Neural GPU requires access to an external 
memory to become Turing complete.
Our study also reveals some minimal sets of elements 
needed to obtain these completeness results.
\end{abstract}

\section{Introduction}

There is an increasing interest in designing neural network architectures capable of learning algorithms 
from examples~\citep{NeuralTM,NeuralStack,StackRNN,NeuralGPU,NeuralRAM,UniversalT}.
A key requirement for any such an architecture is thus to have the capacity of implementing arbitrary algorithms,
that is, to be \emph{Turing complete}.
Turing completeness often follows for these networks as they can be seen as a control unit with 
access to an unbounded memory; as such, they are capable of simulating any Turing machine.
 
On the other hand, 
the work by \cite{Siegelmann95} has established a different 
way of looking at the Turing completeness of neural networks. In particular, their work establishes that 
recurrent neural networks (RNNs) are Turing complete even if only a bounded number of resources (i.e., 
neurons and weights) is allowed. 
This is based on two conditions: (1) the ability of RNNs
to compute {\em internal} dense representations of the data, and (2) the mechanisms they use for accessing 
such representations. Hence, the view proposed by \citeauthor{Siegelmann95} shows that it is possible to release the full 
computational power of RNNs without arbitrarily increasing its model complexity. 

Most of the early neural architectures proposed 
for learning algorithms correspond to extensions of RNNs -- 
e.g., Neural Turing Machines~\citep{NeuralTM} --, 
and hence they are 
Turing complete in the sense of \citeauthor{Siegelmann95}. 
However, a recent trend has shown the benefits of 
designing networks that manipulate sequences but 
do not directly apply a recurrence to sequentially process their input symbols. 
Architectures based on \emph{attention} or \emph{convolutions} 
are two prominent examples of this approach. 
In this work we look at the problem of Turing completeness \`a la \citeauthor{Siegelmann95} for two 
of the most paradigmatic models exemplifying these features:  
 the \emph{Transformer}~\citep{Transformer}
and the \emph{Neural GPU}~\citep{NeuralGPU}. 

The main contribution of our paper is to show that the Transformer and the Neural GPU are Turing complete 
based on their capacity to compute and access internal dense representations of the data.  
In particular, neither the 
Transformer nor the Neural GPU requires access to an external additional memory to become Turing complete.
Thus the completeness holds for bounded architectures (bounded number of neurons and parameters).
To prove this we assume that internal activations are represented as rational numbers with arbitrary precision.
For the case of the Transformer we provide a direct simulation of a Turing machine, while for the case
of the Neural GPU our result follows by simulating standard \emph{sequence-to-sequence} RNNs.
Our study also reveals some minimal sets of elements 
needed to obtain these completeness results.
The computational power of Transformers and of Neural GPUs has been compared
in the current literature~\citep{UniversalT}, but both are only informally used. 
Our paper provides a formal way of approaching this comparison.

For the sake of space, we only include sketch of some proofs in the body of the paper.
The details for every proof can be found in the appendix.



\paragraph{Background work}

The study of the computational power of neural networks can be traced back to~\cite{mcculloch43} which established an 
analogy between neurons with hard-threshold activations and first order logic sentences, and~\cite{Kleene56}
that draw a connection between neural networks and finite automata.
As mentioned earlier, the first work showing the Turing completeness of finite neural networks with linear connections 
was carried out by~\cite{Siegelmann92,Siegelmann95}. 
Since being Turing complete does not ensure the ability to actually learn algorithms in practice, 
there has been an increasing interest in enhancing RNNs with mechanisms for supporting 
this task.
One strategy has been the addition of inductive biases in the form of external memory, being 
the \emph{Neural Turing Machine} (NTM)~\citep{NeuralTM} a paradigmatic example.
To ensure that NTMs are differentiable, their memory is accessed via a soft attention mechanism~\citep{attention}.
Other examples of architectures that 
extend RNNs with memory are the Stack-RNN~\citep{StackRNN}, and the (De)Queue-RNNs~\citep{NeuralStack}.
By~\citeauthor{Siegelmann92}'s results, all these architectures are Turing complete.


The Transformer architecture~\citep{Transformer} is almost exclusively 
based on the attention mechanism, and it has 
achieved state of the art results on many language-processing tasks. 
While not initially designed to learn general algorithms,~\cite{UniversalT} 
have advocated the need for enriching its architecture with several 
new features as a way to learn general procedures in practice.
This enrichment is motivated by the empirical observation that the 
original Transformer architecture struggles to generalize to input of lengths not seen during
training. 
We, in contrast, show that the original Transformer architecture is Turing complete, based on 
different considerations.
These results do not contradict each other, but show the differences that may arise between theory and practice.
For instance,~\cite{UniversalT} assume fixed precision, while we allow arbitrary internal precision during computation. 
We think that both approaches can be complementary as our theoretical results 
can shed light on what are the intricacies of the original architecture, 
which aspects of it are candidates for change or improvement, and which others are strictly needed.
For instance, our proof uses \emph{hard attention} while the 
Transformer is often trained with {\em soft attention}~\citep{Transformer}.
See Section~\ref{sec:diff} for a discussion on these differences.

The Neural GPU is an architecture that mixes convolutions and {\em gated recurrences} over tridimensional 
tensors.
It has been shown that NeuralGPUs are powerful enough to learn decimal multiplication from
examples~\citep{NeuralGPUImprove}, being the first neural architecture capable of solving this problem end-to-end.
The similarity of Neural GPUs and cellular automata 
has been used as an argument to state the Turing completeness
of the architecture~\citep{NeuralGPU,NeuralGPU2}. Cellular automata are Turing complete~\citep{CellularAutomata,UCellularAutomata}
and their completeness is established assuming an unbounded number of cells. 
In the Neural GPU architecture, in contrast, the number of cells that can be used during a 
computation is proportional to the size 
of the input sequence~\citep{NeuralGPU}. 
One can cope with the need for more cells by padding the Neural GPU input with additional (dummy) symbols, 
as much as needed for a particular computation. 
Nevertheless, this is only a partial solution, as for a Turing-complete model of computation, one
cannot decide a priori how much memory is needed to solve a particular problem.
Our results in this paper are somehow orthogonal to the previous argument; we show that one can leverage 
the dense representations of the Neural GPU cells
to obtain Turing completeness without requiring to add cells beyond the ones used to store the input.


\section{Preliminaries}\label{sec:lang-rec}

We assume  all  weights and activations to be rational numbers of arbitrary precision.
Moreover, we only allow the use of rational functions with rational coefficients.
Most of our positive results make use of the {\em piecewise-linear sigmoidal activation} function 
$\sigma : \sQ \to \sQ$, which
 is defined as
\begin{equation}\label{eq:sigmoid}
\sigma(x) = \left\{
\begin{array}{ll}
0 & x < 0,\\
x & 0 \leq x \leq 1,\\
1 & x > 1.
\end{array}
\right.
\end{equation}

We are mostly interested in {\em sequence-to-sequence} (seq-to-seq) neural network architectures that we next formalize.
A seq-to-seq network $N$ receives as input a sequence $\mX=(\vx_1,\ldots,\vx_n)$ of vectors $\vx_i \in \sQ^d$, for 
some $d > 0$,  
and produces as output a sequence $\mY=(\vy_1,\ldots,\vy_m)$ of vectors $\vy_i\in \sQ^d$.
Most of these types of architectures require a \emph{seed} vector $\vs$ and 
some stopping criterion for determining the length of the output. 
The latter is usually based on the generation of a particular output vector called an \emph{end of sequence} mark. 
%
In our formalization instead, we allow a network to produce a fixed number $r \geq 0$ of output vectors. 
Thus, for convenience we see a general seq-to-seq network as a function $N$ such that the value $N(\mX,\vs,r)$
corresponds to an output sequence of the form $\mY=(\vy_1,\vy_2,\ldots,\vy_r)$.
With this definition, we can view every seq-to-seq network as a \emph{language recognizer} of strings as follows. 

\begin{definition}\label{def:recognizer}
A seq-to-seq language recognizer is a tuple $A=(\Sigma,f,N,\vs,\sF)$, where $\Sigma$ is a finite alphabet, 
$f:\Sigma\to \Q^d$ is an \emph{embedding function}, $N$ is a seq-to-seq network, 
$\vs\in \Q^d$ is a 
\emph{seed vector}, and $\sF\subseteq \Q^d$ is a set of \emph{final vectors}.
We say that $A$ \emph{accepts the string $w\in \Sigma^*$}, if there exists an integer $r\in \N$ such that 
$N(f(w),\vs,r)=(\vy_1,\ldots,\vy_{r})$ and $\vy_r\in \sF$.
The language accepted by $A$, denoted by $L(A)$, is the set of all strings accepted by $A$.
\end{definition}

We impose two additional restrictions over recognizers. The embedding function $f:\Sigma\to \Q^d$ should
be computed by a Turing machine in time linear w.r.t.~the size of $\Sigma$. 
This covers the two most typical ways of computing input embeddings from symbols: the \emph{one-hot encoding}, 
and embeddings computed by fixed feed-forward networks.
Moreover, the set $\sF$ should also be recognizable in linear-time;
given a vector $\vf$, the membership $\vf\in\sF$ should be decided by a Turing machine
working in linear time with respect to the size (in bits) of $\vf$.
This covers the usual way of checking equality with a fixed end-of-sequence vector.
We impose these restrictions to disallow the possibility of \emph{cheating} by encoding arbitrary computations
in the input embedding or the stop condition, while being permissive enough to construct meaningful 
embeddings and stoping criterions.

Finally, a class $\mathcal{N}$ of seq-to-seq neural network architectures defines the class $\mathcal{L}_\mathcal{N}$
composed of all the languages accepted by language recognizers that use networks in $\mathcal{N}$.
From these notions, the formalization of Turing completeness of a class $\mathcal{N}$ naturally follows.

\begin{definition}
A class $\mathcal{N}$ of seq-to-seq neural network architectures is Turing Complete if $\mathcal{L}_\mathcal{N}$
is exactly the class of languages recognized by Turing machines.
\end{definition}


Given an input sequence $\mX=(\vx_1,\ldots,\vx_n)$, a seed vector $\vy_0$, and $r\in \sN$, 
an {\em encoder-decoder} RNN is given by the following two recursions
\begin{eqnarray}
& \vh_0 = \vzero, \;\;\;\;\;\;\; \vh_{i}=f_1(\vx_i\mW+\vh_{i-1}\mV+\vb_1) \;\;\;\; (\text{with }1\leq i\leq n) \label{eq:rnn-encoder} \\
& \vg_0 = \vh_n, \;\;\;\;\;\;\;\vg_{t}=f_2(\vg_{t-1}\mU+\vy_{t-1}\mR+\vb_2), \;\;\;\;\;\;\;\vy_t=O(\vg_t)\;\;\;\; (\text{with }1\leq t\leq r) \label{eq:rnn-decoder}
\end{eqnarray}
where $\mV,\mW,\mU,\mR$ are matrices, $\vb_1$ and $\vb_2$ are vectors, $O(\cdot)$ is an output function, and $f_1$ and $f_2$ are activations functions.
Equation~(\ref{eq:rnn-encoder}) is called the RNN-{\em encoder} and~(\ref{eq:rnn-decoder}) the RNN-{\em decoder}.

The next Theorem follows by inspection of the proof by~\cite{Siegelmann92,Siegelmann95} after adapting
it to our formalization of encoder-decoder RNNs.

\begin{theorem}[\cite{Siegelmann92,Siegelmann95}]\label{teo:rnn}
The class of encoder-decoder RNNs is Turing complete.
Turing completeness holds even if we restrict to the class in which $\mR$ is the zero matrix, $\vb_1$ and $\vb_2$ are the zero vector, 
$O(\cdot)$ is the identity function, and $f_1$ and $f_2$ are the piecewise-linear sigmoidal activation $\sigma$.
\end{theorem}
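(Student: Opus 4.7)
The plan is to adapt the original Siegelmann--Sontag simulation of a Turing machine by a single RNN with piecewise-linear sigmoid to the restricted encoder-decoder form required by the theorem. Fix an arbitrary Turing machine $M$. Siegelmann and Sontag construct a dimension $d$ and matrices such that a rational vector $\vh \in \sQ^d$ can encode the complete instantaneous configuration of $M$ (left tape and right tape as two rationals in a Cantor-like base-$4$ expansion, plus a one-hot representation of the current state), and such that one affine map followed by $\sigma$ realizes one step of $M$'s transition.

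First, I would split this into two phases. The encoder consumes the input $\mX=(\vx_1,\dots,\vx_n)$ one symbol at a time via $\vh_i = \sigma(\vx_i\mW + \vh_{i-1}\mV)$, progressively writing the base-$4$ digits of the input onto the tape-encoding coordinates of $\vh_i$ and initializing the state/head coordinates; at step $n$ the vector $\vh_n$ encodes the initial configuration of $M$ on $w$. Since $\mR=\vzero$, the decoder recursion collapses to $\vg_t = \sigma(\vg_{t-1}\mU)$ starting from $\vg_0=\vh_n$, which is precisely the autonomous regime needed: choosing $\mU$ as the Siegelmann--Sontag transition matrix makes $\vg_t$ encode the configuration of $M$ after $t$ computation steps.

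Next, I would absorb the biases required by Siegelmann--Sontag into weights, since the theorem demands $\vb_1=\vb_2=\vzero$. The standard trick is to reserve one ``constant-$1$'' coordinate $c$ and route every bias through weights from $c$. For the encoder, I force the embedding $f:\Sigma\to\sQ^d$ to place a $1$ in coordinate $c$ of every $\vx_i$, so $\mW$ supplies the biases for the input-driven updates and $\mV$ propagates the constant onward. For the decoder the only signal is $\vg_{t-1}$, so I need coordinate $c$ of $\vg_{t-1}$ to be invariantly $1$: I ensure $(\vh_n)_c=1$ and design $\mU$ so that the pre-activation of coordinate $c$ is always $\geq 1$, using the fact that $\sigma(x)=1$ for $x\geq 1$ to clip it back to $1$. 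A careful reweighting guarantees this bias coordinate never leaks into the delicate base-$4$ tape-encoding coordinates.

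Finally, I would take $O$ to be the identity, and let $\sF\subseteq\sQ^d$ be the set of vectors whose state-encoding coordinates equal the one-hot vector for an accepting state of $M$; this is trivially linear-time decidable, so it is admissible. By construction, $w\in L(M)$ iff there exists $r$ with $\vg_r\in\sF$, which is exactly acceptance by the recognizer. The reverse containment (every $L\in\mathcal{L}_\mathcal{N}$ is recursively enumerable) is immediate because the encoder-decoder recursion is itself computable on rational inputs. The main obstacle is the bias-absorption step: one must verify that introducing the constant-$1$ coordinate does not corrupt the arithmetic of the Cantor-like tape encoding, which forces a small but delicate rescaling of the Siegelmann--Sontag matrices. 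Everything else is a mechanical repackaging of their construction into the encoder-decoder template.
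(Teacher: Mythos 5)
Your proposal is correct and follows essentially the same route as the paper's proof sketch: both repackage Siegelmann--Sontag's two-phase construction (an input-encoding network followed by an autonomous simulation network) as the encoder and decoder respectively, absorb the biases into weights via a constant-$1$ coordinate supplied by the embedding and maintained invariantly through the recursion, and observe that $\mR=\vzero$ suffices because the decoder phase reads no input. The only cosmetic difference is the acceptance set: you take $\sF$ to be the vectors whose state coordinates form the one-hot of an accepting state, whereas the paper uses Siegelmann--Sontag's designated neuron $n^\star$ that flips to $1$ exactly once upon first reaching an accepting state, which sidesteps any concern about intermediate micro-steps of the (slowed-down) simulation exposing a partial state encoding.
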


\section{The Transformer architecture}\label{sec:transformer}


In this section we present a formalization of the Transformer architecture~\citep{Transformer}, 
abstracting away from specific choices of functions and parameters.
Our formalization is not meant to produce an efficient implementation of the Transformer, but to provide a simple 
setting over which its mathematical properties can be established in a formal way. 


The Transformer is heavily based on the attention mechanism introduced next.
Consider a scoring function $\score:\Q^d\times\Q^d\to \Q$ and a normalization function $\rho:\Q^n\to \Q^n$, for 
$d,n > 0$. 
Assume that $\vq\in \Q^d$, and  
that $\key=(\vk_1,\ldots,\vk_{n})$ and $\val=(\vv_1,\ldots,\vv_{n})$ are tuples of 
elements in $\Q^d$. 
A~\emph{$\vq$-attention} over $(\key,\val)$, denoted by $\Att(\vq,\key,\val)$, is a vector $\va\in \Q^d$ 
defined as follows.
\begin{eqnarray}
(s_1,\ldots,s_{n}) & = & \rho(\score(\vq,\vk_1), \score(\vq,\vk_2), \ldots, \score(\vq,\vk_{n})) \label{eq:scoring}\\
\va & = & s_1\vv_1 + s_2\vv_2 +\cdots + s_{n}\vv_{n}. \label{eq:attention}
\end{eqnarray}
Usually, $\vq$ is called the \emph{query}, $\mK$ the \emph{keys}, and $\mV$ the \emph{values}.
We do not pose any restriction on the scoring and normalization functions, 
as some of our results hold in general.
We only require the normalization function to satisfy 
that there is a function $f_\rho$ from $\sQ$ to $\sQ^+$ such that 
for each $\vx = (x_1,\dots,x_n) \in \Q^n$ it is the case that the $i$-th component $\rho_i(\vx)$ of $\rho(\vx)$  
is equal to $f_\rho(x_i)/\sum^n_{j = 1}f_\rho(x_j)$.   
Thus, $\va$ in Equation~(\ref{eq:attention}) is a convex combination of the vectors in $\mV$.

When proving possibility results, 
we will need to pick specific scoring and normalization functions.
A usual choice for the scoring function is a feed forward network with input $(\vq,\vk_i)$ 
sometimes called \emph{additive attention}~\citep{attention}.
Another possibility is to use the dot product $\langle \vq,\vk_i\rangle$ called \emph{multiplicative attention}~\citep{Transformer}.
We use a combination of both: multiplicative attention plus a non linear function.
For the normalization function, $\softmax$ is a standard choice.
Nevertheless, in our proofs we use the $\hardmax$ 
function, which is obtained by setting $f_{\hardmax}(x_i)=1$ if $x_i$ is the maximum value, 
and $f_{\hardmax}(x_i)=0$ otherwise.
Thus, for a vector $\vx$ in which the maximum value occurs $r$ times,
we have that $\hardmax_i(\vx) = \frac{1}{r}$ if $x_i$ is the maximum value of $\vx$, and $\hardmax_i(\vx) = 0$ otherwise.
We call it \emph{hard attention} whenever $\hardmax$ is used as normalization function.
As customary, for a function $F:\Q^d\to \Q^d$ and a sequence $\mX=(\vx_1,\vx_2,\ldots,\vx_n)$, with $\vx_i\in \Q^d$, 
we write $F(\mX)$ to denote the sequence $(F(\vx_1),\ldots,F(\vx_n))$.


\paragraph{Transformer Encoder and Decoder}

A {\em single-layer encoder} of the Transformer is a parametric function $\Enc(\mX;\vtheta)$ receiving a 
sequence $\mX=(\vx_1,\ldots, \vx_n)$ of vectors in $\Q^d$ and returning a sequence 
$\mZ=(\vz_1,\ldots, \vz_n)$ of the same length of vectors in $\Q^d$. 
In general, we consider the parameters in  $\Enc(\mX;\vtheta)$ as functions
$Q(\cdot), K(\cdot), V(\cdot)$, and $O(\cdot)$, all of them from $\sQ^d$ to $\sQ^d$.
The single-layer encoder is then defined as follows
\begin{eqnarray}
\va_i & = & \Att(Q(\vx_i),K(\mX),V(\mX)) \label{eq:self_attention_enc} + \vx_i \label{eq:enc-att}\label{eq:primera-residual} \\
\vz_i & = & O(\va_i) \label{eq:output_transformation_enc} + \va_i \label{eq:enc-out}
\end{eqnarray} 
In practice $Q(\cdot)$, $K(\cdot)$, $V(\cdot)$ are typically matrix multiplications,
and $O(\cdot)$ a feed-forward network.
The $+\ \vx_i$ and $+\ \va_i$ summands are usually called \emph{residual connections}~\citep{Residual1,Residual2}.
When the particular functions used as parameters are not important, we simply write $\mZ=\Enc(\mX)$. 


The Transformer {\em encoder} 
 is defined simply as the repeated application of single-layer encoders (with independent parameters),
plus two final transformation functions $K(\cdot)$ and $V(\cdot)$ applied to every vector in the output sequence 
of the final layer.
Thus the $L$-layer Transformer encoder is defined by the following recursion (with $1 \leq \ell \leq L-1$ and $\mX^1=\mX$).
\begin{equation}
\mX^{\ell+1}=\Enc(\mX^\ell;\vtheta_\ell), \;\;\;\; \key = K(\mX^L), \;\;\;\; \val=V(\mX^L). \label{eq:enc-rec}
\end{equation}
We use $(\key, \val)=\TEnc_L(\mX)$ to denote an $L$-layer Transformer encoder over the sequence $\mX$.


A {\em single-layer decoder}  is similar to a single-layer encoder but with additional attention to an external pair of 
key-value vectors $(\bkey,\bval)$. 
The input for the single-layer decoder is a sequence $\mY=(\vy_1,\ldots, \vy_k)$ plus the external 
pair $(\bkey,\bval)$, 
and the output is a sequence $\mZ=(\vz_1,\ldots,\vz_k)$.
When defining a decoder layer we denote by $\mY_j$ the sequence $(\vy_1,\ldots,\vy_j)$, for $1\leq j\leq k$.
The layer is also parameterized by four functions $Q(\cdot)$, $K(\cdot)$, $V(\cdot)$ and $O(\cdot)$
and is defined as follows. 
\begin{eqnarray}
\vp_i & = & \Att(Q(\vy_i),K(\mY_i),V(\mY_i)) \label{eq:self_attention_dec} + \vy_i \label{eq:dec-self}\\
\va_i & = & \Att(\vp_i,\bkey,\bval) \label{eq:external_attention} + \vp_i \label{eq:dec-ext} \\
\vz_i & = & O(\va_i) + \va_i \label{eq:output_transformation_dec} \label{eq:ultima-residual}\label{eq:dec-ff}
\end{eqnarray} 
Notice that the first (self) attention over $(K(\mY_i), V(\mY_i))$ considers the subsequence of $\mY$ 
only until index $i$ and is used to generate a query $\vp_i$ to attend the external pair $(\bkey,\bval)$.
We denote the single-decoder layer by $\Dec((\bkey,\bval),\mY; \vtheta)$.

The Transformer {\em decoder} is a repeated application of single-layer decoders,
plus a transformation function $F:\Q^d\to \Q^d$ applied to
the final vector of the decoded sequence.
Thus, the output of the decoder is a 
single vector $\vz\in \Q^d$. 
Formally, the $L$-layer Transformer decoder is defined as   
\begin{equation}\label{eq:dec-out}
\mY^{\ell+1} = \Dec((\bkey,\bval),\mY^\ell;\vtheta_\ell),  \;\;\;\;  \vz = F(\vy_k^L) \quad \quad \text{($1 \leq \ell \leq L-1$ and $\mY^1=\mY$).}  
\end{equation}


We use $\vz=\TDec_L((\bkey,\bval),\mY)$ to denote an $L$-layer Transformer decoder.


\paragraph{The complete Tansformer}
A {\em Transformer network} receives an input sequence $\mX$, a seed
vector $\vy_0$, and a value $r\in \N$. Its output is a sequence $\mY=(\vy_1,\ldots, \vy_r)$ defined as 
\begin{eqnarray}\label{eq:t-rec}
\vy_{t+1} & = & \TDec(\TEnc(\mX), (\vy_0,\vy_1,\ldots,\vy_{t})), \quad \quad \text{ for $0 \leq t \leq r-1$.}
\end{eqnarray}
We denote the output sequence of the transformer as 
$
\mY=(\vy_1,\vy_2,\ldots,\vy_r)=\Trans(\mX,\vy_0,r)$. 
%


\subsection{Invariance under proportions}\label{sec:invariance}

The Transformer, as defined above,
is \emph{order-invariant}: two input sequences that are permutations of each other 
%
%
produce exactly the same output. 
This is a consequence of the following 
property of the attention function:
if $\key=(\vk_1,\ldots,\vk_{n})$, $\val=(\vv_1,\ldots,\vv_{n})$, and $\pi : \{1,\dots,n\} \to \{1,\dots,n\}$
is a permutation, then $\Att(\vq,\key,\val) = \Att(\vq,\pi(\key),\pi(\val))$ for every query $\vq$.
This weakness has motivated the need for including information about the order of the input sequence 
by other means; in particular, this is often achieved by using the so-called 
{\em positional encodings} \citep{Transformer,AttentionRelative}, which we study below.

But before going into positional encodings, a natural question is what languages
the Transformer can recognize without them. As a standard yardstick we use the 
well-studied class of regular languages, i.e., languages recognized by finite automata. 
Order-invariance implies that not every regular language can be recognized by a Transformer network. 
As an example, there is no Transformer network
that can recognize the regular language $(ab)^*$, as the latter is not order-invariant.  
A reasonable question then is whether the Transformer can express all regular languages which are order-invariant. 
It is possible to show that this is not the case 
by proving that the Transformer actually satisfies a stronger invariance property, which 
we call \emph{proportion invariance}.

For a string $w\in\Sigma^*$ and a symbol $a\in \Sigma$, we use $\prop(a,w)$ to denote the ratio between the  
number of times that $a$ appears in $w$ and the length of $w$. 
Consider now the set $\PropInv(w)=\{u\in \Sigma^*\mid \prop(a,{w})=\prop({a},{u})\text{ for every }a\in \Sigma\}$. 

\begin{proposition}\label{prop:invariance}
Let $\Trans$ be a Transformer, $\vs$ a seed, $r\in \N$, and $f:\Sigma\to\Q^d$ an embedding function.
Then $\Trans(f(w),\vs,r)=\Trans(f(u),\vs,r)$, for each $u,w \in \Sigma^*$ with $u\in \PropInv(w)$. 
\end{proposition}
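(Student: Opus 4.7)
The plan is to establish that the attention function itself depends on its keys and values only through the \emph{proportions} with which distinct key-value pairs occur, and then to propagate this invariance through every layer of the Transformer.

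First I would prove the following lemma about attention. Suppose $(\key,\val)=((\vk_1,\vv_1),\ldots,(\vk_n,\vv_n))$ and $(\key',\val')=((\vk'_1,\vv'_1),\ldots,(\vk'_m,\vv'_m))$ are two tuples that, viewed as multisets of pairs, exhibit the same proportion of every pair $(\vk,\vv)$. Then $\Att(\vq,\key,\val)=\Att(\vq,\key',\val')$ for every query $\vq$. The proof is a one-line computation using the form of $\rho$: grouping positions in $\key$ by their distinct pair $(\vk^{(j)},\vv^{(j)})$ with multiplicity $c_j$,
\begin{equation*}
\Att(\vq,\key,\val) \;=\; \sum_{j} \frac{(c_j/n)\,f_\rho\!\bigl(\score(\vq,\vk^{(j)})\bigr)}{\sum_{j'} (c_{j'}/n)\,f_\rho\!\bigl(\score(\vq,\vk^{(j')})\bigr)}\,\vv^{(j)},
\end{equation*}
which only depends on the proportions $c_j/n$, not on $n$ itself.

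Next I would carry this invariance through the encoder. Say two input sequences $\mX$ and $\mX'$ are \emph{proportion-equivalent} if their distinct vectors and the proportions of each distinct vector coincide. By induction on the number of single-layer encoders I claim that if $\mX$ and $\mX'$ are proportion-equivalent then so are $\Enc(\mX;\vtheta)$ and $\Enc(\mX';\vtheta)$. For one layer this follows from the lemma: the output $\vz_i$ is a function of $\vx_i$ alone (through $Q(\vx_i)$, $O(\cdot)$, and the two residual summands) together with the attention term $\Att(Q(\vx_i),K(\mX),V(\mX))$, which by the lemma is identical on $\mX$ and $\mX'$; hence two positions with equal input produce equal output, and the multiplicity of each output vector matches the multiplicity of the corresponding input vector. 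Iterating $L$ times and then applying the coordinatewise maps $K(\cdot)$ and $V(\cdot)$ from \eqref{eq:enc-rec}, the two pairs $(\bkey,\bval)$ produced from $f(w)$ and $f(u)$ are proportion-equivalent.

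Finally, I would show by induction on $t$ that $\vy_t$ computed from $f(w)$ equals $\vy_t$ computed from $f(u)$. The decoder input $(\vy_0,\ldots,\vy_t)$ is, by the inductive hypothesis, identical in both executions, so the self-attention step \eqref{eq:dec-self} and the feed-forward/residual step \eqref{eq:dec-ff} give identical intermediate vectors; the external-attention step \eqref{eq:dec-ext} feeds the two different $(\bkey,\bval)$ pairs into $\Att(\vp_i,\cdot,\cdot)$, but by Step~3 those pairs are proportion-equivalent, so the lemma gives identical outputs. Thus $\vy_{t+1}$ agrees on $f(w)$ and $f(u)$, completing the induction and hence the proof of the proposition. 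The only delicate part is formulating the right invariant in the middle step so that it survives the residuals, the per-position maps $Q,K,V,O$, and the stacking of layers; once that bookkeeping is set up, the remaining arguments are direct consequences of the attention lemma.
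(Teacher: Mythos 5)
Your proof is correct and follows essentially the same route as the paper's: both rest on the observation that the normalized attention $\rho_i(\vx)=f_\rho(x_i)/\sum_j f_\rho(x_j)$ makes $\Att$ depend on the multiset of key–value pairs only through their proportions, propagate this through the encoder layers by induction, and then run an induction on decoder steps using the equality $\Att(\vq,\bkey,\bval)=\Att(\vq,\bkey{}',\bval{}')$. The paper phrases the encoder step position-by-position (if $\vx_i=\vx'_j$ then the corresponding keys and values agree) and regroups the final attention sum over $\values(\mX)$ rather than explicitly asserting that the intermediate outputs $\mZ^\ell$ are themselves proportion-equivalent, which sidesteps the small imprecision in your phrase ``the multiplicity of each output vector matches the multiplicity of the corresponding input vector'' (the layer map need not be injective; the correct statement is that the proportion of each output vector is the sum of proportions of inputs mapping to it), but the substance of the two arguments is the same.
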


As an immediate corollary we obtain the following. 

\begin{corollary}\label{cor:not-regular}
Consider the order-invariant regular language $L=\{w\in\{a,b\}^*\mid$ $w$ has an even number of $a$ symbols$\}$.
Then $L$ cannot be recognized by a Transformer network.
\end{corollary}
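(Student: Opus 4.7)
The plan is to derive the corollary as a direct application of Proposition~\ref{prop:invariance}. The idea is that proportion invariance is strictly stronger than the invariance under whether the number of $a$'s is even, so any language sensitive to parity must lump together proportion-equivalent strings with different parities, and hence cannot be recognized by a Transformer.

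Concretely, I would argue by contradiction. Suppose there is a Transformer recognizer $A=(\Sigma,f,\Trans,\vs,\sF)$ with $\Sigma=\{a,b\}$ such that $L(A)=L$. Pick two strings $w,u\in\Sigma^*$ that (i) have the same proportion of every symbol in $\Sigma$, but (ii) differ in the parity of the number of $a$'s. The simplest choice is $w=aa$ and $u=a$: both satisfy $\prop(a,\cdot)=1$ and $\prop(b,\cdot)=0$, so $u\in\PropInv(w)$, yet $w$ has an even number of $a$'s and $u$ has an odd number.

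Next I would invoke Proposition~\ref{prop:invariance} with this $w$ and $u$: for every $r\in\N$,
\begin{equation*}
\Trans(f(w),\vs,r)=\Trans(f(u),\vs,r).
\end{equation*}
In particular, the $r$-th output vector produced on input $f(w)$ equals the $r$-th output vector produced on input $f(u)$. By Definition~\ref{def:recognizer}, $w\in L(A)$ means there exists $r$ such that this $r$-th output vector lies in $\sF$; but then the same $r$ witnesses $u\in L(A)$, and vice versa. Hence $w\in L(A)\iff u\in L(A)$.

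This is the contradiction: since $w\in L$ and $u\notin L$, no recognizer with $L(A)=L$ can exist. There is no real obstacle here once Proposition~\ref{prop:invariance} is in hand; the only subtlety is making sure to quantify over all $r$ simultaneously (so that the existential quantifier in the acceptance condition transfers from $w$ to $u$), which is immediate because Proposition~\ref{prop:invariance} gives equality of the full output sequence for every $r$.
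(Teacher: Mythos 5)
Your proof is correct and follows essentially the same approach as the paper: both invoke Proposition~\ref{prop:invariance} on a pair of proportion-equivalent strings whose $a$-counts differ in parity to derive a contradiction (the paper happens to use $aabb$ and $aaabbb$; you use the even simpler pair $aa$ and $a$). The extra remark about quantifying over all $r$ is a correct unpacking of a step the paper leaves implicit.
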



On the other hand, languages recognized by Transformer networks are not necessarily 
regular. 

\begin{proposition}\label{prop:more-than-regular}
There is a Transformer network that recognizes the non-regular 
language $S=\{w\in \{a,b\}^*\mid$ $w$ has strictly more symbols 
$a$ than symbols $b\}$.
\end{proposition}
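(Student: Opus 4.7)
The plan is to exploit the proportion invariance of the Transformer established in Proposition~\ref{prop:invariance}. The language $S$ is itself proportion invariant, since for any nonempty $w\in\{a,b\}^*$ one has $w \in S$ iff $\prop(a,w) > \prop(b,w)$ iff $\prop(a,w) > 1/2$. So it suffices to build a Transformer that, given $w$, outputs a vector whose first coordinate equals $\prop(a,w) - \prop(b,w)$, and then take $\sF = \{\vv\in \Q^3 \mid v_1 > 0\}$, a set trivially decidable in linear time from the bit representation of $\vv$.

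For the encoder, I would use the embedding $f(a) = (1,0,0)$ and $f(b) = (-1,0,0)$ in $\Q^3$, and take a single encoder layer whose key function $K$ is identically $\vzero$. Since $\langle Q(\vx_i), K(\vx_j)\rangle = 0$ for all $i,j$, every attention score is the same fixed constant, and therefore $\hardmax$ distributes uniformly over the $n$ positions. With value function $V(\vx) = (0, x_1, 0)$, the attention output at every position equals $\tfrac{1}{n}\sum_{i=1}^n V(\vx_i) = (0,\,\prop(a,w)-\prop(b,w),\,0)$. The residual connection $+\vx_i$ touches only the first coordinate, so the second coordinate cleanly stores $\prop(a,w)-\prop(b,w)$. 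Taking the layer's $O$ to be the zero map gives $\vz_i = (\pm 1,\,\prop(a,w)-\prop(b,w),\,0)$, and the encoder's external projections $K^{\mathrm{ext}}\equiv\vzero$ and $V^{\mathrm{ext}}(\vx) = (0, x_2, 0)$ produce external keys all equal to $\vzero$ and external values all equal to $(0,\,\prop(a,w)-\prop(b,w),\,0)$.

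For the decoder, I take the seed $\vy_0 = (0,0,1)$ and run one decoding step ($r=1$). The self-attention on the one-element sequence $(\vy_0)$ trivially returns $V(\vy_0)$, and by choosing the decoder's internal $V$ to vanish on $\vy_0$ I get $\vp_0 = \vy_0$. The external attention over $(\bkey,\bval)$ is again uniform and equals $(0,\prop(a,w)-\prop(b,w),0)$, so $\va_0 = (0,\prop(a,w)-\prop(b,w),1)$ after the residual. Choosing $O(\vx) = (x_2, 0, 0)$ and $F$ the identity gives $\vy_1 = (\prop(a,w)-\prop(b,w),\,\prop(a,w)-\prop(b,w),\,1)$, so $\vy_1\in\sF$ iff $w \in S$, as required.

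The only nontrivial aspect is bookkeeping: the residual connections must never overwrite the coordinate carrying $\prop(a,w)-\prop(b,w)$, which is why I reserve one coordinate as a pristine computation channel (left untouched by the embedding) and another as a constant marker for the decoder's self-attention step. The essential mechanism---forcing uniform hard attention via constant keys so that the attention output coincides with the arithmetic mean of the values---is a direct consequence of the definition of $\hardmax$, and is really the only idea in the proof.
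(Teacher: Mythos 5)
Your proposal is correct and takes essentially the same route as the paper's proof: encode $a$ and $b$ as $\pm 1$, use constant keys so that hard attention reduces to a uniform average over positions (thereby computing $\prop(a,w)-\prop(b,w)$), and check the sign via $\sF$. The paper works in dimension $2$ and chooses $O$ so that $\vy_i$ is a fixed point of the decoder recursion, namely $\vy_i = [c_w,0]$ for every $i>0$; you work in dimension $3$ and only compute $\vy_1$, which leaves unaddressed the "$w\notin S \Rightarrow \vy_r\notin\sF$ for \emph{all} $r$" half of the acceptance condition (Definition~\ref{def:recognizer} quantifies existentially over $r$). In your construction this does in fact hold --- one can check $\vy_r = \bigl(\tfrac{r(r+1)}{2}c_w,\, r c_w,\, 1\bigr)$, whose first coordinate has the sign of $c_w$ for every $r\geq 1$ --- but it deserves a one-line induction; the paper sidesteps the issue by designing $O$ so that the decoder output stabilizes after one step.
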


That is, the computational power of Transformer networks without positional encoding 
is both rather weak (they do not even contain 
order-invariant regular languages) 
and not so easy to capture (as they can express counting properties that go beyond regularity). 
As we show in the next section, the inclusion of positional encodings 
radically changes the picture. 


\subsection{Positional Encodings and Completeness of the Transformer}\label{sec:pos-enc}


Positional encodings come to remedy the order-invariance issue by providing information about the absolute positions of the symbols 
in the input. 
A positional encoding is just a function $\penc:\N\to \Q^d$.
Function $\penc$ combined with an embedding function $f:\Sigma\to \Q^d$
give rise to a new embedding function $f_{\penc}:\Sigma\times\N\to \sQ^d$
such that $f_{\penc}(a,i)=f(a)+\penc(i)$.
Thus, given an input string $w=a_1a_2\cdots a_n \in \Sigma^ $, 
the result of the embedding function
$f_{\penc}(w)$ provides a ``new'' input 
$$
\big(f_{\penc}(a_1,1),f_{\penc}(a_2,2),\ldots,f_{\penc}(a_n,n)\big)$$ 
to the Transformer encoder.
Similarly, the Transformer decoder instead of receiving the 
sequence $\mY=(\vy_0,\vy_1,\ldots,\vy_t)$ as input, 
it receives now the sequence 
\[
\mY' \, = \, \big(\vy_0+\penc(1), \vy_1+\penc(2),\ldots, \vy_t+\penc(t+1)\big)
\]
As for the case of the embedding functions, we require the positional encoding $\penc(i)$ to be computable by 
a Turing machine working in linear time w.r.t.~the size (in bits) of $i$.

The main result of this section 
is the
completeness of Transformers with positional encodings.

\begin{theorem}\label{theo:TCTransformer}
The class of Transformer networks with positional encodings is Turing complete.
\end{theorem}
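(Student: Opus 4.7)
The plan is to directly simulate an arbitrary Turing machine $M$ on input $w$ by building a Transformer recognizer whose positional encoding, embedding function, and acceptance set $\sF$ are tailored to $M$. The converse direction---that every language recognized by a Transformer is Turing-recognizable---is immediate since all computations involve rational arithmetic with Turing-computable activations, and both the embedding and the acceptance check are required by the definitions in Section~\ref{sec:lang-rec} to be Turing-computable in linear time. So the real content is the forward simulation.

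At a high level, I would use the encoder to store the input in an attention-retrievable table and the decoder to iteratively maintain $M$'s configuration. Concretely, let $\TEnc(f(w))$ produce $(\bkey,\bval)$ with $\bkey_i$ derived from $\penc(i)$ and $\bval_i$ derived from $f(w_i)$, so that external attention can retrieve the input symbol at any tape position. I design each decoder output $\vy_t$ to be a single vector encoding the full configuration of $M$ after $t$ simulation steps: the control state $q_t$, the symbol $s_t$ written at step $t$, the head move $m_t\in\{-1,0,+1\}$, and the cumulative head position $c_t=\sum_{i\le t} m_i$. Because the decoder at step $t+1$ actually sees $(\vy_0+\penc(1),\dots,\vy_t+\penc(t+1))$, each token in the decoder history carries both configuration and timestamp information.

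The essential step of each decoder iteration is to determine the tape symbol currently under the head, at position $c_t$. This requires two attention calls. The decoder self-attention retrieves the most recent prior step at which the head was at position $c_t$: using a multiplicative scoring function plus a non-linear bias that penalises mismatches in head position and favours the largest prior time index, $\hardmax$ selects that step uniquely and the associated $\vv$ exposes the last symbol written there. In parallel, the external attention to $(\bkey,\bval)$ retrieves the original input symbol at position $c_t$. A feed-forward output block then (i) detects whether the head has ever visited $c_t$ before and chooses between the two retrieved symbols accordingly, (ii) applies the transition function of $M$ to produce $(q_{t+1},s_{t+1},m_{t+1})$, and (iii) updates $c_{t+1}=c_t+m_{t+1}$. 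For acceptance, $\sF$ is the set of configuration vectors whose state component is an accept state of $M$, and I make the simulated transition a fixed point once the accept state is reached, so that if $M$ accepts $w$ in $k$ steps then $\vy_r\in\sF$ for all $r\ge k$.

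The main obstacle will be co-designing the positional encoding $\penc$ together with the scoring function so that $\hardmax$ implements both an exact equality check on head positions and a recency tiebreaker simultaneously, and so that the ``never visited'' case can be detected cleanly in order to trigger the fallback on the encoder-retrieved input symbol. This is where the freedom to pick any Turing-computable $\penc$ and to use multiplicative attention combined with a non-linear function, as announced in Section~\ref{sec:transformer}, becomes crucial: a suitable $\penc(i)$ lets linear scoring extract both $|i-c_t|$ and a monotone function of $-t$, which the non-linearity can combine into a single score maximised uniquely at the desired step, and the same mechanism used with a sentinel coordinate lets the decoder recognise when no matching prior step exists.
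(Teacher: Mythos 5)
Your proposal follows essentially the same route as the paper's proof: it directly simulates $M$ by turning the encoder into a position-indexed, read-only memory of the input and by having each decoder step use hard self-attention (with a custom scoring function and tailored positional encodings) to locate the most recent write to the current head cell, external attention to recover the original input symbol, and a feed-forward block to reconcile the cases and apply $\delta$. The one substantive deviation is that you carry the cumulative head position $c_t$ inside $\vy_t$, whereas the paper stores only $(q^{(i)},s^{(i)},m^{(i-1)})$ in $\vy_i$ and recomputes the whole head-position trajectory at every decoding step via a uniform-attention averaging gadget (Lemma~\ref{lem:ci}); your variant is workable and would dispense with that gadget, but you still have to carry out the delicate argmin scoring argument of Lemma~\ref{lem:vli} with unscaled positions (which you only gesture at as the ``main obstacle''), and your case analysis omits the blank-cell situation where $c_t$ lies beyond the input and has never been visited, which the paper resolves via the helper quantity $\beta^{(i)}$ and Lemma~\ref{lem:yr+1}.
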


\begin{figure}
\begin{center}
\resizebox{.9\linewidth}{!}{\input{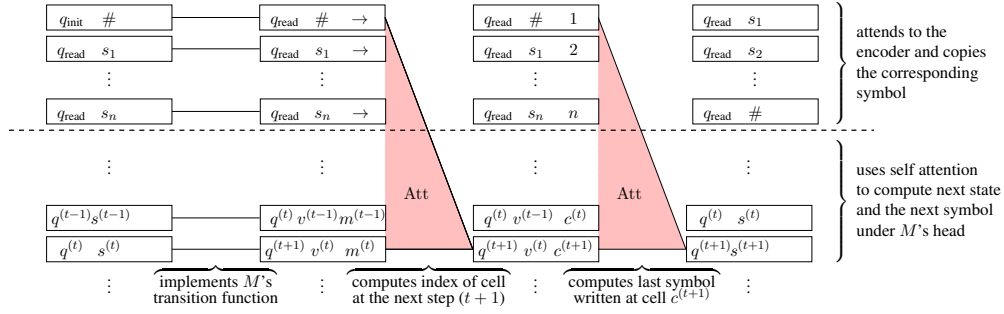}\hspace*{50pt}}
\caption{High-level structure of the decoder part of $\Trans_M$.}
\label{fig:TC-decoder}
\vspace*{-15pt}
\end{center}
\end{figure}

\begin{proof}[Proof Sketch]
We show that for every Turing machine $M=(Q,\Sigma,\delta,q_{\text{init}},F)$
there exists a transformer that simulates the complete execution of $M$.
We represent a string $w=s_1s_2\cdots s_n\in\Sigma^*$ as a sequence $\mX$ of
one-hot vectors with their corresponding positional encodings. 
Denote by $q^{(t)}\in Q$ the state of $M$ at time $t$ when processing $w$,
and $s^{(t)}\in \Sigma$ the symbol under $M$'s head at time $t$.
Similarly, $v^{(t)}\in\Sigma$ is the symbol written by $M$ and $m^{(t)}\in\{\leftarrow,\to\}$ 
the head direction.
We next describe how to construct a transformer $\Trans_M$ 
that with input $\mX$ produces a sequence $\vy_0,\vy_1,\vy_2,\ldots$
such that $\vy_i$ contains information about $q^{(i)}$ and $s^{(i)}$ (encoded
as one-hot vectors).

The construction and proof goes by induction.
Assume the decoder receives $\vy_0,\ldots,\vy_t$ such that 
$\vy_i$ contains $q^{(i)}$ and $s^{(i)}$.  
To construct $\vy_{t+1}$, in the first layer we just implement $M$'s transition function $\delta$; 
note that $\delta(q^{(i)},s^{(i)})=(q^{(i+1)},v^{(i)},m^{(i)})$ thus, we use $(q^{(i)},s^{(i)})$ to compute
$(q^{(i+1)},v^{(i)},m^{(i)})$ for every $i$ and store them in the sequence $\vz_0^1,\ldots,\vz_t^1$.
This computation can be done with a two-layer feed-forward network.
For the next layer, lets denote by $c^{(i)}$ the index of the cell that $M$ is pointing to at time $i$.
It can be proved that given $\vz_0^1,\ldots,\vz_t^1$ 
one can compute (a representation of) $c^{(i)}$ and $c^{(i+1)}$ for every $i\leq t$ 
with a self-attention layer, and store them in $\vz_0^2,\ldots,\vz_t^2$.
In particular, $\vz_t^2$ contains $c^{(t+1)}$ which is 
the index to which $M$ is going to be pointing to in the next time step.
By using the residual connections we also store $q^{(i+1)}$ and $v^{(i)}$ in $\vz_i^2$.
The final piece of our construction is to compute the symbol that the tape holds at index $c^{(t+1)}$,
that  is, the symbol under $M$'s head at time $t+1$.
For this we use the following observation: the symbol at index $c^{(t+1)}$ in time $t+1$
coincides with {the last symbol written by $M$ at index $c^{(t+1)}$}.
Thus, we need to find the maximum value $i^\star\leq t$ such that $c^{(i^\star)}=c^{(t+1)}$ and then copy $v^{(i^\star)}$
which is the symbol that was written by $M$ at time step $i^\star$.
This last computation can also be done with a self-attention layer.
Thus,  we attend directly to position $i^\star$ (hard attention plus positional encodings) and copy $v^{(i^\star)}$ which is exactly $s^{(t+1)}$.
We finally copy $q^{(t+1)}$ and $s^{(t+1)}$ into the output to construct $\vy_{t+1}$.
Figure~\ref{fig:TC-decoder} shows a high-level diagram of the decoder computation.

There are several other details in the construction, 
in particular, at the beginning of the computation (first
$n$ steps), the decoder needs to attend to the encoder and copy the input symbols so they can later be processed as described above.
Another detail is when $M$ reaches a cell that has not been visited before, then the symbol under the head has to be set as $\#$ (the blank symbol).
We show that all these decisions can be implemented with feed-forward networks plus attention. 
The complete construction uses one encoder layer, three decoder layers and vectors of dimension $d=2|Q|+4|\Sigma|+11$
to store one-hot representations of states, symbols and some additional working space.
All details can be found in the appendix. \end{proof}

\subsection{Differences with~\cite{Transformer}'s framework}\label{sec:diff}
Although the general architecture that we presented closely follows that of~\cite{Transformer},
some choices for functions and parameters in our positive results are different to the usual choices in practice.
For instance, we use hard attention which allow us to attend directly to specific
positions.
In contrast, \cite{Transformer} use $\softmax$ to attend,
plus $\sin$-$\cos$ functions as positional encodings.
The $\softmax$, $\sin$ and $\cos$ are not rational functions, and thus, are forbidden in our formalization.
An interesting line for future work is to consider arbitrary functions but with additional restrictions,
such as finite precision as done by~\cite{FinitePrecision}.
Another difference is that for the function $O(\cdot)$ in Equation~(\ref{eq:dec-ff}) our proof uses a feed-forward
network with various layers, while in ~\cite{Transformer} only two layers are used.

\paragraph{The need of arbitrary precision} 
Our Turing-complete proof relies on having arbitrary precision
for internal representations, in particular, for storing and manipulating positional encodings. 
Although having arbitrary precision is a standard assumption when studying the expressive power of neural networks (\cite{UAT,Siegelmann95})
practical implementations rely on fixed precision hardware. 
If fixed precision is used, then positional encodings can be seen as functions of the form
${\penc}:\sN\to A$ where $A$ is a finite subset of $\sQ^d$.
Thus, the embedding function $f_{\penc}$ can be seen as a 
regular embedding function $f':\Sigma'\to \sQ^d$ where $\Sigma'=\Sigma\times A$.
Thus, whenever fixed precision is used, the net effect of having positional encodings 
is to just increase the size of the input alphabet.
Then from Proposition~\ref{prop:invariance}
we obtain that the Transformer with positional encodings and fixed precision is not Turing complete.
Although no longer Turing complete, one can still study the computational power of fixed-precision Transformers.
We left this as future work.

\section{Neural GPUs}\label{sec:NeuralGPU}

The Neural GPU~\citep{NeuralGPU} is an architecture that mixes convolutions and gated recurrences over tridimensional tensors.
It is parameterized by three functions $U(\cdot)$ (update function), 
$R(\cdot)$ (reset function), and $F(\cdot)$.
Given a tensor $\tS\in \sQ^{h\times w\times d}$ and a value $r\in \sN$, 
it produces a sequence $\tS^1,\tS^2,\ldots,\tS^r$ 
given by the following recursive definition (with $\tS^0=\tS$). 
\begin{equation*}
\tU^t = U(\tS^{t-1}), \;\;\;\;\;\;\;\; \tR^t = R(\tS^{t-1}), \;\;\;\;\;\;\;\; \tS^{t} = \tU^t\odot \tS^{t-1}\, +\, (\tens{1} - \tU)\odot F(\tR^t\odot\tS^{t-1}).
\end{equation*}
where $\odot$ denotes the element-wise product, and $\tens{1}$ is a tensor with only $1$'s.
Neural GPUs force functions $U(\cdot)$ and $R(\cdot)$ to produce a tensor of the same shape as its input with all values in $[0,1]$.
Thus, a Neural GPU resembles a gated recurrent unit~\citep{GRU}, with $\tU$ working as the update gate and $\tR$ as the reset gate.
Functions $U(\cdot)$, $R(\cdot)$, and $F(\cdot)$ are defined as a 
convolution of its input with a \emph{4-dimensional kernel bank} with shape $(k_H,k_W,d,d)$ plus
a bias tensor, followed by a point-wise transformation
\begin{equation}
f(\tK*\tS + \tB) \label{eq:ngpu-update}
\end{equation}
with different kernels and biases for $U(\cdot)$, $R(\cdot)$, and $F(\cdot)$. 

To have an intuition on how the convolution $\tK*\tS$ works, it is illustrative to think of $\tS$ as an 
$(h\times w)$-grid of (row) vectors and $\tK$ as a $(k_H\times k_W)$-grid of $(d\times d)$ matrices. 
More specifically, let $\vs_{ij}=\tS_{i,j,:}$, and $\mK_{ij}=\tK_{i,j,:,:}$, 
then $\tK*\tS$ is a regular two-dimensional convolution in which scalar multiplication has been replaced by vector-matrix multiplication
as in the following expression
\begin{equation}\label{eq:conv}
(\tK*\tS)_{i,j,:} \ = \ \sum_{u}\sum_{v} \, \vs_{i+\Delta_1(u),j+\Delta_2(v)}\, \mK_{uv},
\end{equation}
where $\Delta_1(u) = u-\lfloor{k_H/2}\rfloor -1$ and $\Delta_2(v)=v-\lfloor{k_W/2}\rfloor -1$. 
This intuition makes evident the similarity between Neural GPUs and cellular automata: 
$\tS$ is a grid of cells, and in every iteration each cell is updated considering the values of its neighbors according to a fixed rule
given by $\tK$~\citep{NeuralGPU}.
As customary, we assume zero-padding
when convolving outside $\tS$.


\subsection{The computational power of Neural GPUs}

To study the computational power of Neural GPUs, we cast them as a standard 
seq-to-seq architecture. Given an input sequence, we put every vector
in the first column of the tensor $\tS$.
We also need to pick a special cell of $\tS$ as the \emph{output cell}
from which we read the output vector in every iteration. We pick the 
last cell of the first column of $\tS$.
Formally, given a sequence 
$\mX=(\vx_1,\ldots,\vx_n)$ with $\vx_i\in\sQ^d$, and a fixed value $w\in \sN$, we construct the tensor $\tS\in\sQ^{n\times w\times d}$
by leting $\tS_{i,1,:}=\vx_i$ and $\tS_{i,j,:}=\vzero$ for $j>1$.
The output of the Neural GPU, denoted by $\NGPU(\mX,r)$, is the sequence of vectors $\mY=(\vy_1,\vy_2,\ldots,\vy_r)$ such that
$\vy_t=\tS^t_{n,1,:}$. Given this definition, we can naturally 
view the Neural GPUs as language recognizers 
(as formalized in Section~\ref{sec:lang-rec}).

Since the bias tensor $\tB$ in Equation~(\ref{eq:ngpu-update}) is of 
the same size than 
$\tS$, the number of parameters in a Neural GPU grows with the size of the input. 
Thus, a Neural GPU cannot be considered as a fixed architecture. 
To tackle this issue we introduce the notion of \emph{uniform Neural GPU}, 
as one in which for every bias $\tB$ there exists a matrix $\mB\in\sQ^{w\times d}$ 
such that $\tB_{i,:,:}=\mB$ for each $i$. 
Thus, uniform Neural GPUs can be finitely specified (as 
they have a constant number of parameters, not depending on the length of the input). 
We now establish the Turing completeness of this model.  

\begin{theorem}\label{ngpu-turing-complete}
The class of uniform Neural GPUs is Turing complete.
\end{theorem}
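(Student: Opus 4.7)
The plan is to leverage Theorem~\ref{teo:rnn}, which states that the restricted class of encoder--decoder RNNs---those using only the matrices $\mW,\mV,\mU$, the piecewise-linear sigmoid $\sigma$, and the identity output---is already Turing complete. It then suffices to show that every such RNN can be simulated by a uniform Neural GPU. My strategy is to fix an arbitrary RNN of this restricted form and construct a uniform \NGPU{} (with a single set of kernels and replicated biases) whose iterations reproduce, at a designated output cell, the sequence of decoder outputs of the RNN.

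For the construction I would use a tensor of constant width $w$ and depth $d'$ slightly larger than $d$, reserving extra coordinates in every vector for a hidden-state slot, a ``valid'' flag, and an ``exists'' bit marking real input cells. The input vectors $\vx_i$ are embedded into the first column of $\tS$, and the kernels all have shape $3\times 1$ so that the update at cell $(i,1,:)$ can read its vertical neighbors $(i-1,1,:)$ and $(i+1,1,:)$. The function $F$ is chosen so that the hidden-state slot of cell $(i,1,:)$ is replaced by $\sigma(\vx_i\mW+\vh_{i-1}\mV)$, where $\vh_{i-1}$ is read from the cell above, while the update gate $\tU$ preserves the $\vx_i$ component and the auxiliary bits. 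The crucial observation is that this update is idempotent once the correct encoder states are in place: if cell $i-1$ already stores $\vh_{i-1}$, applying the rule at cell $i$ produces $\vh_i$, and further applications leave it unchanged. By induction on the iteration count, after $n$ iterations the first column contains the full sequence $\vh_1,\ldots,\vh_n$.

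The remaining ingredient is the switch to decoder mode at the output cell $(n,1,:)$. Because the \NGPU{} zero-pads outside the tensor, a single convolution can detect that cell $n$ has no real neighbor below (by reading the ``exists'' bit of the lower neighbor) and thus identify itself as the last cell. In parallel, a ``valid'' flag is propagated one cell per iteration down the first column so that cell $n$'s valid bit turns on exactly when its hidden state has converged to $\vh_n$. When both the last-cell and valid indicators fire, the update at cell $n$ is gated---via the standard $\sigma$-based implementation of conditionals---to switch from the encoder recurrence to the decoder recurrence $\vg\mapsto\sigma(\vg\mU)$, producing $\vg_1,\vg_2,\ldots$ at $(n,1,:)$ in successive iterations. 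Reading the output as $\vy_t=\tS^t_{n,1,:}$ then recovers the RNN decoder outputs, shifted by the $n$-step encoding phase; an additional ``ready'' bit in the output vectors ensures that the intermediate vectors produced during encoding cannot be mistaken for accepting vectors, so the language-recognition condition of Definition~\ref{def:recognizer} transfers directly.

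The main obstacle will be realizing all of this---the encoder recurrence, the propagating valid flag, the end-of-input detection, and the encoder-to-decoder mode switch---as a single uniform \NGPU{} update of the prescribed form $f(\tK*\tS+\tB)$, with a pointwise $\sigma$ nonlinearity and gates $U,R$ valued in $[0,1]$. This is handled by the Siegelmann-style encoding of conditional statements as fixed $\sigma$-circuits applied coordinate-wise, which composes cleanly with the convolutional structure. Uniformity is automatic since the construction uses only translation-invariant kernels together with a single bias matrix $\mB$ replicated across all rows; no position-dependent parameter is required.
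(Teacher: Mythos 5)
Your top-level plan matches the paper's: invoke Theorem~\ref{teo:rnn} and exhibit a uniform Neural GPU that simulates the restricted encoder--decoder RNN $\vh_i=\sigma(\vx_i\mW+\vh_{i-1}\mV)$, $\vg_t=\sigma(\vg_{t-1}\mU)$. The idempotency observation for the encoder wave is also sound. But the mode switch at cell~$n$ is where the argument breaks. You keep a \emph{single} hidden-state slot per cell and ask the update at cell~$n$ to change from $\sigma(\vx_n\mW + H_{n-1}\mV)$ to $\sigma(H_n\mU)$ once the ``last-cell'' and ``valid'' bits fire, appealing to ``$\sigma$-based conditionals.'' In a Neural GPU, $F$ is a \emph{fixed} convolution followed by a \emph{single} pointwise $\sigma$; the gates $\tU,\tR$ can only interpolate between the old value and $F$'s output, or zero out components of $\tS^{t-1}$ \emph{before} the convolution. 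None of these can conditionally redirect which linear map ($\mW,\mV$ vs.\ $\mU$) is applied to a slot. The only way to realize the decoder map through $\tR$ is to zero the $\vx_n$ and $H_{n-1}$ contributions when in decoder mode --- but $H_{n-1}$ sits in cell $n-1$, and the reset gate multiplying it is $\tR^t_{n-1,1,:}$, computed from cell $n-1$'s neighborhood, not from cell $n$'s mode. So cell $n-1$ would have to cooperate, which needs a ghost/communication slot plus a way for cell $n-1$ to learn it is penultimate (an extra propagation step), none of which your sketch accounts for. As written, the switch is not implementable.

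The paper sidesteps all of this by \emph{not} switching modes and \emph{not} detecting the last cell. It uses $3d+3$ coordinates with three separate $d$-blocks: an encoder block $E$, a one-step ``communication'' block $C$, and a decoder block $D$. Every cell~$i$ runs the encoder wave in $E$ (getting $\vh_i$ at time $i$), copies $\vh_i$ into $C$ for exactly one step, and thereafter iterates $D\mapsto\sigma(D\mU)$ seeded from that transient $C$; the reset gate crucially zeroes $E$ but \emph{not} $C$ at the handoff step, so $D$ is initialized from $\vh_i$ and then free-runs. All cells behave identically, so no last-cell detection (and hence no $3\times 1$ kernel --- $2\times 1$ suffices) and no mode bit is required; the output is simply read from cell $n$, whose $D$ block holds $\vg_t$ at time $n+t$. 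To repair your proof along your own lines you would need at least two $d$-dimensional slots (encoder and decoder), a one-step transfer slot playing the role of the paper's $C$, and explicit handling of which cell's reset gate zeroes what --- at which point the last-cell detection and the mode bit are doing no work and can be dropped.
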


\begin{proof}[Proof sketch]
The proof is based on simulating a seq-to-seq RNN; thus, completeness follows from Theorem~\ref{teo:rnn}.
Consider an RNN encoder-decoder language recognizer,
such that $N$ is of dimension $d$ and its encoder and decoder
are defined by the equations $\vh_{i}=\sigma(\vx_i\mW+\vh_{i-1}\mV)$ and
$\vg_{t}=\sigma(\vg_{t-1}\mU)$, respectively, where $\vg_0=\vh_n$ and $n$ is the length of the input. 
%
We use a Neural GPU with input tensor $\tS\in \sQ^{n\times 1\times 3d+3}$.
Let $\tE_{i} = \tS_{i,1,1:d}$ and $\tD_{i} = \tS_{i,1,d+1:2d}$. The idea is to use $\tE$ for the encoder
and $\tD$ for the decoder. 
We use kernel banks of shape $(2,1,3d+3,3d+3)$ with uniform bias tensors to simulate the following computation.
In every step $t$, we first compute the value of $\sigma(\tE_{t}\mW  + \tE_{t-1}\mV)$ and store it in $\tE_t$,  
%
and then reset $\tE_{t-1}$ to zero.
Similarly, in  step $t$ we update the vector in position $\tD_{t-1}$ 
storing in it the value $\sigma(\tD_{t-1}\mU + \tE_{t-1}\mU)$
(for the value of $\tE_{t-1}$ before the reset).
We use the gating mechanism to ensure a sequential update of the cells such that at time $t$ we update only positions
$\tE_{i}$ and $\tD_{j}$ for $i\leq t$ and $j\leq t-1$. 
Thus the updates on the $\tD$ are always one iteration behind the update of $\tE$.
Since the vectors in $\tD$ are never reset to zero, they keep being updated which allows us to simulate
an arbitrary long computation.
In particular we prove that at iteration $t$ it holds that $\tE_{t}=\vh_t$, and
at iteration $n+t$ it holds that $\tD_{n}=\vg_t$.
We require $3d+3$ components, as we need to implement several gadgets for properly using the update and reset gates. 
In particular, we need to store the value of $\tE_{t-1}$ before we reset it.
The detailed construction and the correctness proof can be found in the appendix.
\end{proof}

The proof above makes use of kernels of shape 
$(2,1,d,d)$ to obtain
 Turing completeness. This is, in a sense, optimal, 
  as one can easily prove that Neural GPUs with kernels of shape $(1,1,d,d)$ are not Turing complete, regardless of 
the size of $d$. In fact, for kernels of this shape the value of 
a cell of $\tS$ at time $t$ depends only on the value of the same cell in time $t-1$. 

\paragraph{Zero padding vs circular convolution} 
The proof of Theorem \ref{ngpu-turing-complete} requires the application of {\em zero padding} 
 in convolution. This allows us to clearly differentiate internal cells from 
cells corresponding to the endpoints of the input sequence. 
Interestingly, Turing-completeness is lost if we replace zero padding with 
{\em circular convolution}.
Formally, given $\tS\in\sQ^{h\times w\times d}$, a circular convolution is obtained by  
defining $\tS_{h+n,:,:}=\tS_{n,:,:}$ for $n\in \sZ$.
One can prove that uniform Neural GPUs with circular convolutions cannot differentiate among periodic sequences of different length; in particular, they cannot check if a periodic input sequence is of even or odd length. 
This yields the following: 

\begin{proposition}\label{prop:ngpu-not-tc}
Uniform Neural GPUs with circular convolutions are not Turing complete.
\end{proposition}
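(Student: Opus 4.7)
The plan is to formalize the hint preceding the statement by exhibiting a Turing-recognizable language that no uniform Neural GPU with circular convolutions can accept. I take $L = \{a^n \mid n \geq 2 \text{ and } n \text{ is even}\}$, a regular language obviously decidable by a Turing machine. Showing $L \notin \mathcal{L}_\mathcal{N}$, where $\mathcal{N}$ is the class of uniform Neural GPUs with circular convolutions, thus proves the proposition.

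The main technical step is an invariance lemma. Fix such a Neural GPU together with an embedding $f\colon\{a\}\to\sQ^d$, and consider the input $\mX = (f(a),\ldots,f(a))$ of length $n$. The initial tensor $\tS^0\in\sQ^{n\times w\times d}$ satisfies $\tS^0_{i,1,:}=f(a)$ and $\tS^0_{i,j,:}=\vzero$ for $j>1$. I claim, by induction on $t$, that for every $t$ there exist vectors $\vs^t_1,\ldots,\vs^t_w\in\sQ^d$ such that $\tS^t_{i,j,:} = \vs^t_j$ for all $i\in\{1,\ldots,n\}$, and moreover each $\vs^t_j$ is the same vector regardless of the input length $n$. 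The base case is immediate from the setup. For the inductive step, using Equation~(\ref{eq:conv}) with circular convolution in the first coordinate, each summand of $(\tK * \tS^{t-1})_{i,j,:}$ has the form $\tS^{t-1}_{(i+\Delta_1(u))\bmod n,\, j+\Delta_2(v),\, :}\, \mK_{uv}$, which by the inductive hypothesis equals $\vs^{t-1}_{j+\Delta_2(v)}\, \mK_{uv}$ (with the zero-padding convention along the width axis). This expression depends neither on $i$ nor on $n$. Adding the uniform bias (with $\tB_{i,:,:} = \mB$ for every $i$) and applying the pointwise nonlinearity preserves both properties, so $U(\tS^{t-1})$, $R(\tS^{t-1})$, and $F(\tR^t \odot \tS^{t-1})$ all share the row-uniform, $n$-independent form; since $\tS^t$ is assembled from them by element-wise products and sums, the invariant propagates.

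It then follows that $\vy_t = \tS^t_{n,1,:} = \vs^t_1$ depends only on $t$, never on the length $n$. Hence for any Neural GPU language recognizer $A=(\{a\},f,N,\vs,\sF)$ in this class, either some $r\in\sN$ satisfies $\vs^r_1\in\sF$ and then $A$ accepts every string in $\{a\}^+$, or no such $r$ exists and $A$ accepts no string at all. Neither outcome equals $L$, proving the proposition. The main obstacle, such as it is, is the careful bookkeeping of the simultaneous independence in $i$ and in $n$ through the gating recurrence; the crux of the argument is the observation that the wrap-around $(i+\Delta_1(u))\bmod n$ in the circular convolution is harmless precisely because the summand values do not depend on the index being wrapped.
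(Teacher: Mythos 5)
Your argument is correct, and it arrives at the conclusion by a route that is slightly different from---and, in the unary setting, cleaner than---the paper's. Both proofs rest on the same underlying phenomenon (circular convolution plus a uniform bias cannot break periodicity along the first axis), but you specialize to the extreme case of period $1$: with a one-letter alphabet every input tensor has all rows equal, and your induction establishes simultaneously (i) row-uniformity of $\tS^t$ and (ii) independence of the row values from the input length $n$. This second property is the non-trivial part; it is what lets you conclude that the output cell $\tS^t_{n,1,:}$ is literally the same vector for all $n$, so the recognizer either accepts all of $a^*$ or none of it.

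The paper proves a more general statement and then specializes. Its first lemma shows that if $\tS$ has period $p$ (with $p$ dividing the height), the Neural GPU update preserves period $p$. Its second observation shows that two periodic tensors of different heights $kp$ and $k'p$ that agree on their first $p$ rows continue to agree on those rows at every time step. It then chooses a word $u$ of odd length $p$, compares $uu$ (even length $2p$) against $uuu$ (odd length $3p$), and uses periodicity to see that the output cells at positions $2p$ and $3p$ respectively are always equal. Your proof is exactly the $p=1$ instance of this scheme, but you avoid the two-lemma decomposition: the ``agreement'' lemma collapses into the $n$-independence you track directly in the single induction. The trade-off is that your argument is tied to a unary alphabet, whereas the paper's periodicity lemma is a more reusable structural fact; both are fully adequate to prove the proposition.

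One presentational remark: you should make explicit that $w$, the width of the tensor, is a fixed architectural parameter independent of $n$ (the paper stipulates this when defining the seq-to-seq cast of the Neural GPU), since your $n$-independence claim for $\vs^t_j$ relies on the width axis and the kernel/bias shapes being fixed. You do use this tacitly; stating it closes the only small gap in the write-up.
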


Related to this last result is the empirical observation by~\cite{NeuralGPU2} that Neural GPUs that learn to solve 
{\em hard} problems, e.g., binary multiplication, and which generalize to most of the inputs,  
struggle with highly symmetric (and nearly periodic) inputs. Actually,~\cite{NeuralGPU2} exhibit examples of the form 
$11111111\times11111111$ failing for all inputs with eight or more $1$s.
We leave as future work to explore the implications of 
our theoretical results on this practical observation.

\paragraph{Bidimensional tensors and piecewise linear activations} 
\cite{NeuralGPUImprove} simplified Neural GPUs and proved that, 
by considering piecewise linear activations and bidimensional input tensors  
instead of the original smooth activations and tridimensional tensors used by~\cite{NeuralGPU}, 
it is possible to achieve substantially better results in terms of training time and generalization.
Our Turing completeness proof also relies on a bidimensional tensor and uses piecewise linear activations,
thus providing theoretical evidence that these simplifications 
actually retain the full expressiveness of Neural GPUs while simplifying its practical applicability.

\newpage
\section{Final Remarks and Future Work}

We have presented an analysis of the Turing completeness 
of two popular neural architectures for sequence-processing tasks; namely,
the Transformer, based on attention, and the Neural GPU, based on recurrent convolutions.
We plan to further refine this analysis 
 in the future. 
For example, our proof of Turing completeness for the Transformer 
requires the presence of {\em residual connections}, i.e., 
the $+\vx_i$, $+\va_i$, $+\vy_i$, and $+\vp_i$ summands in 
Equations~(\ref{eq:primera-residual}-\ref{eq:ultima-residual}), while  
our proof for Neural GPUs heavily relies on the gating mechanism. 
We will study whether these features are 
actually essential to obtain completeness.

We presented general abstract versions of both architectures in order to prove our theoretical results.
Although we closely follow their original definitions, some choices for functions and parameters in our positive results 
are different to the usual choices in practice, most notably, the use of hard attention for the case of the
Transformer, and the piecewise linear activation functions for both architectures. 
As we have mentioned, \cite{NeuralGPUImprove} showed that for Neural GPUs piecewise linear activations actually
help in practice, but for the case of the Transformer architecture more experimentation is needed to have a conclusive response.
This is part of our future work.

Although our results are mostly of theoretical interest, they might lead to observations of practical interest. 
For example, \cite{RNNWeigthed} have established the undecidability of several practical 
problems related to probabilistic language modeling with RNNs. This means that such problems can only be approached in practice via heuristics
solutions.
Many of the results in \cite{RNNWeigthed} are, in fact, a 
consequence of the Turing completeness of RNNs as established by \cite{Siegelmann95}.
We plan to study to what extent our analogous undecidability 
results for Transformers and Neural GPUs imply undecidability 
for language modeling problems based on these architectures.

Finally, our results rely on being able to compute internal representations of arbitrary precision.
It would be interesting to perform a theoretical study of the main properties of both architectures
in a setting in which only finite precision is allowed, as have been recently carried out for RNNs~\citep{FinitePrecision}.
We also plan to tackle this problem in our future work.

\subsubsection*{Acknowledgements}

This work was supported by the Millennium Institute for Foundational Research on Data (IMFD).

\newpage
\bibliographystyle{iclr2019_conference}

\newpage
\appendix

\section{Proofs for Section~\ref{sec:lang-rec}}
\subsection{Proof of Theorem~\ref{teo:rnn}}
We first sketch the main idea of \citeauthor{Siegelmann95}'s proof. We refer the reader to the original paper for details. 
\citeauthor{Siegelmann92} show how to simulate
a two-stack machine $M$ (and subsequently, a Turing machine) with a single RNN $N$ with $\sigma$ as activation. 
They first construct a 
network $N_1$ that, with $\vzero$ as initial state ($\vh^{N_1}_0=\vzero$) and with a binary string $w\in \{0,1\}^*$
as input sequence, produces a representation of $w$ as a rational number and
stores it as one of its internal values. 
Their internal representation of strings encodes every $w$ as a rational number between $0$ and $1$.
In particular, they use base $4$ such that, for example, a string $w=100110$ is encoded as $(0.311331)_4$
that is, its encoding is
\[
3\times 4^{-1}+1\times 4^{-2} + 1\times 4^{-3} + 3\times 4^{-4} + 3\times 4^{-5} + 1\times 4^{-6}.
\]
This representation allows one to easily simulate stack operations as affine transformations plus $\sigmoid$ activations.
For instance, if $x_w$ is the value representing string $w=b_1b_2\cdots b_n$ seen as a stack, then the $\operatorname{top}(w)$
operation can be defined as simply $y=\sigmoid(4x_w-2)$, since $y=1$ if and only if $b_1=1$, and $y=0$ if and only if $b_1=0$.
Other stack operations can de similarly simulated.
Using this representation, they construct a second network $N_2$ that simulates the two-stacks machine 
by using one neuron value to simulate each stack.
The input $w$ for the simulated machine $M$ is assumed to be at an internal value given to $N_2$ as an initial state $(\vh_0^{N_2})$.
Thus, $N_2$ expects only zeros as input. Actually, to make $N_2$ work for $r$ steps, an input of the form $0^r$ should be provided.

Finally, they combine $N_1$ and $N_2$ to construct
a network $N$ which expects an input of the following form:
$(b_1,1,0)(b_2,1,0)\cdots(b_n,1,0)(0,0,1)(0,0,0)(0,0,0)\cdots(0,0,0)$.
The idea is that the first component contains the input string $w=b_1b_2\cdots b_n$, the second component
states when the input is \emph{active}, and the third component is $1$ only when the input
is inactive for the first time. Before the input vector $(0,0,1)$ the network $N_1$ is working. The input $(0,0,1)$ is used to 
simulate a change from $N_1$ to $N_2$, and the rest of input vectors $(0,0,0)$ are provided
to continue with $N_2$ for as many steps as needed.
The number neurons that this construction needs to simulate a machine $M$ with $m$ states, is $10m+30$.
\footnote{%
The idea presented above allows one to linearly simulate $M$, that is, each step of $M$ is simulated with a constant number
of steps of the corresponding RNN. \citeauthor{Siegelmann95} show that, with a refinement of the above encoding 
one can simulate $M$ in \emph{real-time}, that is, a single step of $M$ is simulated with a single step of 
the recurrent network. The $10m+30$ is the bound given by a simulation with slow-down of two.
See the original paper for details~\citep{Siegelmann95}.
}

It is clear that \citeauthor{Siegelmann95}'s proof resembles a 
modern encoder-decoder RNN architecture, where $N_1$ is the encoder
and $N_2$ is the decoder, thus it is straightforward to use the same construction to provide
an RNN encoder-decoder $N'$ and a language recognizer $A$ that uses $N'$ and simulates 
the two-stacks machine $M$.
There are some details that is important to notice.
Assume that $N'$ is given by the formulas in Equations~(\ref{eq:rnn-encoder}) and~(\ref{eq:rnn-decoder}).
First, since $N_2$ in the above construction expects no input, we can safely assume 
that $\mR$ in Equation~(\ref{eq:rnn-decoder}) is the null matrix.
Moreover, since $A$ defines its own embedding function, we can ensure that every vector
that we provide for the encoder part of $N'$ has a $1$ in a fixed component, and thus we do not need the bias $\vb_1$ 
in Equation~(\ref{eq:rnn-encoder}) since it can be simulated with one row of matrix $\mV$.
We can do a similar construction for the bias $\vb_2$ (Equation~(\ref{eq:rnn-decoder})).
Finally, \citeauthor{Siegelmann95} show that its construction can be modified such that a particular neuron
of $N_2$, say $n^\star$, is always $0$ except for the first time an accepting state of $M$ is reached, 
in which case $n^\star=1$.
Thus, one can consider $O(\cdot)$ (Equation~(\ref{eq:rnn-decoder})) as the identity function and 
add to $A$ the stopping criterion that just checks if $n^\star$ is $1$.
This completes the proof sketch of Theorem~\ref{teo:rnn}.

\newpage
\section{Proofs for Section~\ref{sec:transformer}}
\subsection{Proof of Proposition~\ref{prop:invariance}}
We extend the definition of the function $\PropInv$ to sequences of vectors.
Given a sequence $\mX=(\vx_1,\ldots,\vx_n)$ we use $\values(\mX)$ to denote the set of all vectors occurring in $\mX$.
Similarly as for strings, we use $\prop(\vv,\mX)$ as the number of times that $\vv$ occurs in $\mX$ divided by the length of $\mX$.
Now we are ready to extend $\PropInv$ with the following definition:
\[
\PropInv(\mX)=\{\mX'\mid \values(\mX')=\values(\mX)\text{ and }\prop(\vv,{\mX})=\prop(\vv,{\mX'})\text{ for all }\vv\in \values(\mX)\}
\]
Notice that for every embedding function $f:\Sigma\to \sQ^d$ and string $w\in \Sigma^*$, we have that
if $u\in \PropInv(w)$ then $f(u)\in \PropInv(f(w))$.
Thus in order to prove that $\Trans(f(w),\vs,r)=\Trans(f(u),\vs,r)$ for every $u\in \PropInv(w)$, it is enough
to prove that 
\begin{equation}
\Trans(\mX,\vs,r)=\Trans(\mX',\vs,r)\text{ for every }\mX'\in\PropInv(\mX) \label{eq:new-prop}
\end{equation}

To further simplify the exposition of the proof we introduce another notation. 
We denote by $p_{\vv}^{\mX}$ as the number of times that vector $\vv$ occurs in $\mX$.
Thus we have that $\mX'\in\PropInv(\mX)$ if and only if, there exists a value $\gamma \in \sQ^+$ such that
for every $\vv\in\values(\mX)$ it holds that $p_\vv^{\mX'}=\gamma p_\vv^{\mX}$.

We now have all the necessary to proceed with the proof of Proposition~\ref{prop:invariance}.
We will prove it by proving the property in~(\ref{eq:new-prop}).
Let $\mX=(\vx_1,\ldots,\vx_n)$ be an arbitrary sequence of vectors, and let $\mX'=(\vx'_1,\ldots,\vx'_{m})\in \PropInv(\mX)$.
Moreover, let $\mZ=(\vz_1,\ldots,\vz_n)=\Enc(\mX;\vtheta)$ and $\mZ'=(\vz'_1,\ldots,\vz'_m)=\Enc(\mX';\vtheta)$.
We first prove the following property:
\begin{equation}
\text{For every pair of indices }(i,j)\in\{1,\ldots,n\}\times\{1,\ldots,m\},\text{ if }\vx_i=\vx'_j\text{ then }\vz_i=\vz'_j.
\label{prop:first}
\end{equation}

Lets $(i,j)$ be a pair of indices such that $\vx_i=\vx_j'$.
From Equations~(\ref{eq:self_attention_enc}-\ref{eq:output_transformation_enc})
we have that $\vz_i=O(\va_i)+\va_i$ where $\va_i=\Att(Q(\vx_i),K(\mX),V(\mX))+\vx_i$.
Thus, since $\vx_i=\vx'_j$, in order to prove $\vz_i=\vz_j'$ it is enough to prove that $\Att(Q(\vx_i),K(\mX),V(\mX))=\Att(Q(\vx_j'),K(\mX'),V(\mX'))$.
By equations~(\ref{eq:scoring}-\ref{eq:attention}) and the restriction over the form of normalization functions we have that
\begin{equation*}
\Att(Q(\vx_i),K(\mX),V(\mX))=
\frac{1}{\alpha}\ {\sum_{\ell=1}^n f_\rho(\score(Q(\vx_i),K(\vx_\ell)))V(\vx_\ell)} \label{eq:paso-1}
\end{equation*}
where $\alpha=\sum_{\ell=1}^n f_\rho(\score(Q(\vx_\ell),K(\vx_\ell)))$. The above equation can be rewritten as
\[
\Att(Q(\vx_i),K(\mX),V(\mX)) = \frac{1}{\alpha}\ {\sum_{\vv\in \values(\mX)} p_{\vv}^{\mX}f_\rho(\score(Q(\vx_i),K(\vv)))V(\vv)}
\]
with $\alpha=\sum_{\vv\in \values(\mX)} p_{\vv}^{\mX}f_\rho(\score(Q(\vv),K(\vv)))$. By a similar reasoning we can write
\begin{equation*}
\Att(Q(\vx_j'),K(\mX'),V(\mX')) = \frac{1}{\beta}\ {\sum_{\vv\in \values(\mX')} p_{\vv}^{\mX'}f_\rho(\score(Q(\vx'_j),K(\vv)))V(\vv)}
\label{eq:paso-final}
\end{equation*}
with $\beta=\sum_{\vv\in \values(\mX')}  p_{\vv}^{\mX'}f_\rho(\score(Q(\vv),K(\vv)))$. Now, since $\mX'\in \PropInv(\mX)$ we know
that $\values(\mX)=\values(\mX')$ and there exists a $\gamma\in\sQ^+$ such that $p_{\vv}^{\mX'}=\gamma p_{\vv}^{\mX}$ for every $\vv\in \values(\mX)$.
Finally, from this last property, plus the fact that $\vx_i=\vx'_j$ we have
\begin{eqnarray*}
\Att(Q(\vx_j'),K(\mX'),V(\mX')) & = & \frac{1}{\gamma \alpha}\ {\sum_{\vv\in \values(\mX)} \gamma p_{\vv}^{\mX}f_\rho(\score(Q(\vx'_j),K(\vv)))V(\vv)} \\
& = & \frac{1}{\alpha}\ {\sum_{\vv\in \values(\mX)} p_{\vv}^{\mX}f_\rho(\score(Q(\vx_i),K(\vv)))V(\vv)} \\
& = & \Att(Q(\vx_i),K(\mX),V(\mX))
\end{eqnarray*}
Which completes the proof of Property~(\ref{prop:first}) above.

Consider now the complete encoder $\TEnc$.  Let $(\key,\val)=\TEnc(\mX)$ and $(\key',\val')=\TEnc(\mX')$,
and let $\vq$ be an arbitrary vector. We will prove now that $\Att(\vq,\key,\val)=\Att(\vq,\key',\val')$.
By following a similar reasoning as for proving Property~(\ref{prop:first}) (plus induction on the layers of $\TEnc$) 
we obtain that if $\vx_i=\vx'_j$ then $\vk_i=\vk'_j$ and $\vv_i=\vv'_j$, for every $i\in \{1,\ldots,n\}$ and $j\in \{1,\ldots,m\}$.
Thus, there exists a mapping $M_K:\values(\mX)\to \values(\key)$ such that $M_K(\vx_i)=\vk_i$ and $M_K(\vx'_j)=\vk'_j$
and similarly a mapping $M_V:\values(\mX)\to \values(\val)$ such that $M_V(\vx_i)=\vv_i$ and $M_V(\vx'_j)=\vv'_j$,
for every $i\in \{1,\ldots,n\}$ and $j\in \{1,\ldots,m\}$. Lets focus now on $\Att(\vq,\key,\val)$.
We have:
\begin{eqnarray*}
\Att(\vq,\key,\val) & = & \frac{1}{\alpha}\ {\sum_{i=1}^n f_\rho(\score(\vq,\vk_i))\vv_i} 
\end{eqnarray*}
with $\alpha=\sum_{i=1}^n f_\rho(\score(\vq,\vk_i)).$ Similarly as before, we can rewrite this as
\begin{eqnarray*}
\Att(\vq,\key,\val) & = & \frac{1}{\alpha}\ {\sum_{i=1}^n f_\rho(\score(\vq,M_K(\vx_i)))M_V(\vx_i)} \\
& = & \frac{1}{\alpha}\ {\sum_{\vv\in \values(\mX)} p_{\vv}^{\mX}f_\rho(\score(\vq,M_K(\vv)))M_V(\vv)}
\end{eqnarray*}
with $\alpha=\sum_{\vv\in \values(\mX)}p_{\vv}^{\mX} f_\rho(\score(\vq,M_K(\vv))).$
Similarly for $\Att(\vq,\key',\val')$ we have
\begin{eqnarray*}
\Att(\vq,\key',\val') & = & \frac{1}{\beta}\ {\sum_{j=1}^m f_\rho(\score(\vq,M_K(\vx'_j)))M_V(\vx'_j)} \\
& = & \frac{1}{\beta}\ {\sum_{\vv\in \values(\mX')} p_{\vv}^{\mX'}f_\rho(\score(\vq,M_K(\vv)))M_V(\vv)}
\end{eqnarray*}
And finally using that $\mX'\in \PropInv(\mX)$ we obtain
\begin{eqnarray*}
\Att(\vq,\key',\val') 
& = & \frac{1}{\beta}\ {\sum_{\vv\in \values(\mX')} p_{\vv}^{\mX'}f_\rho(\score(\vq,M_K(\vv)))M_V(\vv)} \\
& = & \frac{1}{\gamma \alpha}\ {\sum_{\vv\in \values(\mX)} \gamma p_{\vv}^{\mX}f_\rho(\score(\vq,M_K(\vv)))M_V(\vv)} \\
& = & \frac{1}{\alpha}\ {\sum_{\vv\in \values(\mX)} p_{\vv}^{\mX}f_\rho(\score(\vq,K(\vv))V(\vv)} \\
& = & \Att(\vq,\key,\val)
\end{eqnarray*}
which is what we wanted.

To complete the rest proof, consider $\Trans(\mX,\vy_0,r)$ which is defined by the recursion
\begin{eqnarray*}
\vy_{k+1} & = & \TDec(\TEnc(\mX), (\vy_0,\vy_1,\ldots,\vy_{k})) \\
\end{eqnarray*}
To prove that $\Trans(\mX,\vy_0,r)=\Trans(\mX',\vy_0,r)$ we use an inductive argument. 
We know that
\begin{eqnarray*}
\vy_1 & = & \TDec(\TEnc(\mX), (\vy_0)) \\
& = & \TDec((\mK,\mV), (\vy_0)).
\end{eqnarray*}
Now $\TDec$ only access $(\mK,\mV)$ via attentions of the form $\Att(\vq,\key,\val)$ and for the case of $\vy_1$ the vector $\vq$ can
only depend on $\vy_0$, thus, from $\Att(\vq,\key,\val)=\Att(\vq,\key',\val')$ we have that
\begin{eqnarray*}
\vy_1 
& = & \TDec((\mK,\mV), (\vy_0)) \\
& = & \TDec((\mK',\mV'), (\vy_0)) \\
& = & \TDec(\TEnc(\mX'), (\vy_0)). 
\end{eqnarray*}
The rest of the steps follow by a simple induction on $k$.

\subsection{Proof of Corollary~\ref{cor:not-regular}}
To obtain a contradiction, assume that there is a language recognizer $A$ that uses a Transformer network
and such that $L=L(A)$.
Now consider the strings $w_1=aabb$ and $w_2=aaabbb$.
Since $w_1\in \PropInv(w_2)$ by Proposition \ref{prop:invariance} we have that $w_1\in L(A)$ if and only if $w_2\in L(A)$
which is a contradiction since $w_1\in L$ but $w_2\notin L$.
This completes the proof of the corollary.

\subsection{Proof of Proposition~\ref{prop:more-than-regular}}
We construct a language recognizer $A=(\Sigma,f,\Trans,\vs,\sF)$ with $\Trans$ a very simple Transformer network
with dimension $d=2$ and using just one layer of encoder and one layer of decoder, such that $L(A)=\{w\in \{a,b\}^*\mid$ $w$ has strictly more symbols 
$a$ than symbols $b\}$.
As embedding function, we use $f(a) = [0,1]$ and $f(b) = [0,-1]$.

Assume that the output for the encoder part of the transformer is $\mX=(\vx_1,\ldots, \vx_n)$.
First we use an encoder layer that implements the identity function. 
This can be trivially done using null functions for the self attention 
and through the residual connections this encoder layer shall preserve the original 
$\vx_i$ values.
For the final $V(\cdot)$ and $K(\cdot)$ functions of the Transformer encoder (Equation~(\ref{eq:enc-rec})), 
we use $V(\vx) = \vx$ the identity function 
and $K(\vx) = [0,0]$, giving $\mV^\ve = \mX$ and $\mK^\ve=([0,0],[0,0],\ldots,[0,0])$.

For the decoder we use a similar approach. 
We consider the identity in the self attention plus the residual (which can be done by just
using the null functions for the self attention). 
Considering the external attention, that is the attention over $(\mK^\ve,\mV^\ve)$,
we let $\score$ and $\rho$ be arbitrary scoring and normalization functions.
And finally for the function $O(\cdot)$ (Equation~(\ref{eq:dec-ff})) we use a single layer neural network implementing 
the affine transformation $O([x,y]) = [y-x,-y]$ such that $O([x,y]) + [x,y] = [y,0]$.
The final function $F(\cdot)$ is just the identity function.

In order to complete the proof we introduce some notation. 
Lets denote by $\#_a(w)$ as the number of $a$'s in $w$, and similarly $\#_b(w)$ for the number of $b$'s in $w$.
Lets call $c_w$ as the value $\frac{\#_a(w)-\#_b(w)}{n}$.
We now prove that, for any string $w\in \{a,b\}^*$ if we consider $f(w)=\mX=(\vx_1,\ldots, \vx_n)$ 
as the input sequence for $\Trans$ and we use initial value $\vs = [0,0]$ for the decoder,
the complete network shall compute a sequence $\vy_1,\vy_2,\ldots,\vy_r$ such that:
\[
\vy_i = \begin{cases}
[0,0] & i = 0 \\
[c_w,0] & i > 0
\end{cases}
\]

We proceed by induction.
The base case trivially holds since $\vy_0=\vs=[0,0]$.
Assume now that we are at step $r$ and the input for the decoder is $(\vy_0,\vy_1,\ldots,\vy_{r})$.
We will show that $\vy_{r+1}=[c_w,0]$.
Since we consider the identity in the self attention (Equation~(\ref{eq:dec-self})),
we have that $\vp_i=\vy_i$ for every $i$ in $\{0,\ldots,i\}$.
Now considering the external attention, that is the attention over $(\mK^\ve,\mV^\ve)$,
Since all key vectors in $\mK^\ve$ are $[0,0]$, the external attention will produce the same score value for all positions.
That is, $\score(\vp_i,\vk_{j_1})=\score(\vp_i,\vk_{j_2})$ for every $j_1,j_2$.
Lets call this value $s^\star$. 
Thus we have that
\begin{eqnarray*}
\rho(\score(\vp_i,\vk_1),\ldots,\score(\vp_i,\vk_n)) & = & \rho(s^\star,s^\star,\ldots,s^\star) \\
& = & \left(\frac{1}{n},\frac{1}{n},\ldots,\frac{1}{n}\right).
\end{eqnarray*}
Then, since $\mV^\ve=\mX$ we have that
\begin{eqnarray*}
\Att(\vp_i,\mK^\ve,\mV^\ve) & = & \frac{1}{n}\sum_{\ell=1}^n\vx_\ell \\
& = & \frac{1}{n}\left[0,\#_a(w)-\#_b(w)\right]
\end{eqnarray*}
for every $i\in \{0,\ldots,r\}$.
The last equality holds since our embedding are $f(a)=[0,1]$ and $f(b)=[0,-1]$, and so 
every $a$ in $w$ sums one and every $b$ subtracts one. 
Thus, we have that
\begin{eqnarray*}
\Att(\vp_i,\mK^\ve,\mV^\ve) & = & [0,c_w].
\end{eqnarray*}
for every $i\in \{0,\ldots,r\}$.
In the next step, after the external attention plus the residual connection (Equation~(\ref{eq:dec-ext})) we have 
\begin{eqnarray*}
\va_i & = & \Att(\vp_i,\mK^\ve,\mV^\ve) + \vp_i \\
& = & \Att(\vp_i,\mK^\ve,\mV^\ve) + \vy_i \\
& = & [0,c_w] + [c_w,0] \\
& = & [c_w,c_w] \\
\end{eqnarray*}
Applying function $O(\cdot)$ plus the residual connection (Equation~(\ref{eq:dec-ff})) we have
\begin{eqnarray*}
\vz_i & = & O(\va_i) + \va_i \\
& = & O([c_w,c_w]) + [c_w,c_w] \\
& = & [c_w-c_w,-c_w] + [c_w,c_w] \\
& = & [c_w,0]
\end{eqnarray*}
Finally, $\vy_{r+1}=F(\vz_r)=\vz_r=[c_w,0]$ which is exactly what we wanted to prove.

To complete the proof, notice that $\#_a(w) > \#_b(w)$ if and only if $c_w > 0$.
If we define $\sF$ as $\sQ^+ \times \sQ$, the recognizer $A=(\Sigma,f,\Trans,\vs,\sF)$
will accept the string $w$ exactly when $c_w > 0$, that is, $w\in L(A)$ if and only if 
$\#_a(w) > \#_b(w)$. That is exactly the language $S$, and so the 
proof is complete.

\newpage
\subsection{Proof of Theorem~\ref{theo:TCTransformer}}
\newcommand{\sz}{0,\ldots,0}
\newcommand{\vzq}{\vzero_q}
\newcommand{\vzs}{\vzero_s}
\newcommand{\oh}[1]{\llbracket\ #1\ \rrbracket}


Let $M=(Q,\Sigma,\delta,q_{\text{init}},F)$ be a Turing machine with a infinite tape and assume 
that the special symbol $\#\in \Sigma$ is used to mark blank positions in the tape.
We make the following assumptions about how $M$ works when processing an input string:
\begin{itemize}
\item $M$ always moves its head either to the left or to the right (it never stays at the same cell).
\item $M$ begins at state $q_{\text{init}}$ pointing to the cell immediately to the left of the input string.
\item $M$ never makes a transition to the left of the initial position.
\item $Q$ has a special state $q_{\text{read}}$ used to read the complete input.
\item Initially (time $0$), $M$ makes a transition to state $q_{\text{read}}$ and move its head to the right.
\item While in state $q_{\text{read}}$ it moves to the right until symbol $\#$ is read.
\item There are no transitions going out from accepting states (states in $F$).
\end{itemize} 
It is easy to prove that every  general Turing machine is equivalent to one that satisfies the above assumptions.
We prove that one can construct a transformer network $\Trans_M$ that is able to simulate $M$ on every possible input string.


The construction is somehow involved and uses several helping values, sequences and intermediate results.
To make  the reading more easy we divide the construction and proof in three parts. 
We first give a high-level view of the strategy we use. 
Then we give some details on the architecture of the encoder and decoder needed to implement
our strategy, and finally we formally  prove that every part of   our architecture can be actually implemented.

\subsubsection{Overview of the construction and high-level strategy}

In the encoder part of $\Trans_M$ we receive as input the string $w=s_1s_2\ldots s_n$.
We first use an embedding function to represent every $s_i$ as a one-hot vector and add a positional encoding for
every index.
The encoder produces output $(\mK^\ve,\mV^\ve)$ where $\mK^\ve=(\vk^\ve_1,\ldots,\vk^\ve_n)$ and $\mV^\ve=(\vv^\ve_1,\ldots,\vv^\ve_n)$ 
are sequences of keys and values such that $\vv^\ve_i$ contains the information of $s_i$ and $\vk^\ve_i$ 
contains the information of the $i$-th positional encoding.
We later show that this allows us to attend to every specific position and copy every input symbol from the encoder to the decoder (Lemma~\ref{lem:enc-attention}).

In the decoder part of $\Trans_M$ we simulate a complete execution of $M$ over $w=s_1s_2\cdots s_n$.
For this we define the following sequences (for $i\geq 0$):
\begin{eqnarray*}
q^{(i)} & : & \text{state of $M$ at time $i$} \\
s^{(i)} & : & \text{symbol under the head of $M$ at time $i$} \\
v^{(i)} & : & \text{symbol written by $M$ at time $i$} \\
m^{(i)} & : & \text{head direction in the transition of $M$ at time $i$} 
\end{eqnarray*}
For the case of $m^{(i)}$ we assume that $-1$ represents a movement to the left and $1$ represents a movement to the right.
In our construction we show how to build a decoder that computes all the above values
for every time step $i$ using self attention plus attention over the encoder part.
Since the above values contain all the needed information to reconstruct the complete history of the computation,
we can effectively simulate $M$.

In particular our construction produces the sequence of output vectors 
$\vy_1,\vy_2,\ldots$ such that, for every $i$, the vector $\vy_i$ contains information about $q^{(i)}$ and $s^{(i)}$
encoded as one-hot vectors.
The construction and proof goes by induction. We begin with an initial vector $\vy_0$ that represents the state
of the computation before it has started, that is $q^{(0)}=q_{\text{init}}$ and $s^{(0)}=\#$.
For the induction step we assume that we have already computed $\vy_1,\ldots,\vy_r$ such that 
$\vy_i$ contains information about $q^{(i)}$ and $s^{(i)}$, and we show how with input $(\vy_0,\vy_1,\ldots,\vy_r)$ the decoder 
produces the next vector $\vy_{r+1}$ containing $q^{(r+1)}$
and $s^{(r+1)}$.

The overview of the construction is as follows.
First notice that the transition function $\delta$ relates the above values with the following equation:
\begin{equation}\label{eq:delta}
\delta(q^{(i)},s^{(i)})=(q^{(i+1)},v^{(i)},m^{(i)}).
\end{equation}
We prove that we can use a two-layer feed-forward network to mimic the transition function $\delta$ (Lemma~\ref{lem:M}).
Thus, given that the input vector $\vy_i$ contains $q^{(i)}$ and $s^{(i)}$, 
we can produce the values $q^{(i+1)}$, $v^{(i)}$ and $m^{(i)}$ (and store them as values in the decoder).
In  particular, since $\vy_r$ is in the input, we can produce $q^{(r+1)}$ which is part of what we need for $\vy_{r+1}$.
In order to complete the construction we also need to compute the value $s^{(r+1)}$,  
that is, we need to compute the symbol under the head of machine $M$
at the next time step (time $r+1$). 
We next describe at a high level, how this symbol can be computed with two additional decoder layers.

We first make some observations about $s^{(i)}$ that are fundamental in  our computation.
Assume that at time $i$ the head of $M$ is pointing to the cell at index $k$.
Then we have three possibilities:
\begin{enumerate}
\item If $i\leq n$, then $s^{(i)}=s_{i}$ since $M$ is still reading its input string.
\item If $i>n$ and $M$ has never written at index $k$, then $s^{(i)}=\#$, the blank symbol.
\item In other case, that is, if $i> n$ and time $i$ is not the first time that $M$ is pointing to index $k$, then 
$s^{(i)}$ is \emph{the last symbol written by $M$} at index $k$. 
\end{enumerate}

For the case (1) we can  produce $s^{(i)}$ by simply attending to position $i$ in the encoder part.
Thus, if $r+1\leq n$ to produce $s^{(r+1)}$ we can just attend to index $r+1$ in the encoder and copy this value to $\vy_{r+1}$.
For cases (2) and (3) the solution is a bit more complicated, but almost all the important work is to compute
what is the index that $M$ is going to be pointing to in time $r+1$.

To formalize this computation, lets denote by $c^{(i)}\in \sZ$ the following value:
\begin{eqnarray*}
c^{(i)} & : & \text{the index of the cell to which the head of $M$ is pointing to at time $i$} 
\end{eqnarray*}
Notice that value $c^{(i)}$ satisfies that $c^{(i)}=c^{(i-1)}+m^{(i-1)}$. If we unroll this equation and assuming that $c^{(0)}=0$
we obtain that
\[
c^{(i)}=m^{(0)}+m^{(1)}+\cdots + m^{(i-1)}.
\]
Then, at the step $i$ in the decoder we have all the necessary to compute value $c^{(i)}$ 
but also the necessary to compute $c^{(i+1)}$.
We actually show that the computation (of a representation) of $c^{(i)}$ and $c^{(i+1)}$ can be
done by using one layer of self attention (Lemma~\ref{lem:ci}).

We still need to define a final notion. With $c^{(i)}$ one can define the helping value $\ell(i)$ as follows:
\[
\ell(i)=\max\{j\mid j< i\text{ and }c^{(j)}=c^{(i)}\}.
\]
Thus, $\ell(i)$ is a value such that $c^{(\ell(i))}=c^{(i)}$, which means that at time $i$ and at time $\ell(i)$
the head of $M$ was pointing to the same cell. Moreover, $\ell(i)$ is the maximum value less than $i$ 
that satisfies such condition.
That is $\ell(i)$ is \emph{the last time} (previous to $i$) in which $M$ was pointing to position $c^{(i)}$.
First notice that in every step, 
$M$ moves its head either to the right or to the left (it never stays in the same cell).
This implies that for every $i$ it holds that $c^{(i)}\neq c^{(i-1)}$, from which we obtain that $\ell(i)<i-1$.
Moreover, in the case that $c^{(i)}$ is visited for the first time at time step $i$, the value $\ell(i)$ is ill-defined. 
In such a case we let $\ell(i)=i-1$. 
This makes $\ell(i)\leq i-1$ for all $i$, and allows us to check that $c^{(i)}$ is visited for the first time at time step $i$ 
by just checking that $\ell(i)=i-1$.

We now have all the necessary to explain how we compute our desired $s^{(r+1)}$ value.
Assume that $r+1>n$ (the case $r+1\leq n$ was already covered before).
We first note that if $\ell(r+1)=r$ then $s^{(r+1)}=\#$ since this is the first time that cell $c^{(r+1)}$ is visited.
On the other hand, if $\ell(r+1)<r$ then $s^{(r+1)}$ is the value written by $M$ at time $\ell(r+1)$ 
which is exactly $v^{(\ell(r+1))}$.
Thus, in this case we only need to attend to position $\ell(r+1)$ and copy the value $v^{(\ell(r+1))}$ to produce $s^{(r+1)}$.
We show that all this can be done with an additional self-attention decoder layer (Lemma~\ref{lem:vli}).

We have  described at a  high-level a decoder that, with input $(\vy_0,\vy_1,\ldots,\vy_r)$,
computes the values $q^{(r+1)}$ and $s^{(r+1)}$ which is what we need to produce $\vy_{r+1}$.
We next show all the details of this construction.

\newpage
\subsubsection{Details of the architecture of $\Trans_M$}\label{sec:details}

In this section we give more details on the architecture of the encoder and decoder needed 
to implement our strategy. 
We let several intermediate claims as lemmas that we formally prove in Section~\ref{sec:lproofs}.

\subsubsection*{Attention mechanism}

For our attention mechanism we use the following non-linear function:
\begin{eqnarray}\label{eq:varphi}
\varphi(x) & = & \begin{cases} 
\phantom{-}x & x \leq 0, \\
-x & x > 0.
\end{cases} 
\end{eqnarray}
We note  that $\varphi(x)=-|x|$ and it can be implemented as 
$
\varphi(x)=-\relu(x)-\relu(-x).
$
We use $\varphi(\cdot)$  to define a scoring function $\score_\varphi:\R^d\times \R^d\to \R$ such that
\[
\score_\varphi(\vu,\vv)=\varphi(\langle \vu,\vv\rangle)=-|\langle \vu,\vv\rangle|.
\]
Now, let $\vq\in \Q^d$, and $\key=(\vk_1,\ldots,\vk_{n})$ and $\val=(\vv_1,\ldots,\vv_{n})$ be tuples of 
elements in $\Q^d$. 
We now describe how $\Att(\vq,\key,\val)$ is generally computed when hard attention is considered.
Assume first that there exists a single $j^\star\in\{1,\ldots,n\}$ that maximizes $\score_\varphi(\vq,\vk_j)$.
In that case we have that $\Att(\vq,\key,\val)  =  \vv_{j^\star}$ with
\begin{eqnarray} 
j^\star & = & \argmax_{1\leq j\leq n}\ \score_\varphi(\vq,\vk_j) \notag\\
& = & \argmax_{1\leq j\leq n}\ -|\langle \vq,\vk_j\rangle| \notag\\
& = & \argmin_{1\leq j\leq n}\ |\langle \vq,\vk_j\rangle| \label{eq:min_absolute}
\end{eqnarray}
Thus, when computing hard attention with the function $\score_\varphi(\cdot)$ we essentially select the vector $\vv_{j}$ such that
the dot product $\langle \vq,\vk_j\rangle$ is as close to $0$ as possible.
If there is more than one index, say indexes $j_1,j_2,\ldots,j_r$, 
that minimizes the dot product $\langle \vq,\vk_j\rangle$ then we have that
\[
\Att(\vq,\key,\val)  =  \frac{1}{r}\big( \vv_{j_1}+\vv_{j_2}+\cdots+\vv_{j_r} \big).
\]
Thus, in the extreme case in which all dot products are equal $\langle \vq,\vk_j\rangle$ for every index $j$, 
attention behaves just as an average of all value vectors,
that is $\Att(\vq,\key,\val)  =  \frac{1}{n}\sum_{j=1}^n\vv_{j}$. We use all these properties of the hard attention
in our proof.

\subsubsection*{Vectors and encodings}
We now describe the vectors that we use in the encoder and decoder parts of $\Trans_M$.
The vectors that we use in the $\Trans_M$ layers are of dimension $d=2|Q|+4|\Sigma|+11$. 
To simplify  the exposition, whenever we use a vector $\vv\in\Q^d$, we write it arranged in four groups of values
as follows
\[
\begin{array}{rcllr}
\vv & = & [ 
& \vq_1,\vs_1,x_1, \\
&&& \vq_2,\vs_2,x_2,x_3,x_4,x_5, \\
&&& \vs_3,x_6,\vs_4,x_7 \\
&&& x_8,x_9,x_{10},x_{11} & ] 
\end{array}
\]
where $\vq_i\in \Q^{|Q|}$, $\vs_i\in \Q^{|\Sigma|}$, and $x_i\in\Q$.
Whenever in a vector of the above form any of the four groups of values is composed only of $0$'s, 
we just write `$0,\ldots,0$' where the length of this sequence is implicit in the length of the corresponding group.
Finally, we denote by $\vzq$ the vector in $\Q^{|Q|}$ that has only $0$'s, and similarly $\vzs$ the vector in $\Q^{|\Sigma|}$ that has only $0$'s.

For a symbol $s\in \Sigma$, we use $\oh{s}$ to denote a one-hot vector in $\Q^{|\Sigma|}$ that represents $s$.
That is, given an enumeration $\pi:\Sigma\to\{1,\ldots,|\Sigma|\}$, 
the vector $\oh{s}$ has a $1$ in position $\pi(s)$ and a $0$ in all other positions.
Similarly, for $q\in Q$, we use $\oh{q}$ to denote a one-hot vector in $\Q^{|Q|}$ that represents $q$.

\subsubsection*{Embeddings and positional encodings}

We have the necessary to introduce the embedding and positional encoding used in our construction.
We use an embedding function $f:\Sigma\to \sQ^{d}$ defined as
\[
\begin{array}{rcllr}
f(s) & = & [ 
& \sz, \\
&&& \sz, \\
&&& \oh{s_i},0,\vzs,0, \\
&&& \sz & ] 
\end{array}
\]
Our construction uses the positional encoding $\penc:\sN\to\sQ^{d}$ such that 
\[
\begin{array}{rcllr}
\penc(i) & = & [ 
& \sz, \\
&&& \sz, \\
&&& \sz, \\
&&& 1,i,1/i,1/i^2 & ] 
\end{array}
\]
Thus, given an input sequence $s_1s_2\cdots s_n\in\Sigma^*$, we have that 
\[
\begin{array}{rcllr}
f_{\penc}(s_i)=f(s_i)+\penc(i) & = & [ 
& \sz, \\
&&& \sz, \\
&&& \oh{s_i},0,\vzs,0, \\
&&& 1,i,1/i,1/i^2 & ] 
\end{array}
\]
We denote this last vector by ${\vx}_i$.
That is, if $M$ receives the input string $w=s_1s_2\cdots s_n$, 
then the input for $\Trans_M$ is the sequence $({\vx}_1,{\vx}_2,\ldots,{\vx}_n)$.
The need for using a positional encoding having values $1/i$  and $1/i^2$ will be clear when we formally prove the correctness of our construction. 

We need a final preliminary notion. 
In the formal construction of $\Trans_M$ we also use the following helping sequences:
\begin{eqnarray*}
\alpha^{(i)} & = & \begin{cases} 
s_i & 1 \leq i \leq n \\
s_n & i > n
\end{cases} \\
\beta^{(i)} & = & \begin{cases} 
i & i \leq n \\
n & i > n
\end{cases} 
\end{eqnarray*}
These are used to identify when $M$ is still reading the input string.

\subsubsection*{Construction of $\TEnc_M$}

The encoder part of $\Trans_M$ is very simple. For $\TEnc_M$ we use a single-layer encoder, 
such that $\TEnc_M({\vx}_1,\ldots,{\vx}_n)=(\bkey,\bval)$ where
$\bkey = (\vk_1,\ldots,\vk_n)$ and 
$\bval =  ({\vv}_1,\ldots,{\vv}_n)$ such that
\[
\begin{array}{rcllr}
\vk_i & = & [ 
& \sz, \\
&&& \sz, \\
&&& \sz, \\
&&& i,-1,0,0 & ]  \\ 
\\
\vv_i & = & [ 
& \sz, \\
&&& \sz, \\
&&& \oh{s_i},i,\vzs,0, \\
&&& \sz & ]  
\end{array}
\]

It is straightforward to see that these vectors can be produced with a single encoder layer by using a trivial self attention,
taking advantage of the residual connections in Equations~(\ref{eq:enc-att}) and~(\ref{eq:enc-out}),
and then using linear transformations for $V(\cdot)$ and $K(\cdot)$ in Equation~(\ref{eq:enc-rec}).

\newcommand{\lu}{\underline{\phantom{A}}}
When constructing the decoder we use the following property.
\begin{lemma}\label{lem:enc-attention}
Let $\vq\in \Q^{d}$ be a vector such that $\vq=[\lu,\ldots,\lu,1,j,\lu,\lu]$ where $j\in \N$ and `$\lu$' denotes an arbitrary value.
Then we have that
\[
\begin{array}{rcllr}
\Att(\vq,\bkey,\bval) & = & [ 
& \sz, \\
&&& \sz, \\
&&& \oh{\alpha^{(j)}},\beta^{(j)},\vzs,0, \\
&&& \sz & ] 
\end{array}
\]
\end{lemma}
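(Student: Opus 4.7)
The plan is to reduce the lemma to a direct dot-product calculation that exploits the sparse structure of the key vectors $\vk_i$, and then case-split on the relation between $j$ and $n$.

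First I would compute $\langle \vq, \vk_i\rangle$ for an arbitrary $i\in\{1,\ldots,n\}$. Since $\vk_i$ has zeros in every coordinate outside the last four, the arbitrary entries `$\lu$' appearing in the first three groups of $\vq$ cannot contribute anything. Moreover, the last two coordinates of $\vk_i$ are also $0$, so the last two entries of $\vq$ are likewise irrelevant. Thus only the coordinates carrying $1$ and $j$ in $\vq$, paired with the $i$ and $-1$ in $\vk_i$, survive, yielding
\begin{equation*}
\langle \vq,\vk_i\rangle \;=\; 1\cdot i + j\cdot(-1) \;=\; i-j.
\end{equation*}

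By the description of hard attention recalled in the preliminaries (see Equation~(\ref{eq:min_absolute})), $\Att(\vq,\bkey,\bval)$ is the average of the $\vv_i$'s over the indices $i\in\{1,\ldots,n\}$ that minimize $|\langle \vq,\vk_i\rangle|=|i-j|$. I would then split into two cases. If $1\leq j\leq n$, the unique minimizer is $i=j$, so the attention output equals $\vv_j$; substituting the definition of $\vv_j$ gives a vector whose third group is $\oh{s_j},j,\vzs,0$, and since $\alpha^{(j)}=s_j$ and $\beta^{(j)}=j$ for this range of $j$, this is exactly the claimed vector. If $j>n$, then $|i-j|=j-i$ is strictly decreasing in $i$ over $\{1,\ldots,n\}$, so the unique minimizer is $i=n$; the attention output equals $\vv_n$, whose third group is $\oh{s_n},n,\vzs,0$, matching the claim since $\alpha^{(j)}=s_n$ and $\beta^{(j)}=n$ when $j>n$.

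There is no real obstacle here: the uniqueness of the minimizer in both cases ensures we never have to average several $\vv_i$'s, so the claim falls out by pure bookkeeping. The only point that requires care is making fully explicit why the `$\lu$' entries in $\vq$ and the non-last-four coordinates play no role in the inner product, which is why the encoder was engineered so that $\vk_i$ is zero outside the positional-encoding block. I would state this observation cleanly at the start of the argument, then carry out the two-case analysis in two short lines.
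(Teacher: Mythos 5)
Your proof is correct and follows essentially the same route as the paper's: compute $\langle\vq,\vk_i\rangle = i-j$ using the sparsity of $\vk_i$, invoke the hard-attention characterization via $\argmin_i|i-j|$, and case-split on $j\leq n$ versus $j>n$ to identify the unique minimizer $i^\star=\beta^{(j)}$ and read off $\vv_{i^\star}$. The only cosmetic difference is that you make the uniqueness of the minimizer and the irrelevance of the `$\lu$' entries explicit, which the paper leaves implicit.
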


\subsubsection*{Construction of $\TDec_M$}

We next show how to construct the decoder part of $\Trans_M$ to produce the sequence of outputs $\vy_1,\vy_2,\ldots$,
where $\vy_i$ is given by:
\begin{equation*}
\begin{array}{rcllr}
{\vy}_i & = & [ 
& \oh{q^{(i)}},\oh{s^{(i)}},m^{(i-1)}, \\
&&& \sz, \\
&&& \sz, \\
&&& \sz & ] 
\end{array}
\end{equation*}That is, $\vy_i$ contains information about the state of $M$ at time $i$, the symbol under the head of $M$ at time $i$, and
the last direction followed by $M$ (the direction of the head movement at time $i-1$).
The need to include $m^{(i-1)}$ will be clear in the construction.

We consider as the starting vector for the decoder the vector
\begin{equation*}
\begin{array}{rcllr}
{\vy}_0 & = & [ 
& \oh{q_{\text{init}}},\oh{\#},0, \\
&&& \sz, \\
&&& \sz, \\
&&& \sz & ] 
\end{array}
\end{equation*}
We are assuming that $m^{(-1)}=0$ to represent that previous to time $0$ there was no head movement.
Our construction resembles a proof by induction; 
we describe the architecture piece by piece and at the same time we show how for every $r\geq 0$ 
our architecture constructs $\vy_{r+1}$ from the previous vectors 
$(\vy_0,\ldots,\vy_r)$.

Thus, assume that $\vy_0,\ldots,\vy_r$ satisfy the properties stated above. 
Since we are using positional encodings, 
the actual input for the first layer of the decoder is the sequence
\[
\vy_0+\penc(1),\ \vy_1+\penc(2),\ \ldots,\ \vy_{r}+\penc(r+1).
\]
We denote by $\overline{\vy}_i$ the vector $\vy_i$ plus its positional encoding.
Thus we have that
\begin{equation*}\label{eq:yi}
\begin{array}{rcllr}
\overline{\vy}_i & = & [ 
& \oh{q^{(i)}},\oh{s^{(i)}},m^{(i-1)}, \\
&&& \sz, \\
&&& \sz, \\
&&& 1,(i+1),1/(i+1),1/(i+1)^2 & ] 
\end{array}
\end{equation*}

For the first self attention in Equation~(\ref{eq:dec-self}) we just
produce the identity which can be easily implemented with a trivial attention plus the residual connection.
Thus, we produce the sequence of vectors $(\vp^1_0,\vp^1_1,\ldots,\vp^1_r)$ such that ${\vp}^1_i=\overline{\vy}_i$.

Since $\vp^1_i$ is of the form $[\lu,\ldots,\lu,1,i+1,\lu,\lu]$
by Lemma~\ref{lem:enc-attention} we know that if we use $\vp^1_i$ to attend over the encoder we obtain
\begin{equation*}\label{eq:first-layer2}
\begin{array}{rcllr}
\Att(\vp^1_i,\bkey,\bval) & = & [ 
& \sz, \\
&&& \sz, \\
&&& \oh{\alpha^{(i+1)}},\beta^{(i+1)},\vzs,0, \\
&&& \sz & ] 
\end{array}
\end{equation*}

Thus in Equation~(\ref{eq:dec-ext}) we finally produce the vector $\va^1_i$ given by
\begin{equation}\label{eq:ai_1}
\begin{array}{rcllr}
\va^1_{i}  =  \Att(\vp^1_i,\bkey,\bval) + \vp^1_i & = & [ 
& \oh{q^{(i)}},\oh{s^{(i)}},m^{(i-1)}, \\
&&& \sz, \\
&&& \oh{\alpha^{(i+1)}},\beta^{(i+1)},\vzs,0, \\
&&& 1,(i+1),1/(i+1),1/(i+1)^2 & ] 
\end{array}
\end{equation}

As the final piece of the first decoder layer 
we use a function $O_1(\cdot)$ (Equation~(\ref{eq:dec-ff})) that satisfies the following lemma.
\begin{lemma}\label{lem:M}
There exists a two-layer feed-forward network $O_1:\Q^d\to\Q^d$ such that with input vector $\va^1_{i}$ (\ref{eq:ai_1}) produces as output
\begin{equation*}
\begin{array}{rcllr}
O_1(\va^1_i) & = & [ 
& -\oh{q^{(i)}},-\oh{s^{(i)}},-m^{(i-1)}, \\
&&& \oh{q^{(i+1)}},\oh{v^{(i)}},m^{(i)},m^{(i-1)},0,0 \\
&&& \sz, \\
&&& \sz & ] 
\end{array}
\end{equation*}
\end{lemma}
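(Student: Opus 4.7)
The plan is to build $O_1$ as an explicit two-layer feed-forward network that uses the one-hot structure of $\oh{q^{(i)}}$ and $\oh{s^{(i)}}$ to look up the value of $\delta(q^{(i)},s^{(i)})$, while at the same time emitting linear copies of the existing coordinates in order to (i) cancel $\oh{q^{(i)}}$, $\oh{s^{(i)}}$ and $m^{(i-1)}$ in the first group via the residual connection, and (ii) duplicate $m^{(i-1)}$ into the second group. Since every coordinate of the target $O_1(\va^1_i)$ other than those coming from $\delta$ is obtained by a fixed linear map on $\va^1_i$, the only nontrivial piece is the gadget that computes the one-hot encodings of $q^{(i+1)}$, $v^{(i)}$, and $m^{(i)}$ together with the scalar $m^{(i)}$.

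The first layer constructs, for every pair $(q,s)\in Q\times\Sigma$, an indicator neuron $h_{q,s}$. Concretely, let $(\oh{q^{(i)}})_q$ and $(\oh{s^{(i)}})_s$ denote the entries of the one-hot encodings read off from $\va^1_i$ by a linear projection; define
\[
h_{q,s} \;=\; \sigma\!\big((\oh{q^{(i)}})_q + (\oh{s^{(i)}})_s - 1\big).
\]
Because both inputs are one-hot, the argument equals $1$ exactly when $(q^{(i)},s^{(i)})=(q,s)$, equals $0$ when precisely one coordinate matches, and is negative otherwise; the piecewise-linear sigmoid $\sigma$ therefore makes $h_{q,s}=1$ in the first case and $h_{q,s}=0$ in all others. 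These $|Q|\cdot|\Sigma|$ neurons form the hidden layer. I will also pass the coordinates of $\va^1_i$ through the hidden layer by a second set of linear-plus-$\sigma$ units applied to quantities already lying in $[0,1]$ (or a pair of $\sigma$ units for signed scalars like $m^{(i-1)}$), so they remain available for the output layer; because $\sigma$ acts as the identity on $[0,1]$, these coordinates are preserved exactly.

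The second (linear) layer assembles the output. For every $(q,s)\in Q\times\Sigma$, let $\delta(q,s)=(q',v',m')$; place a weight of $1$ from $h_{q,s}$ to the coordinate of $\oh{q'}$ in the designated output block, a weight of $1$ to the coordinate of $\oh{v'}$, and a weight of $m'\in\{-1,+1\}$ to the coordinate holding $m^{(i)}$. Since exactly one $h_{q,s}$ fires, these weighted sums produce $\oh{q^{(i+1)}}$, $\oh{v^{(i)}}$, and $m^{(i)}$, respectively. The same layer uses linear combinations of the preserved coordinates to output $-\oh{q^{(i)}}$, $-\oh{s^{(i)}}$, $-m^{(i-1)}$ in the first block, a copy of $m^{(i-1)}$ in the second block, and zeros in the third and fourth blocks (the values $\oh{\alpha^{(i+1)}}$, $\beta^{(i+1)}$, and the positional encoding coordinates are simply not read). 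Adding the result to $\va^1_i$ via the residual connection then yields exactly the vector described in the statement.

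The only real subtlety is making sure the hidden-layer indicator gadget produces a \emph{clean} one-hot mask under $\sigma$; this is exactly why the construction uses one-hot symbol and state encodings and is why a single $\sigma$-nonlinearity suffices. Everything else is a fixed lookup table of size $|Q|\cdot|\Sigma|$ compiled into constant weights, and a bit of linear bookkeeping for the residual-cancellation and $m^{(i-1)}$-copy coordinates. I do not anticipate any genuine obstacle beyond verifying these weight assignments coordinate by coordinate, which I will do in the appendix.
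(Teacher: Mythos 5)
Your construction is correct and is essentially the same as the paper's: the hidden-layer indicator $\sigma\bigl((\oh{q^{(i)}})_q + (\oh{s^{(i)}})_s - 1\bigr)$ is exactly the paper's $\sigma(g_1([\oh{q},\oh{s}]))$ gadget, and the rest is the same linear bookkeeping (the paper merely routes the transition lookup through an intermediate one-hot $\oh{(q,s,m)}$ before projecting down, and re-centers $m^{(i-1)}$ into $[0,1]$ rather than using a $\sigma(x)-\sigma(-x)$ pair, both of which are cosmetic). One nit: the lemma asserts the form of $O_1(\va^1_i)$ itself, not of $O_1(\va^1_i)+\va^1_i$, so your closing sentence about the residual connection should be read as a remark about the subsequent step producing $\vz^1_i$, not as part of verifying the lemma.
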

That is, function $O_1(\cdot)$ simulates transition $\delta(q^{(i)},s^{(i)})$ 
to construct $\oh{q^{(i+1)}}$, $\oh{v^{(i)}}$, and $m^{(i)}$
besides some other linear transformations.

We finally produce as the  output of the first decoder layer, the sequence 
$(\vz^1_0,\vz^1_1,\ldots,\vz^1_r)$ such that
\begin{equation}\label{eq:zi_1}
\begin{array}{rcllr}
\vz^1_i = O_1(\va^1_i) + \va^1_i & = & [ 
& \sz, \\
&&& \oh{q^{(i+1)}},\oh{v^{(i)}},m^{(i)},m^{(i-1)},0,0, \\
&&& \oh{\alpha^{(i+1)}},\beta^{(i+1)},\vzs,0, \\
&&& 1,(i+1),1/(i+1),1/(i+1)^2 & ] 
\end{array}
\end{equation}

Notice that $\vz^1_r$ already holds info about $q^{(r+1)}$ and $m^{(r)}$ which we need 
for constructing vector $\vy_{r+1}$. 
The single piece of information that we
still need to construct is $s^{(r+1)}$, that is, the symbol under the head of machine $M$
at the next time step (time $r+1$). 
We next describe how this symbol can be computed with two additional decoder layers.

Recall that $c^{(i)}$ is the cell to which $M$ is pointing to at time $i$, and that it satisfies 
that $c^{(i)}=m^{(0)}+m^{(1)}+\cdots + m^{(i-1)}$.
We can take advantage of this property to prove the following lemma.
\begin{lemma}\label{lem:ci}
Let $\mZ^1_i=(\vz^1_0,\vz^1_1,\ldots,\vz^1_i)$. 
There exists functions $Q_2(\cdot)$, $K_2(\cdot)$, and $V_2(\cdot)$ defined by feed-forward networks such that
\begin{equation}
\begin{array}{rcllr}
\Att(Q_2(\vz^1_i),K_2(\mZ^1_i),V_2(\mZ^1_i)) & = & [ 
& \sz, \\
&&& \vzq,\vzs,0,0,\frac{c^{(i+1)}}{(i+1)},\frac{c^{(i)}}{(i+1)}, \\
&&& \sz, \\
&&& \sz & ] 
\end{array}
\end{equation}
\end{lemma}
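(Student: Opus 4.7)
The plan is to exploit a well-chosen uniform hard attention so that the attention layer simply averages a linear projection of $\vz^1_0,\ldots,\vz^1_i$; since the average has denominator $(i+1)$, the sums $\sum_{j=0}^{i} m^{(j)} = c^{(i+1)}$ and $\sum_{j=0}^{i} m^{(j-1)} = c^{(i)}$ (using $m^{(-1)}=0$) come out normalized by exactly the factor we want.

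\textbf{Step 1: make all attention scores equal.} Choose $K_2(\cdot)$ to be the identically zero linear map (this is trivially a feed-forward network), and let $Q_2(\cdot)$ be any linear map, say the identity on some component that is then zeroed out. Then for every $i,j$,
\[
\score_\varphi(Q_2(\vz^1_i), K_2(\vz^1_j)) \; = \; -|\langle Q_2(\vz^1_i),\vzero\rangle| \; = \; 0.
\]
Since every score is tied, the hard attention over the prefix $\mZ^1_i=(\vz^1_0,\ldots,\vz^1_i)$ produces the uniform average
\[
\Att(Q_2(\vz^1_i), K_2(\mZ^1_i), V_2(\mZ^1_i)) \; = \; \frac{1}{i+1}\sum_{j=0}^{i} V_2(\vz^1_j).
\]

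\textbf{Step 2: design $V_2$ to pick off $m^{(j)}$ and $m^{(j-1)}$.} Recall from equation~(\ref{eq:zi_1}) that the second group of coordinates of $\vz^1_j$ contains $\oh{q^{(j+1)}},\oh{v^{(j)}},m^{(j)},m^{(j-1)},0,0$ in positions that we can address linearly. Let $V_2:\Q^d\to\Q^d$ be the linear map that reads $m^{(j)}$ and $m^{(j-1)}$ from their positions in $\vz^1_j$ and writes them, respectively, into the two last coordinates of the second group (the slots designated for $c^{(i+1)}/(i+1)$ and $c^{(i)}/(i+1)$), and outputs $0$ everywhere else. Being a linear transformation, $V_2$ is trivially realizable by a (one-layer) feed-forward network.

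\textbf{Step 3: verify the two sums.} Substituting into the uniform average from Step~1,
\[
\frac{1}{i+1}\sum_{j=0}^{i} m^{(j)} \; = \; \frac{c^{(i+1)}}{i+1}, \qquad
\frac{1}{i+1}\sum_{j=0}^{i} m^{(j-1)} \; = \; \frac{m^{(-1)} + \sum_{j=0}^{i-1} m^{(j)}}{i+1} \; = \; \frac{c^{(i)}}{i+1},
\]
using $c^{(k)} = \sum_{\ell=0}^{k-1} m^{(\ell)}$ and the convention $m^{(-1)}=0$ fixed in the construction of $\vy_0$. All other coordinates of the value are zero by the definition of $V_2$, which matches the statement.

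\textbf{Main obstacle.} No step is computationally heavy; the only conceptual subtlety is recognizing that the causal self-attention at position $i$ attends over exactly $i+1$ vectors, so uniform hard attention yields a denominator equal to $i+1$ for free, matching both target entries simultaneously. The off-by-one between the two sums (one ending at index $i$, the other effectively at $i-1$) is precisely absorbed by the shift trick of storing both $m^{(j)}$ and $m^{(j-1)}$ in $\vz^1_j$, which is the reason this redundant slot was carried along in equation~(\ref{eq:zi_1}).
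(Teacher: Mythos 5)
Your proof is correct and follows essentially the same approach as the paper's: set $K_2$ (and in the paper also $Q_2$) to the zero map so that all hard-attention scores tie and the attention reduces to a uniform average over the prefix, then use a linear $V_2$ to extract $m^{(j)}$ and $m^{(j-1)}$, so that the averages become $c^{(i+1)}/(i+1)$ and $c^{(i)}/(i+1)$. Your handling of the convention $m^{(-1)}=0$ and the off-by-one shift matches the paper's reasoning exactly.
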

Lemma~\ref{lem:ci} essentially shows that one can construct a representation for values $c^{(i)}$ and $c^{(i+1)}$
for every possible index $i$. In particular we will know the value $c^{(r+1)}$ that represents the cell to which
the machine is pointing to in the next time step.

Continuing with the decoder layer, when using the self attention above and 
after adding the residual in Equation~(\ref{eq:self_attention_dec}) we obtain the sequence of vectors 
$(\vp^2_0,\vp^2_1,\ldots,\vp^2_r)$ such that:
\begin{equation*}
\begin{array}{rcllr}
\vp^2_i  & = & & \Att(Q_2(\vz^1_i),K_2(\mZ^1_i),V_2(\mZ^1_i)) + \vz^1_i \\  \\
& = & [ 
& \sz, \\
&&& \oh{q^{(i+1)}},\oh{v^{(i)}},m^{(i)},m^{(i-1)},\frac{c^{(i+1)}}{(i+1)},\frac{c^{(i)}}{(i+1)}, \\
&&& \oh{\alpha^{(i+1)}},\beta^{(i+1)},\vzs,0, \\
&&& 1,(i+1),1/(i+1),1/(i+1)^2 & ] 
\end{array}
\end{equation*}

From vectors $(\vp^2_0,\vp^2_1,\ldots,\vp^2_r)$ and by using the residual connection
in Equation~(\ref{eq:dec-ext}) plus the output function $O(\cdot)$ in Equation~(\ref{eq:dec-ff}) 
it is not difficult to produce the sequence of vectors $(\vz^2_0,\vz^2_1,\ldots,\vz^2_r)$ such that $\vz^2_i=\vp^2_i$,
as the output of the second decoder layer. That is, we  have that 
\begin{equation*}
\begin{array}{rcllr}
\vz^2_i  \ = \ \vp^2_i & = & [ 
& \sz, \\
&&& \oh{q^{(i+1)}},\oh{v^{(i)}},m^{(i)},m^{(i-1)},\frac{c^{(i+1)}}{(i+1)},\frac{c^{(i)}}{(i+1)}, \\
&&& \oh{\alpha^{(i+1)}},\beta^{(i+1)},\vzs,0, \\
&&& 1,(i+1),1/(i+1),1/(i+1)^2 & ] 
\end{array}
\end{equation*}

We now describe how can we use a third and final decoder layer to produce our desired $s^{(r+1)}$ value (the symbol
under the head of $M$ in the next time step). 
Recall that $\ell(i)$ is {the last time} (previous to $i$) in which $M$ was pointing to position $c^{(i)}$,
or it is $i-1$ if this is the first time that $M$ is pointing to $c^{(i)}$.
We can prove the following lemma.


\begin{lemma}\label{lem:vli}
There exists functions $Q_3(\cdot)$, $K_3(\cdot)$, and $V_3(\cdot)$ defined by feed-forward networks
such that 
\begin{equation*}
\begin{array}{rcllr}
\Att(Q_3(\vz^2_i),K_3(\mZ^2_i),V_3(\mZ^2_i)) & = & [ 
& \sz, \\
&&& \sz, \\
&&& \vzs,0,\oh{v^{(\ell(i+1))}},\ell(i+1), \\
&&& \sz & ] 
\end{array}
\end{equation*}
\end{lemma}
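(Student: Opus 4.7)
The plan is to design $Q_3$, $K_3$, $V_3$ as simple affine maps of $\vz^2_\cdot$ so that hard attention selects exactly the position $j^\star = \ell(i+1)$, at which $V_3(\vz^2_{j^\star})$ already equals the target vector. The key is to engineer the inner product
\[
\langle Q_3(\vz^2_i),\, K_3(\vz^2_j) \rangle \;=\; \frac{c^{(i+1)} - c^{(j)}}{(i+1)(j+1)} \;-\; \frac{\epsilon}{(i+1)^2(j+1)^2}
\]
for a small rational constant $\epsilon \in (0,1)$; I will take $\epsilon = \tfrac{1}{2}$. Reading off the entries of $\vz^2_i$ and $\vz^2_j$ listed earlier, this form is achievable by placing $\tfrac{c^{(i+1)}}{i+1}$, $\tfrac{1}{i+1}$, and $\tfrac{\epsilon}{(i+1)^2}$ into three dedicated slots of $Q_3(\vz^2_i)$, and placing $\tfrac{1}{j+1}$, $-\tfrac{c^{(j)}}{j+1}$, and $-\tfrac{1}{(j+1)^2}$ into the matching slots of $K_3(\vz^2_j)$, with all other coordinates zero.

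Next I carry out a case analysis on the scores $-|\langle \cdot,\cdot\rangle|$. Since $M$ always moves its head, $c^{(i)} \neq c^{(i+1)}$, so $j = i$ is always a ``non-match''. If there exists $j \leq i-1$ with $c^{(j)} = c^{(i+1)}$, the largest such $j_{\max}$ yields $|\langle\cdot,\cdot\rangle| = \epsilon/((i+1)^2(j_{\max}+1)^2)$; for any other match $j' < j_{\max}$ the magnitude is strictly larger because $1/(j+1)^2$ is monotone in $j$, and for every non-match $j$ the bound $|c^{(i+1)} - c^{(j)}| \geq 1$ gives $|\langle\cdot,\cdot\rangle| \geq \tfrac{1}{(i+1)(j+1)} - \tfrac{\epsilon}{(i+1)^2(j+1)^2}$, which a short algebraic verification shows exceeds the match value as long as $\epsilon < \tfrac{4}{5}$. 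Hence the unique minimum sits at $j_{\max} = \ell(i+1)$. If instead no match exists, then by definition $\ell(i+1) = i$; here $|\langle\cdot,\cdot\rangle|_i \in [\tfrac{1}{(i+1)^2} - \tfrac{\epsilon}{(i+1)^4},\, \tfrac{1}{(i+1)^2} + \tfrac{\epsilon}{(i+1)^4}]$, while for $j \leq i-1$ the same lower bound gives $|\langle\cdot,\cdot\rangle|_j \geq \tfrac{1}{(i+1)(j+1)} - \tfrac{\epsilon}{(i+1)^2(j+1)^2} \geq \tfrac{1}{(i+1)i}(1 - o(1))$, which is strictly larger than $1/(i+1)^2$ for $i \geq 1$ and the chosen $\epsilon$. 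So hard attention picks $j = i = \ell(i+1)$.

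Finally, I let $V_3$ be the affine map that extracts the one-hot block $\oh{v^{(j)}}$ and the scalar $j = (j+1) - 1$ from $\vz^2_j$ and deposits them in the slots dictated by the lemma's target vector, zeroing all other coordinates. Applying this at $j^\star = \ell(i+1)$ then delivers the claimed output. The main obstacle is juggling the two competing constraints on $\epsilon$: it must be small enough that the $1/((i+1)^2(j+1)^2)$ bias never overwhelms the leading distance term $(c^{(i+1)}-c^{(j)})/((i+1)(j+1))$ at a non-match (which would wrongly elect some $j \neq \ell(i+1)$), yet strictly positive so that it breaks ties among matching indices in favour of the largest. Verifying the resulting system of inequalities uniformly in $i$, $j$, and $j_{\max}$ is the routine but tedious computation that the detailed proof must carry out.
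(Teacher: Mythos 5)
Your construction follows essentially the same self-attention strategy as the paper's proof of this lemma: $Q_3$ and $K_3$ are chosen so that hard attention minimizes $|\langle Q_3(\vz^2_i), K_3(\vz^2_j)\rangle|$, which is engineered to single out $j=\ell(i+1)$, and $V_3$ then reads off $\oh{v^{(j)}}$ and $j$. The only cosmetic difference is the sign of the small bias term --- you take $-\epsilon/((i{+}1)^2(j{+}1)^2)$ with $\epsilon=\tfrac12$, the paper takes $+\tfrac13/((i{+}1)^2(j{+}1)^2)$ --- and both signs are workable.

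The no-match case ($\ell(i+1)=i$), however, has a small but genuine gap as written. You bound $|\langle\cdot,\cdot\rangle|_i$ only by the symmetric enclosure $\bigl[\tfrac{1}{(i+1)^2}-\tfrac{\epsilon}{(i+1)^4},\ \tfrac{1}{(i+1)^2}+\tfrac{\epsilon}{(i+1)^4}\bigr]$ and then argue that non-match values at $j<i$ exceed $\tfrac{1}{(i+1)^2}$. If $|\langle\cdot,\cdot\rangle|_i$ were at the \emph{upper} endpoint of your interval (i.e.\ if $m^{(i)}=-1$ were possible here), beating $\tfrac{1}{(i+1)^2}$ would not establish that $j=i$ is the unique argmin. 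The missing ingredient, which the paper supplies explicitly, is that because $M$ never moves left of its initial cell, a first visit to $c^{(i+1)}$ at step $i+1$ forces $c^{(i+1)}>c^{(j)}$ for every $j\leq i$, in particular $m^{(i)}=+1$; this pins $|\langle\cdot,\cdot\rangle|_i$ to the \emph{lower} endpoint $\tfrac{1}{(i+1)^2}-\tfrac{\epsilon}{(i+1)^4}$, after which your comparison closes cleanly. Separately, the ``$(1-o(1))$'' shorthand should be replaced by the concrete algebra: the inequality $\tfrac{1}{(i+1)(j+1)}-\tfrac{\epsilon}{(i+1)^2(j+1)^2}>\tfrac{1}{(i+1)^2}$ reduces to $(j{+}1)(i{-}j)>\epsilon$, which is immediate for $0\leq j\leq i-1$ and $\epsilon=\tfrac12$; that step is correct in substance but the asymptotic notation is misleading for a uniform finite bound.
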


We prove Lemma~\ref{lem:vli} by just showing that, for every $i$ one can attend exactly to position $\ell(i+1)$
and then just copy both values. We do this by taking advantage of the values $c^{(i)}$ and $c^{(i+1)}$ previously computed
for every index $i$.
Then we have that $\vp^3_i$ is given by
\begin{equation}
\begin{array}{rcllr}
\vp^3_i & = && \Att(Q_3(\vz^2_i),K_3(\mZ_i^2),V_3(\mZ_i^2)) + \vz^2_i \\ \\
& = & [ 
& \sz \\
&&& \oh{q^{(i+1)}},\oh{v^{(i)}},m^{(i)},m^{(i-1)},\frac{c^{(i+1)}}{(i+1)},\frac{c^{(i)}}{(i+1)}, \\
&&& \oh{\alpha^{(i+1)}},\beta^{(i+1)},\oh{v^{(\ell(i+1))}},\ell(i+1), \\
&&& 1,(i+1),1/(i+1),1/(i+1)^2 & ] 
\end{array}
\end{equation}

From vectors $(\vp^3_0,\vp^3_1,\ldots,\vp^3_r)$ and by using the residual connection
in Equation~(\ref{eq:dec-ext}) plus the output function $O(\cdot)$ in Equation~(\ref{eq:dec-ff})) 
it is not difficult to produce the sequence of vectors $(\vz^3_0,\vz^3_1,\ldots,\vz^3_r)$ such that $\vz^3_i=\vp^3_i$,
as the output of the third and final decoder layer, and  thus we have that
\begin{equation*}
\begin{array}{rcllr}
\vz^3_i \ = \ \vp^3_i
& = & [ 
& \sz, \\
&&& \oh{q^{(i+1)}},\oh{v^{(i)}},m^{(i)},m^{(i-1)},\frac{c^{(i+1)}}{(i+1)},\frac{c^{(i)}}{(i+1)}, \\
&&& \oh{\alpha^{(i+1)}},\beta^{(i+1)},\oh{v^{(\ell(i+1))}},\ell(i+1), \\
&&& 1,(i+1),1/(i+1),1/(i+1)^2 & ] 
\end{array}
\end{equation*}

We finish our construction by using the final transformation function $F(\cdot)$ in Equation~(\ref{eq:dec-out}) in the following lemma.
\begin{lemma}\label{lem:yr+1}
There exists a function $F:\Q^d\to \Q^d$ defined by a feed-forward network such that
\begin{equation*}
\begin{array}{rcllr}
F(\vz^3_r) & = & [ 
& \oh{q^{(r+1)}},\oh{s^{(r+1)}},m^{(r)}, \\
&&& \sz, \\
&&& \sz, \\
&&& \sz & ] \\  \\
& = & & \vy_{r+1}
\end{array}
\end{equation*}
\end{lemma}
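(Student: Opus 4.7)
The plan is to build $F$ as a small feed-forward network of the same style as in Lemma~\ref{lem:M}, extracting three fields from $\vz^3_r$ and placing them into the first group of the output while leaving every other coordinate at zero. Two of these fields, $\oh{q^{(r+1)}}$ and $m^{(r)}$, already sit verbatim in the second group of $\vz^3_r$, so a single linear projection relocates them into the first group. All of the real work goes into synthesising the third field $\oh{s^{(r+1)}}$.

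For $\oh{s^{(r+1)}}$ I would implement the three-way case analysis already sketched in the construction overview: the correct symbol is $\oh{\alpha^{(r+1)}}$ when $r+1 \leq n$ (the head is still scanning the input); it is $\oh{\#}$ when $r+1 > n$ but $\ell(r+1) = r$ (the head has landed on a cell never visited before); and it is $\oh{v^{(\ell(r+1))}}$ otherwise (the head has returned to a cell it previously wrote on). All three candidate vectors are already available to $F$: $\oh{\alpha^{(r+1)}}$ and $\oh{v^{(\ell(r+1))}}$ from the third group of $\vz^3_r$, and $\oh{\#}$ as a constant bias. The two required tests can be decided from integer scalars already present in $\vz^3_r$; I would define the indicators
\[
a := \sigma\bigl(\beta^{(r+1)} - (r+1) + 1\bigr), \qquad b := \sigma\bigl(\ell(r+1) - (r+1) + 2\bigr).
\]
Because $\beta^{(r+1)} \leq r+1$ and $\ell(r+1) \leq r$ and all involved quantities are integers, $a$ equals $1$ iff $r+1 \leq n$ and $b$ equals $1$ iff $\ell(r+1) = r$; moreover the reading phase always visits fresh cells, so $a = 1$ forces $b = 1$. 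Hence the three coefficients $a$, $b - a$, $1 - b$ form a $\{0,1\}$-valued partition of the three cases.

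A second feed-forward layer then gates each candidate one-hot vector by its coefficient via the standard piecewise-linear identity $\sigma(\oh{s} + c - \vone) = c \cdot \oh{s}$, valid componentwise whenever $c \in \{0,1\}$ and $\oh{s}$ is one-hot. Summing the three gated vectors produces $\oh{s^{(r+1)}}$, and combining the result with the linear extraction of the state and head-direction fields yields exactly $\vy_{r+1}$.

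The main obstacle, though mild, is that the two indicators $a$ and $b$ have to snap to clean $\{0,1\}$ values, which relies on the integrality of $\beta^{(r+1)}$, $\ell(r+1)$ and $r+1$ to guarantee that their linear combinations land on the flat regions of $\sigma$ rather than on its linear slope; and the gating of a scalar indicator against a one-hot vector has to be implemented without genuine multiplication, which the identity above achieves precisely because both arguments are in $\{0,1\}$. Once these two checks are in place, the remainder of $F$ is routine linear bookkeeping that matches the pattern of the earlier decoder lemmas.
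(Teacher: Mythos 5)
Your proposal follows essentially the same approach as the paper: both compute $\{0,1\}$-valued indicators for the two relevant tests (whether the head is still in the reading phase, and whether it has landed on a fresh cell) from integer quantities already present in $\vz^3_r$ by passing them through the saturated sigmoid $\sigma$, and both then implement a multiplication-free selection of the correct one-hot symbol by exploiting the $\{0,1\}$ structure. Your parallel gating with partition-of-unity coefficients $a$, $b-a$, $1-b$ is a mild reorganisation of the paper's two sequential applications of a two-way \emph{if} gadget (Lemma~\ref{lem:if}); it does require the extra observation $a \leq b$ (reading-phase cells are always fresh, so $r+1 \leq n$ implies $\ell(r+1) = r$) to keep $b - a$ nonnegative, which you correctly state and justify, whereas the paper's nested \emph{if} construction is indifferent to the case ordering and works for all four $(a,b)$ combinations without any side condition.

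The one detail your sketch glosses over is how $m^{(r)} \in \{-1,1\}$ survives the $\sigma$ activations of the gating subnetwork: the symbol-selection logic occupies at least two $\sigma$ layers, and a standard feed-forward network applies $\sigma$ coordinate-wise at every hidden layer with no bypass, so $m^{(r)}$ would be clipped ($-1 \mapsto 0$) and its sign lost. The paper handles this by re-encoding $m^{(r)}$ as a two-dimensional one-hot vector $\oh{m^{(r)}}$ in the first affine map and decoding it back in the final affine map; your construction needs the same device (or the $\frac{1}{2}m + \frac{1}{2}$ rescaling used in Lemma~\ref{lem:M}). This is an easy repair but does not follow from ``a single linear projection relocates them'' as written.
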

We prove Lemma~\ref{lem:yr+1} as follows (details in the next section). 
We show that one can construct a feed-forward network that with input $\vz^3_r$ implements the following to
produce $\vy_{r+1}$.
We move $\oh{q^{(r+1)}}$ and $m^{(r)}$ to its corresponding position in $\vy_{r+1}$. 
Then
\begin{enumerate}
\item if $\beta^{(r+1)}=r+1$ then we let $\oh{s^{(r+1)}}=\oh{\alpha^{(r+1)}}$,
\item if $\beta^{(r+1)}<r+1$ and $\ell(r+1)=r$, then we let $\oh{s^{(r+1)}}=\oh{\#}$, and
\item if $\beta^{(r+1)}<r+1$ and $\ell(r+1)\neq r$, then we let $\oh{s^{(r+1)}}=\oh{v^{(\ell(r+1))}}$.
\end{enumerate}
Finally, we move $\oh{s^{(r+1)}}$ to its corresponding position in $\vy_{r+1}$ and we make all
other positions $0$.
The correctness of the above rules is given by the following argument.
If $\beta^{(r+1)}=r+1$ then by the definition of $\beta^{(r+1)}$ we  have that 
$r+1\leq n$ which implies that $\alpha^{(r+1)}=s_{r+1}=s^{(r+1)}$ and thus rule (1) above is correct.
If $\beta^{(r+1)}<r+1$ then we know that $r+1>n$ and if $\ell(r+1)=r$ then by  the definition of $\ell(\cdot)$ we
know that $c^{(r+1)}$ is visited by $M$ for the first time at time $r+1$, 
which implies that $s^{(r+1)}=\#$ and thus rule (2) is also correct.
Finally, If $\beta^{(r+1)}<r+1$ and $\ell(r+1)\neq r$, then we know that $c^{(r+1)}$ has been visited before at time $\ell(r+1)$, and
thus $s^{(r+1)}=v^{(\ell(r+1))}$ which implies the correctness of rule (3).

\subsection*{Final step}
We now can use our $\Trans_M$ network to construct the recognizer $A=(\Sigma,f_{\penc},\Trans_M,\vy_0,\sF)$
such that $A$ accepts $w$ if and only if $M=(Q,\Sigma,\delta,q_{\text{init}},F)$ accepts $w$.
Notice that $M$ accepts $w$ if and only if an accepting state $q_f\in F$ is reached at some time step, say $t^\star$.
By our construction above we know that, with input $f_{\penc}(w)$ our network $\Trans_M$ produces a vector $\vy_{t^\star}$
that contains $q_f$ as a one-hot vector. Thus, we can simply use $\sF$ as the set of all vectors in $\Q^d$ that contains
a one-hot representation of a state in $F$.
Formally, $\sF= \{\oh{q}\mid q\in F\}\times \sQ^{d-|Q|}$.
It is straightforward to see that membership in $\sF$ can be checked in linear time.


\newpage
\subsubsection{Detailed proofs of intermediate lemmas}\label{sec:lproofs}

\begin{proof}[\bf Proof of Lemma \ref{lem:enc-attention}]
Let $\vq\in \Q^{d}$ be a vector such that $\vq=[\lu,\ldots,\lu,1,j,\lu,\lu]$ where $j\in \N$ and `$\lu$' is an arbitrary value.
We next prove that
\[
\begin{array}{rcllr}
\Att(\vq,\bkey,\bval) & = & [ 
& \sz, \\
&&& \sz, \\
&&& \oh{\alpha^{(j)}},\beta^{(j)},\vzs,0, \\
&&& \sz & ] 
\end{array}
\]
where $\alpha^{(j)}$ and $\beta^{(j)}$ are defined as
\begin{eqnarray*}
\alpha^{(j)} & = & \begin{cases} 
s_j & 1 \leq j \leq n \\
s_n & j > n
\end{cases} \\
\beta^{(j)} & = & \begin{cases} 
j & j \leq n \\
n & j > n
\end{cases} 
\end{eqnarray*}

Recall that $\bkey = (\vk_1,\ldots,\vk_n)$ is such that $\vk_i=[\ \sz, i,-1,0,0 \ ]$.
Then we have that 
\[
\score_\varphi(\vq,\vk_i)=\varphi(\langle \vq,\vk_i\rangle)=-|\langle \vq,\vk_i\rangle|=-|i-j|.
\]
Notice that, if $j\leq n$, then the above expression is maximized when $i=j$. Otherwise,
if $j>n$ then the expression is maximized when $i=n$.
Then $\Att(\vq,\bkey,\bval)=\vv_{i^\star}$ where $i^\star=j$ if $j\leq n$ and $i^\star=n$ if $j>n$.
We note that $i^\star$ as just defined is exactly $\beta^{(j)}$.
Thus,  given that $\vv_i$ is defined as
\[
\begin{array}{rcllr}
\vv_i & = & [ 
& \sz, \\
&&& \sz, \\
&&& \oh{s_i},i,\vzs,0, \\
&&& \sz & ]  
\end{array}
\]
we obtain that
\[
\begin{array}{rcllr}
\Att(\vq,\bkey,\bval)\  = \ \vv_{i^\star} & = & [ 
& \sz, \\
&&& \sz, \\
&&& \oh{s_{i^\star}},i^{\star},\vzs,0, \\
&&& \sz & ]  \\
& \\
& = & [ 
& \sz, \\
&&& \sz, \\
&&& \oh{\alpha^{(j)}},\beta^{(j)},\vzs,0, \\
&&& \sz & ]  
\end{array}
\]
which is what we wanted to prove.
\end{proof}

\begin{proof}[\bf Proof of Lemma \ref{lem:M}]
In order to prove the lemma we need some intermediate notions and properties.
Assume that the enumeration $\pi_1:\Sigma\to \{1,\ldots,|\Sigma|\}$ is the one used to construct the one-hot vectors 
$\oh{s}$ for $s\in \Sigma$, and that $\pi_2:Q\to \{1,\ldots,|Q|\}$ is the one used to
construct $\oh{q}$ with $q\in Q$.
Using $\pi_1$ and $\pi_2$ one can construct an enumeration for the pairs in $Q\times \Sigma$ and then construct
one-hot vectors for pairs in this set. 
Formally, given $(q,s)\in Q\times \Sigma$ we denote by $\oh{(q,s)}$ a one-hot vector 
with a $1$ in position $(\pi_1(s)-1)|Q|+\pi_2(q)$ and a $0$ in every other  position.
To simplify the notation we use $\pi(q,s)$ to denote $(\pi_1(s)-1)|Q|+\pi_2(q)$.
One can similarly  construct an enumeration $\pi'$ for $Q\times \Sigma\times \{-1,1\}$ 
such that $\pi'(q,s,m)=\pi(q,s)$ if $m=-1$ and $\pi'(q,s,m)=|Q||\Sigma|+\pi(q,s)$ if $m=1$.
We denote by $\oh{(q,s,m)}$ the corresponding one-hot vector for every $(q,s,m)\in Q\times \Sigma\times \{-1,1\}$.
We next prove three helping properties. In every case $q\in Q$, $s\in \Sigma$, $m\in \{-1,1\}$, and $\delta(\cdot,\cdot)$ is
the transition function of machine $M$.
\begin{enumerate}
\item There exists $f_1:\Q^{|Q|+|\Sigma|}\to \Q^{|Q||\Sigma|}$ such that $f_1([\ \oh{q},\oh{s}\ ])=\oh{(q,s)}$.
\item There exists $f_\delta:\Q^{|Q||\Sigma|}\to \Q^{2|Q||\Sigma|}$ such that $f_\delta(\oh{(q,s)})=\oh{\delta(q,s)}$.
\item There exists $f_2:\Q^{2|Q||\Sigma|}\to \Q^{|Q|+|\Sigma|+1}$ such that $f_2(\oh{(q,s,m)})=[\ \oh{q},\oh{s},m\ ]$.
\end{enumerate}
To show (1), lets denote by $\mS_i$, with $i\in\{1,\ldots,|\Sigma|\}$, a matrix of dimensions $|\Sigma|\times|Q|$ such that
$\mS_i$ has its $i$-th row with $1$'s and it is $0$ everywhere else.
We note that for every $s\in \Sigma$ it holds that $\oh{s}\mS_i=\vone$ if and only if $i=\pi_1(s)$ and it is $\vzero$ otherwise.
Now, consider the vector $\vv_{(q,s)}$
\[
\vv_{(q,s)} = [\ \oh{q}+\oh{s}\mS_1, \oh{q}+\oh{s}\mS_2, \ldots, \oh{q}+\oh{s}\mS_{|\Sigma|}\ ]
\]
We first note that for every $i\in\{1,\ldots,|\Sigma|\}$, if $i\neq \pi_1(s)$ then
$\oh{q}+\oh{s}\mS_i=\oh{q}+\vzero=\oh{q}$.
Moreover $\oh{q}+\oh{s}\mS_{\pi_1(s)}=\oh{q}+\vone$ is a vector that has a $2$ exactly at index $\pi_2(q)$, 
and it is $1$ in all other positions.
Thus, the vector $\vv_{(q,s)}$ has a $2$ exactly at position
$(\pi_1(s)-1)|Q|+\pi_2(q)$ and it is either $0$ or $1$ in every other position.
Now, lets denote by $\vo$ a vector in $\Q^{|Q||\Sigma|}$ that has a $1$ in every position
and consider the following affine transformation
\begin{equation}
g_1([\ \oh{q},\oh{s}\ ])=\vv_{(q,s)} - \vo. \label{eq:cross}
\end{equation}
Vector $g_1([\ \oh{q},\oh{s}\ ])$ has a $1$ only at position $(\pi_1(s)-1)|Q|+\pi_2(q)=\pi(q,s)$ and it is less than or equal to $0$
in every other position.
Thus, to construct $f_1(\cdot)$ we apply the piecewise-linear sigmoidal activation $\sigma(\cdot)$ (see Equation~(\ref{eq:sigmoid}))
to obtain
\[
f_1([\ \oh{q},\oh{s}\ ])=\sigma(g_1([\ \oh{q},\oh{s}\ ]))=\sigma(\vv_{(q,s)} - \vo)=\oh{(q,s)},
\]
which is what we wanted.

Now, to show (2), lets denote by $\mM^\delta$ a matrix of dimensions $(|Q||\Sigma|)\times(2|Q||\Sigma|)$ constructed as follows.
For $(q,s)\in Q\times \Sigma$, if $\delta(q,s)=(p,r,m)$ then $\mM^\delta$ has a $1$ at position $(\pi(q,s),\pi'(p,r,m))$
and it has a $0$ in every other position, that is
\[
\mM^\delta_{\pi(q,s),:} = \oh{(p,r,m)} = \oh{\delta(q,s)}.
\]
It is straightforward to see that $\oh{(q,s)}\mM^\delta = \oh{\delta(q,s)}$,
and thus we can define $f_2(\cdot)$ as 
\[
f_2(\oh{(q,s)})=\oh{(q,s)}\mM^\delta=\oh{\delta(q,s)}.
\]

To show (3), consider the matrix $\mA$ of dimensions $(2|Q||\Sigma|)\times (|Q|+|\Sigma|+1)$ such that 
\[
\mA_{\pi'(q,s,m),:}=[\ \oh{q}, \oh{s}, m\ ].
\]
Then we define $f_3(\cdot)$ as
\[
f_3(\oh{(q,s,m)}) = \oh{(q,s,m)}\mA=[\ \oh{q}, \oh{s}, m\ ].
\]

We are now ready to begin with the  proof  of the lemma. Recall that $\va^1_i$ is given by
\begin{equation*}
\begin{array}{rcllr}
\va^1_{i} & = & [ 
& \oh{q^{(i)}},\oh{s^{(i)}},m^{(i-1)}, \\
&&& \sz, \\
&&& \oh{\alpha^{(i+1)}},\beta^{(i+1)},\vzs,0, \\
&&& 1,(i+1),1/(i+1),1/(i+1)^2 & ] 
\end{array}
\end{equation*}
We need to construct a function $O_1:\Q^d\to\Q^d$ such that
\begin{equation*}
\begin{array}{rcllr}
O_1(\va^1_i) & = & [ 
& -\oh{q^{(i)}},-\oh{s^{(i)}},-m^{(i-1)}, \\
&&& \oh{q^{(i+1)}},\oh{v^{(i)}},m^{(i)},m^{(i-1)},0,0 \\
&&& \sz, \\
&&& \sz & ] 
\end{array}
\end{equation*}
We first use function $h_1(\cdot)$ that works as follows. Lets denote by $\hat{m}^{(i-1)}$ the value $\frac{1}{2}m^{(i-1)}+\frac{1}{2}$.
Note that $\hat{m}^{(i-1)}$ is $0$ if $m^{(i-1)}=-1$, it is $\frac{1}{2}$ if $m^{(i-1)}=0$ and it is $1$ if $m^{(i-1)}=1$.
We use  this transformation just to represent ${m}^{(i-1)}$ with a value between $0$ and $1$.
Now, consider $h_1(\va^1_i)$ defined by
\[
\begin{array}{rcl}
h_1(\va^1_i) & = & [\ \oh{q^{(i)}},\oh{s^{(i)}},\hat{m}^{(i-1)}, g_1([ \oh{q^{(i)}}, \oh{s^{(i)}}])\ ]
\end{array}
\]
where $g_1(\cdot)$ is the function defined above in Equation~(\ref{eq:cross}). 
It is clear that $h_1(\cdot)$ is an affine transformation.
Moreover, we note that except for $g_1([ \oh{q^{(i)}}, \oh{s^{(i)}}])$ all values in $h_1(\va^1_i)$ are between $0$ and $1$.
Thus if we apply function $\sigmoid(\cdot)$ to $h_1(\va^1_i)$ we obtain
\begin{eqnarray*}
\sigmoid(h_1(\va^1_i)) & = & [\ \oh{q^{(i)}},\oh{s^{(i)}},\hat{m}^{(i-1)}, \sigmoid(g_1([ \oh{q^{(i)}}, \oh{s^{(i)}}]))\ ] \\
& = & [\ \oh{q^{(i)}},\oh{s^{(i)}},\hat{m}^{(i-1)}, \oh{(q^{(i)},s^{(i)})}\ ]
\end{eqnarray*}
Then we can define $h_2(\cdot)$ such that
\begin{eqnarray*}
h_2(\sigmoid(h_1(\va^1_i))) & = &  [\ \oh{q^{(i)}},\oh{s^{(i)}},2\hat{m}^{(i-1)}-1, f_2(\oh{(q^{(i)},s^{(i)})})\ ] \\
& = & [\ \oh{q^{(i)}},\oh{s^{(i)}},{m}^{(i-1)}, \oh{\delta(q^{(i)},s^{(i)})}\ ] \\
& = & [\ \oh{q^{(i)}},\oh{s^{(i)}},{m}^{(i-1)}, \oh{(q^{(i+1)},v^{(i)},m^{(i)})}\ ] 
\end{eqnarray*}
Now we can define $h_3(\cdot)$ as
\begin{eqnarray*}
h_3(h_2(\sigmoid(h_1(\va^1_i)))) & = & [\ \oh{q^{(i)}},\oh{s^{(i)}},{m}^{(i-1)}, f_3(\oh{(q^{(i+1)},v^{(i)},m^{(i)})})\ ] \\
& = & [\ \oh{q^{(i)}},\oh{s^{(i)}},{m}^{(i-1)}, \oh{q^{(i+1)}},\oh{v^{(i)}},m^{(i)}\ ] 
\end{eqnarray*}
Finally we can apply a function $h_4(\cdot)$ to just reorder the values and multiply some components by $-1$ to complete our
construction
\begin{equation*}
\begin{array}{rcllr}
O_1(\va^1_i) \ =\ h_4(h_3(h_2(\sigmoid(h_1(\va^1_i))))) & = & [ 
& -\oh{q^{(i)}},-\oh{s^{(i)}},-m^{(i-1)}, \\
&&& \oh{q^{(i+1)}},\oh{v^{(i)}},m^{(i)},m^{(i-1)},0,0 \\
&&& \sz, \\
&&& \sz & ] 
\end{array}
\end{equation*}
We note that we applied a single non-linearity and all other functions are affine transformations. Thus $O_1(\cdot)$ can
be implemented with a two-layer feed-forward network.

\end{proof}

\begin{proof}[\bf Proof of Lemma \ref{lem:ci}]
Recall that $\vz^1_i$  is the following vector
\begin{equation*}
\begin{array}{rcllr}
\vz^1_i & = & [ 
& \sz, \\
&&& \oh{q^{(i+1)}},\oh{v^{(i)}},m^{(i)},m^{(i-1)},0,0, \\
&&& \oh{\alpha^{(i+1)}},\beta^{(i+1)},\vzs,0, \\
&&& 1,(i+1),1/(i+1),1/(i+1)^2 & ] 
\end{array}
\end{equation*}
We consider $Q_2:\Q^d\to \Q^d$ and $K_2:\Q^d\to \Q^d$ as trivial functions that for every input
produce an output vector composed of only $0$'s. 
Moreover, we consider $V_2:\Q^d\to \Q^d$ such that for every $j\in\{0,1,\ldots,i\}$
\begin{equation*}
\begin{array}{rcllr}
V_2(\vz^1_j)  & = & [ 
& \sz, \\
&&& \vzq,\vzs,0,0,m^{(j)},m^{(j-1)}, \\
&&& \sz, \\
&&& \sz & ] 
\end{array}
\end{equation*}
Then, since $K_2(\vz^1_j)$ is the vector with only zeros, then $\score_\varphi(Q_2(\vz^1_i), K_2(\vz^1_j))=0$ 
for every $j\in\{0,\ldots,i\}$.  Thus, we have that the attention 
$\Att(Q_2(\vz^1_i),K_2(\mZ^1_i),V_2(\mZ^1_i))$ that we need to compute
is just the average of all the vectors in $V_2(\mZ^1_i)=(V_2(\vz^1_0,\ldots,\vz^1_i)$,
that is
\begin{equation*}
\begin{array}{rcllr}
\Att(Q_2(\vz^1_i),K_2(\mZ^1_i),V_2(\mZ^1_i))  & = & & \frac{1}{(i+1)}\sum_{j=0}^{i}V_2(\vz^1_j) \\
& = &[ & \sz, \\
&&& \vzq,\vzs,0,0,\frac{1}{(i+1)} \sum_{j=0}^{i}m^{(j)},\frac{1}{(i+1)} \sum_{j=0}^{i}m^{(j-1)}, \\
&&& \sz, \\
&&& \sz & ] 
\end{array}
\end{equation*}
Then, since $m^{(0)}+\cdots+m^{(i)}=c^{(i+1)}$ and $m^{(-1)}+m^{(0)}+\cdots+m^{(i-1)}=c^{(i)}$ we have that
\begin{equation*}
\begin{array}{rcllr}
\Att(Q_2(\vz^1_i),K_2(\mZ^1_i),V_2(\mZ^1_i)) & = & [ 
& \sz, \\
&&& \vzq,\vzs,0,0,\frac{c^{(i+1)}}{(i+1)},\frac{c^{(i)}}{(i+1)}, \\
&&& \sz, \\
&&& \sz & ] 
\end{array}
\end{equation*}
which is exactly what we wanted to show.
\end{proof}

\begin{proof}[\bf Proof of Lemma \ref{lem:vli}]
Recall that $\vz^2_i$ is the following vector
\begin{equation*}
\begin{array}{rcllr}
\vz^2_i  & = & [ 
& \sz, \\
&&& \oh{q^{(i+1)}},\oh{v^{(i)}},m^{(i)},m^{(i-1)},\frac{c^{(i+1)}}{(i+1)},\frac{c^{(i)}}{(i+1)}, \\
&&& \oh{\alpha^{(i+1)}},\beta^{(i+1)},\vzs,0, \\
&&& 1,(i+1),1/(i+1),1/(i+1)^2 & ] 
\end{array}
\end{equation*}
We need to construct functions $Q_3(\cdot)$, $K_3(\cdot)$, and $V_3(\cdot)$ 
such that
\begin{equation*}
\begin{array}{rcllr}
\Att(Q_3(\vz^2_i),K_3(\mZ^2_i),V_3(\mZ^2_i)) & = & [ 
& \sz, \\ 
&&& \sz, \\
&&& \vzs,0,\oh{v^{(\ell(i+1))}},\ell(i+1), \\ 
&&& \sz & ] 
\end{array}
\end{equation*}

We first define the query function $Q_3:\Q^d\to \Q^d$ such that
\begin{equation*}
\begin{array}{rcllr}
Q_3(\vz^2_i)  & = & [ 
& \sz \\
&&& \sz, \\
&&& \sz, \\
&&& 0, \frac{c^{(i+1)}}{(i+1)}, \frac{1}{(i+1)}, \frac{1}{3(i+1)^2} & ] 
\end{array}
\end{equation*}
Now, for every $j\in\{0,1,\ldots,i\}$ we define $K_3:\Q^d\to \Q^d$ and $V_3:\Q^d\to \Q^d$ such that
\begin{equation*}
\begin{array}{rcllr}
K_3(\vz^2_j)  & = & [ 
& \sz \\
&&& \sz, \\
&&& \sz, \\
&&& 0, \frac{1}{(j+1)}, \frac{-c^{(j)}}{(j+1)}, \frac{1}{(j+1)^2} & ] 
\end{array}
\end{equation*}
\begin{equation*}
\begin{array}{rcllr}
V_3(\vz^2_j)  & = & [ 
& \sz, \\
&&& \sz, \\
&&& \vzs,0,\oh{v^{(j)}},j, \\
&&& \sz & ] 
\end{array}
\end{equation*}
It is clear that the three functions are linear transformations and thus they can be defined by feed-forward networks.
Consider now the attention $\Att(Q_3(\vz^2_i),K_3(\mZ^2_i),V_3(\mZ^2_i))$. 
In order to compute this value, and since we are  considering hard attention, 
we need to find the value $j\in\{0,1,\ldots,i\}$ that maximizes 
\[
\score_\varphi(Q_3(\vz^2_i), K_3(\vz^2_j))=\varphi(\langle Q_3(\vz^2_i), K_3(\vz^2_j)\rangle).
\]
Actually,  assumming that  such  value is unique,  lets say  $j^\star$, then we have that
\[
\Att(Q_3(\vz^2_i),K_3(\mZ^2_i),V_3(\mZ^2_i))=V_3(\vz^2_{j^\star}).
\]
We next show that given our definitions above, it  always holds that $j^\star=\ell(i+1)$
and then $V_3(\vz^2_{j^\star})$ is exactly the vector that we wanted to obtain.

To simplify the notation, we denote by $\chi_j^i$ the dot product $\langle Q_3(\vz^2_i), K_3(\vz^2_j)\rangle$.
Thus, we need to find $j^\star=\argmax_{j} \varphi(\chi_j^i)$.
Moreover, given the definition of $\varphi$ (see Equation~(\ref{eq:min_absolute}))we have that 
\[
\argmax_{j\in\{0,\ldots,i\}} \varphi(\chi_j^i)=\argmin_{j\in \{0,\ldots,i\}}|\chi_j^i|.
\]
Then, it is enough to prove that 
\[
\argmin_{j\in\{0,\ldots,i\}}|\chi_j^i|=\ell(i+1).
\]

Now, by our definition of $Q_3(\cdot)$ and $K_3(\cdot)$ we have that
\begin{eqnarray*}
\chi^i_j & = & \frac{c^{(i+1)}}{(i+1)(j+1)} - \frac{c^{(j)}}{(i+1)(j+1)} + \frac{1}{3(i+1)^2(j+1)^2} \\
& =  & \varepsilon_i\varepsilon_j\cdot \left(c^{(i+1)}-c^{(j)}+\frac{\varepsilon_i\varepsilon_j}{3}\right)
\end{eqnarray*}
where $\varepsilon_k=\frac{1}{(k+1)}$. 
We next prove the following auxiliary property.
\begin{equation}\label{eq:min}
\text{If $j_1$ is such that $c^{({j_1})}\neq c^{(i+1)}$  and
$j_2$ is such that $c^{({j_2})}= c^{(i+1)}$, then  $|\chi_{j_2}^i|<|\chi_{j_1}^i|$.}
\end{equation}

In order to prove (\ref{eq:min}), assume first that $j_1\in\{0,\ldots,i\}$ is such that $c^{(j_1)}\neq c^{(i+1)}$.
Then we have that $|c^{(i+1)}-c^{(j_1)}|\geq 1$ since $c^{(i+1)}$ and $c^{(j_1)}$ are integer values.
From this we have two possibilities for $\chi_{j_1}^i$:
\begin{itemize}
\item If $c^{(i+1)}-c^{({j_1})}\leq -1$, then 
\[
\chi_{j_1}^i\leq - \varepsilon_i\varepsilon_{j_1}+\frac{(\varepsilon_i\varepsilon_{j_1})^2}{3}.
\]
Notice that $1\geq \varepsilon_{j_1}\geq \varepsilon_i>0$. 
Then we have that $\varepsilon_i\varepsilon_{j_1} \geq (\varepsilon_i\varepsilon_{j_1})^2 > \frac{1}{3}(\varepsilon_i\varepsilon_{j_1})^2$,
and thus
\[
|\chi_{j_1}^i|\geq \varepsilon_i\varepsilon_{j_1}-\frac{(\varepsilon_i\varepsilon_{j_1})^2}{3}
\]
Finally, and using again that $1\geq \varepsilon_{j_1}\geq \varepsilon_i>0$, from the above equation we obtain that
\[
|\chi_{j_1}^i|\geq 
\varepsilon_i\varepsilon_i-\frac{(\varepsilon_i\varepsilon_{j_1})^2}{3} \geq
(\varepsilon_i)^2-\frac{(\varepsilon_i)^2}{3} \geq
\frac{2(\varepsilon_i)^2}{3}.
\]
\item If $c^{(i+1)}-c^{({j_1})}\geq 1$, then $\chi_{j_1}^i\geq \varepsilon_i\varepsilon_{j_1}+\frac{1}{3}(\varepsilon_i\varepsilon_{j_1})^2$
and since $1\geq \varepsilon_{j_1}\geq \varepsilon_i>0$ we obtain that $|\chi_{j_1}^i|\geq \varepsilon_i\varepsilon_{j_1}\geq \varepsilon_i\varepsilon_i\geq  \frac{2}{3}(\varepsilon_i)^2$.
\end{itemize}
Thus, we have that if $c^{({j_1})}\neq c^{(i+1)}$ then $|\chi_{j_1}^i|\geq \frac{2}{3}(\varepsilon_i)^2$.

Now assume $j_2\in\{0,\ldots,i\}$ is such that $c^{(j_2)}= c^{(i+1)}$.
In this case we have that
\[
|\chi_{j_2}^i|=\frac{(\varepsilon_i\varepsilon_{j_2})^2}{3}=\frac{(\varepsilon_i)^2(\varepsilon_{j_2})^2}{3}\leq \frac{(\varepsilon_i)^2}{3} .
\]

We showed that if $c^{({j_1})}\neq c^{(i+1)}$ then $|\chi_{j_1}^i|\geq \frac{2}{3}(\varepsilon_i)^2$
and if $c^{({j_2})}= c^{(i+1)}$ then $|\chi_{j_2}^i|\leq \frac{1}{3}(\varepsilon_i)^2$ which implies that
$|\chi_{j_2}^i|<|\chi_{j_1}^i|$.
This completes the proof of the property in (\ref{eq:min}).

We have now all the necessary to prove that $\argmin_{j}|\chi_j^i|=\ell(i+1)$.
Recall first that $\ell(i+1)$ is defined as
\[
\ell(i+1)=
\begin{cases}
\max\{j\mid j\leq i\text{ and }c^{(j)}=c^{(i+1)}\}  & \text{if there exists }j\leq i\text{ s.t. }c^{(j)}=c^{(i+1)},\\
i & \text{in other case.}
\end{cases}
\]
Assume first that there exists $j\leq i$ such that $c^{(j)}=c^{(i+1)}$.
By~(\ref{eq:min}) we know that
\begin{eqnarray*}
\argmin_{j\in\{0,\ldots,i\}}|\chi_j^i| & = & \argmin_{j\text{ s.t. } c^{(j)}=c^{(i+1)}}|\chi_j^i| \\
& = & \argmin_{j\text{ s.t. } c^{(j)}=c^{(i+1)}}\frac{(\varepsilon_i\varepsilon_{j})^2}{3} \\
& = & \argmin_{j\text{ s.t. } c^{(j)}=c^{(i+1)}}\varepsilon_{j}  \\
& = & \argmin_{j\text{ s.t. } c^{(j)}=c^{(i+1)}}\frac{1}{j+1}  \\
& = & \max_{j\text{ s.t. } c^{(j)}=c^{(i+1)}}j  \\
& = & \max\{j\mid c^{(j)}=c^{(i+1)}\}  \\
\end{eqnarray*}
On the contrary, assume that for every $j\leq i$ it holds that $c^{(j)}\neq c^{(i+1)}$. 
We will prove that in this case $|\chi_i^j|<|\chi_i^i|$ for every $j<i$ and thus 
$\argmin_{j\in\{0,\ldots,i\}}|\chi_j^i|=i$.
Now, since $c^{(j)}\neq c^{(i+1)}$ for every $j\leq i$, then $c^{(i+1)}$ is a cell that has never been visited before by $M$. 
Given that $M$ never makes a transition to the left of its initial cell, then cell $c^{(i+1)}$ is \emph{a cell to the right} of
every other previously visited cell. This implies that $c^{(i+1)}>c^{(j)}$ for every $j\leq i$.
Thus, for every $j\leq i$ we have $c^{(i+1)} - c^{(j)}\geq 1$.
This implies that $|\chi_j^i|=\chi_j^i\geq \varepsilon_i\varepsilon_{j}+\frac{1}{3}(\varepsilon_i\varepsilon_{j})^2$.
Moreover, notice that if $j< i$ then $\varepsilon_j>\varepsilon_i$ and thus, if $j<i$ we have that
\[
|\chi_j^i|\geq \varepsilon_i\varepsilon_{j}+\frac{(\varepsilon_i\varepsilon_{j})^2}{3}>
\varepsilon_i\varepsilon_i+\frac{(\varepsilon_i\varepsilon_{i})^2}{3}=|\chi_i^i|
\]
which implies that $\argmin_{j\in\{0,\ldots,i\}}|\chi_j^i|=i$.
Summing it up, we have shown that 
\[
\argmin_{j\in\{0,\ldots,i\}}|\chi_j^i| = \begin{cases}
\max\{j\mid c^{(j)}=c^{(i+1)}\}  & \text{if there exists }j\leq i\text{ s.t. }c^{(j)}=c^{(i+1)},\\
i & \text{in other case.}
\end{cases}
\]
which is exactly the definition of $\ell(i+1)$.
This completes the proof of the lemma.
\end{proof}

\begin{proof}[\bf Proof of Lemma \ref{lem:yr+1}]
Before going to the proof of Lemma-\ref{lem:yr+1} we prove the following helping result that allows us
to implement a particular type of \emph{if} statement with a feed-forward network.

\begin{lemma}\label{lem:if}
Let $\vx\in \{0,1\}^m$ and $\vy,\vz\in\{0,1\}^n$ be binary vectors, and 
let $b\in \{0,1\}$. There exists a two-layer feed-forward network $f:\Q^{m+2n+1}\to \Q^{m+n}$ such that
\[
f([\vx, \vy, \vz, b]) = 
\begin{cases} 
[\vx,\vy] & \text{if }b=0, \\
[\vx,\vz] & \text{if }b=1.
\end{cases} 
\]
\end{lemma}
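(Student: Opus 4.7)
The plan is to exploit the fact that all coordinates of $\vx$, $\vy$, $\vz$ and the scalar $b$ are in $\{0,1\}$, so the piecewise-linear sigmoid $\sigma$ of an integer expression in these variables takes only integer values, and in particular can be used to implement the logical AND. The first layer will use $\sigma$ to construct the ``gated'' versions $(1-b)\vy$ and $b\,\vz$, and the second (affine) layer will sum them.

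Concretely, the first observation is that for any $y,z,b\in\{0,1\}$,
\begin{equation*}
\sigma(y - b) \,=\, y(1-b), \qquad \sigma(z + b - 1) \,=\, z\,b,
\end{equation*}
which is verified by checking the four cases of $(y,b)$ and of $(z,b)$: the argument lies in $\{-1,0,1\}$ and the piecewise-linear sigmoid $\sigma$ defined in Equation~(\ref{eq:sigmoid}) acts as the identity there. Similarly $\sigma(x)=x$ for $x\in\{0,1\}$. Therefore, on input $[\vx,\vy,\vz,b]$, the first (hidden) layer computes, coordinatewise, the affine maps $x_j$, $y_i-b$ and $z_i+b-1$ and applies $\sigma$, yielding the hidden vector
\begin{equation*}
\vh \,=\, \bigl[\,\vx,\; (1-b)\vy,\; b\,\vz\,\bigr] \,\in\,\Q^{m+2n}.
\end{equation*}

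The second layer is purely linear: it outputs $[\vx,\; (1-b)\vy + b\,\vz]$, where the second block is obtained by adding the two length-$n$ groups of $\vh$ coordinatewise. When $b=0$ the sum equals $\vy$, and when $b=1$ it equals $\vz$, which is exactly the desired behavior. The overall map $f$ is thus an affine transformation, followed by $\sigma$, followed by another affine transformation, i.e.\ a two-layer feed-forward network.

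The only mild subtlety, which I would state explicitly in the final write-up, is that the lemma is used in Lemma~\ref{lem:yr+1} where the ``condition'' $b$ will be a Boolean expression computed from the entries of $\vz^3_r$ (e.g.\ comparing $\beta^{(r+1)}$ with $r+1$, or checking $\ell(r+1)=r$). Since those checks reduce to testing equality of two integers available in the vector, they themselves can be realized by a constant-depth feed-forward network using $\sigma$, and the output $b\in\{0,1\}$ can then be fed into the construction above. So the main obstacle, such as it is, is only bookkeeping: bundling several copies of the basic gate above in parallel and composing them with the integer-equality test, while verifying that every intermediate value stays in the range on which $\sigma$ acts as the identity.
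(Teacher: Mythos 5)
Your proof is correct and is essentially identical to the paper's: you build the same first affine layer sending $[\vx,\vy,\vz,b]$ to $[\vx,\,\vy-b\vone,\,\vz+b\vone-\vone]$, use the same observation that $\sigma$ acts as the identity on $\{0,1\}$ and clips $\{-1,0\}$ to $\{0\}$ (you just phrase the result multiplicatively as $(1-b)\vy$ and $b\vz$), and apply the same final affine layer summing the two $n$-blocks. The closing remark about how $b$ itself is produced in Lemma~\ref{lem:yr+1} is extraneous to this lemma's proof but does not affect its correctness.
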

\begin{proof} 
Consider the function $f_1:\Q^{m+2n+1}\to \Q^{m+2n}$ such that 
\[
f_1([\vx,\vy,\vz,b])=[\vx,\vy-b\vone,\vz + b\vone - \vone]
\]
where $\vone$ is the $n$-dimensional vector with only ones.
Thus, we have that
\[
f_1([\vx,\vy,\vz,b]) = 
\begin{cases} 
[\vx,\vy,\vz-\vone] & \text{if }b=0, \\
[\vx,\vy-\vone,\vz] & \text{if }b=1.
\end{cases} 
\]
Now, since $\vx$, $\vy$ and $\vz$ are all  binary vectors, it is easy  to obtain that
\[
\sigma(f_1([\vx,\vy,\vz,b])) = 
\begin{cases} 
[\vx,\vy,\vzero] & \text{if }b=0, \\
[\vx,\vzero,\vz] & \text{if }b=1.
\end{cases} 
\]
Finally, consider the function $f_2:\Q^{m+2n}\to \Q^{m+n}$ such that $f_2([\vx,\vy,\vz])=[\vx,\vy+\vz]$.
Then we have that
\[
f_2(\sigma(f_1([\vx,\vy,\vz,b]))) = 
\begin{cases} 
[\vx,\vy] & \text{if }b=0, \\
[\vx,\vz] & \text{if }b=1.
\end{cases} 
\]
We note that  $f_1(\cdot)$ and $f_2(\cdot)$ are affine transformations, and thus $f(\cdot)=f_2(\sigma(f_1(\cdot)))$ is a two-layer 
feed-forward network. This completes our proof.
\end{proof}

We can now continue with the proof of Lemma~\ref{lem:yr+1}.
Recall that $\vz^3_r$ is the following vector
\begin{equation*}
\begin{array}{rcllr}
\vz^3_r & = & [ 
& \sz, \\
&&& \oh{q^{(r+1)}},\oh{v^{(r)}},m^{(r)},m^{(r-1)},\frac{c^{(r+1)}}{(r+1)},\frac{c^{(r)}}{(r+1)}, \\
&&& \oh{\alpha^{(r+1)}},\beta^{(r+1)},\oh{v^{(\ell(r+1))}},\ell(r+1), \\
&&& 1,(r+1),1/(r+1),1/(r+1)^2 & ] 
\end{array}
\end{equation*}
Lets denote by $\oh{m^{(r)}}$ a vector such that
\[
\oh{m^{(r)}}=
\begin{cases}
[1,0] & \text{if }m^{(r)}=1, \\
[0,1] & \text{if }m^{(r)}=-1.
\end{cases}
\]
We first consider the function $f_1(\cdot)$ such that
\[
f_1(\vz^3_r)=[\ \oh{q^{(r+1)}}, \oh{m^{(r)}}, \oh{\alpha^{(r+1)}}, (r+1)-\beta^{(r+1)} , \oh{v^{(\ell(r+1))}}, \oh{\#}, \ell(r+1)-(r-1)\ ]
\]
It is straightforward that $f_1(\cdot)$ can be  implemented as an affine transformation.
Just notice that $\oh{\#}$ is a fixed vector, $\ell(r+1)-(r-1)=\ell(r+1)-(r+1)+2$ and that $\oh{m^{(r)}}=[\frac{m^{(r)}}{2},\frac{-m^{(r)}}{2}]+[\frac{1}{2},\frac{1}{2}]$. 
Moreover, all values in $f_1(\vz^3_r)$ are binary values except for $(r+1)-\beta^{(r+1)}$ and $\ell(r+1)-(r-1)$.
Thus, if we apply  function $\sigmoid(\cdot)$ to $f_1(\vz^3_r)$ we obtain
\[
\sigmoid(f_1(\vz^3_r))=[\ \oh{q^{(r+1)}}, \oh{m^{(r)}}, \oh{\alpha^{(r+1)}}, b_1, \oh{v^{(\ell(r+1))}}, \oh{\#}, b_2\ ]
\]
where $b_1=\sigmoid((r+1)-\beta^{(r+1)})$ and $b_2=\sigmoid(\ell(r+1)-(r-1))$.
By the definition of $\beta^{(r+1)}$ we know that $\beta^{(r+1)}=r+1$ whenever $r+1\leq n$, and $\beta^{(r+1)}=n$ if $r+1> n$.
Thus we have that
\begin{eqnarray*}
b_1 & = & 
\begin{cases}
0 & \text{if }r+1 \leq n  \\
1 & \text{if }r+1 > n
\end{cases}
\end{eqnarray*}
For the case of $b_2$, since $\ell(r+1)\leq r$ we have that $b_2=1$ if $\ell(r+1)=r$ and it is $0$ otherwise, thus
\begin{eqnarray*}
b_2 & = & 
\begin{cases}
1 & \text{if }\ell(r+1) = r  \\
0 & \text{if }\ell(r+1) \neq r
\end{cases}
\end{eqnarray*}
Then, we can use the \emph{if} function in Lemma~\ref{lem:if} to implement a function $f_2(\cdot)$ such that
\[
f_2(\sigmoid(f_1(\vz^3_r))) =
\begin{cases}
[\ \oh{q^{(r+1)}}, \oh{m^{(r)}}, \oh{\alpha^{(r+1)}},b_1, \oh{v^{(\ell(r+1))}} \ ] & \text{if }b_2=0, \\
[\ \oh{q^{(r+1)}}, \oh{m^{(r)}}, \oh{\alpha^{(r+1)}},b_1, \oh{\#} \ ] & \text{if }b_2=1.
\end{cases}
\]
We can use again the \emph{if} function in Lemma~\ref{lem:if} to implement a function $f_3(\cdot)$ such that
\[
f_3(f_2(\sigmoid(f_1(\vz^3_r)))) =
\begin{cases}
[\ \oh{q^{(r+1)}}, \oh{m^{(r)}}, \oh{\alpha^{(r+1)}} \ ] & \text{if }b_2=0 \text{ and }b_1=0, \\
[\ \oh{q^{(r+1)}}, \oh{m^{(r)}}, \oh{v^{(\ell(r+1))}} \ ] & \text{if }b_2=0 \text{ and }b_1=1, \\
[\ \oh{q^{(r+1)}}, \oh{m^{(r)}}, \oh{\alpha^{(r+1)}} \ ] & \text{if }b_2=1 \text{ and }b_1=0, \\
[\ \oh{q^{(r+1)}}, \oh{m^{(r)}}, \oh{\#} \ ] & \text{if }b_2=1 \text{ and }b_1=1,
\end{cases}
\]
which can be rewritten as
\[
f_3(f_2(\sigmoid(f_1(\vz^3_r)))) =
\begin{cases}
[\ \oh{q^{(r+1)}}, \oh{m^{(r)}}, \oh{\alpha^{(r+1)}} \ ] & \text{if }r+1\leq n, \\
[\ \oh{q^{(r+1)}}, \oh{m^{(r)}}, \oh{\#} \ ] & \text{if }r+1>n\text{ and }\ell(r+1)=r, \\
[\ \oh{q^{(r+1)}}, \oh{m^{(r)}}, \oh{v^{(\ell(r+1))}} \ ] & \text{if }r+1>n\text{ and }\ell(r+1)\neq r.
\end{cases}
\]
From this, it is easy to prove that
\[
f_3(f_2(\sigmoid(f_1(\vz^3_r)))) = [\ \oh{q^{(r+1)}}, \oh{m^{(r)}}, \oh{s^{(r+1)}} \ ].
\]
This can be obtained from the following observation.
If $r+1\leq n$ then $\alpha^{(r+1)}=s_{r+1}=s^{(r+1)}$.
If $r+1>n$ and $\ell(r+1)=r$ then we know that $c^{(r+1)}$ is visited by $M$ for the first time at time $r+1$ and it is \emph{outside the original input}, which implies that $s^{(r+1)}=\#$.
Finally, If $r+1>n$ and $\ell(r+1)\neq r$, then we know that $c^{(r+1)}$ has been visited before at time $\ell(r+1)$, and
thus $s^{(r+1)}=v^{(\ell(r+1))}$.

The final piece of the proof is to just convert $\oh{m^{(r)}}$ back to its value $m^{(r)}$,
reorder the values and add $0$'s to obtain $\vy_{r+1}$. We do all this with a final linear transformation $f_4(\cdot)$ such that
\[
\begin{array}{rcllr}
f_4(f_3(f_2(\sigmoid(f_1(\vz^3_r))))) & = & [ 
& \oh{q^{(r+1)}},\oh{s^{(r+1)}},m^{(r)}, \\
&&& \sz, \\
&&& \sz, \\
&&& \sz & ] \\  \\
& = & & \vy_{r+1}
\end{array}
\]
which completes our proof.
\end{proof}

\bigskip
\bigskip

\newpage

\bigskip
\bigskip

\newpage
\section{Proofs for Section~\ref{sec:NeuralGPU}}
\subsection{Proof of Theorem~\ref{ngpu-turing-complete}}
The formulas of the Neural GPU in detail are as follows (with $\tS^0$ the initial input tensor):
\begin{eqnarray*}
\tU^t & = & U(\tS^{t-1}) \\
\tR^t & = & R(\tS^{t-1}) \\
\tS^{t} & = & \tU^t\odot \tS^{t-1}\, +\, (\tens{1} - \tU)\odot F(\tR^t\odot\tS^{t-1})
\end{eqnarray*}
With $U(\cdot)$, $R(\cdot)$, and $F(\cdot)$ defined as
\begin{eqnarray*}
U(\tX) & = & f_U(\tK^U*\tX + \tB^U)\\
R(\tX) & = & f_R(\tK^R*\tX + \tB^R)\\
F(\tX) & = & f_F(\tK^F*\tX + \tB^F)
\end{eqnarray*}

Consider now an RNN encoder-decoder $N$ of dimension $d$ and composed of the equations 
\begin{eqnarray*}
\vh_{i} & = & \sigma(\vx_i\mW+\vh_{i-1}\mV) \\
\vg_{t} & = & \sigma(\vg_{t-1}\mU)
\end{eqnarray*} 
with $\vh_0=\vzero$ and $\vg_0=\vh_n$ where $n$ is the length of the input.

\subsection*{Constructing the Neural GPU to simulate $N$}

We construct a Neural GPU network $\NGPU$ that simulates $N$ as follows.
Assume that the input of $N$ is $\mX=(\vx_1,\ldots,\vx_n)$. Then we first construct
the sequence $\mX'=(\vx_1',\ldots,\vx_n')$ such that $\vx_i'=[\vx_i,\vzero,\vzero,1,1,0]$ with $\vzero\in \sQ^d$
the vector with all values as $0$.
Notice that $\vx_i'\in \sQ^{3d+3}$, moreover it is straightforward that if $\vx_i$ was constructed from an embedding
function $f:\Sigma\to\sQ^d$ applied to a symbol $a\in \Sigma$, 
then $\vx_i'$ can also be constructed with an embedding function $f':\Sigma\to\sQ^{3d+3}$
such that $f'(a)=[f(a),\vzero,\vzero,1,1,0]$.

We consider an input tensor $\tS\in \sQ^{n\times 1\times 3d+3}$ such that for every $i\in\{1,\ldots,n\}$
it holds that $\tS_{i,1,:}=\vx_i'=[\vx_i,\vzero,\vzero,1,1,0]$.
Notice that since we picked $w=1$, our tensor $\tS$ is actually a $2D$ grid. Our proof shows that a bi-dimensional tensor is enough for simulating an RNN.

We now describe how to construct the kernel banks $\tK^U$, $\tK^R$ and $\tK^F$ of shape $(2,1,3d+3,3d+3)$. 
Notice that for each kernel $\tK^X$ we essentially have to define two matrices $\tK^X_{1,1,:,:}$ and $\tK^X_{2,1,:,:}$
each one of dimension $(3d+3)\times (3d+3)$.
We begin by defining every matrix in $\tK^F$ as block matrices. When defining the matrices, all blank spaces are considered to be $0$.
\[
\begin{array}{c}
\tK^F_{1,1,:,:}  = 
\left[
\begin{array}{c|c|c|c}
 &  &  &    \\ \hline
\mV & \mV &     \\ \hline
 & & \phantom{\mV}   \\ \hline
 & & & \mF_{1} \\
\end{array}
\right] 
\\
\\
\tK^F_{2,1,:,:}  = 
\left[
\begin{array}{c|c|c|c}
\mW & \mW &  &    \\ \hline
 &  & \mU    \\ \hline
 & & \mU   \\ \hline
 &  & & \mF_{2} 
\end{array}
\right] 
\end{array}
\]
where $\mF_1$ and $\mF_2$ are $3\times 3$ matrices defined by
\[
\begin{array}{cc}
\mF_1=
\left[
\begin{array}{ccc}
1 & 0 & 0     \\ 
0 & 0 & 0    \\ 
0 & 0 & 0  
\end{array}
\right] &
\mF_2=
\left[
\begin{array}{ccc}
0 & 1 & 0     \\ 
0 & 0 & 0    \\ 
0 & 0 & 0 \\ 
\end{array}
\right]
\end{array}
\]

Tensors $\tK^U$ and $\tK^R$ are considerable simpler. For the case of $\tK^U$ we have
\[
\begin{array}{c}
\tK^U_{1,1,:,:}  = 
\left[
\begin{array}{c|c|c|c}
\phantom{\mV} & \phantom{\mV} & \phantom{\mV} &    \\ \hline
 & &     \\ \hline
 & &    \\ \hline
 \mA & \mA & & \mU_{1} 
\end{array}
\right] 
\\
\\
\tK^U_{2,1,:,:}  = 
\left[
\begin{array}{c|c|c|c}
\phantom{\mV} & \phantom{\mV} & \phantom{\mV} &    \\ \hline
 & &     \\ \hline
 & &    \\ \hline
 &  & \mA & \mU_{2} 
\end{array}
\right] 
\end{array}
\]
where $\mU_1$ and $\mU_2$ are $3\times 3$ matrices defined by
\[
\begin{array}{cc}
\mU_1=
\left[
\begin{array}{ccc}
1 & 1 & 0     \\ 
0 & 0 & 0    \\ 
0 & 0 & 0  
\end{array}
\right] &
\mU_2=
\left[
\begin{array}{ccc}
0 & 0 & 1     \\ 
0 & 0 & 0    \\ 
0 & 0 & 0 \\ 
\end{array}
\right]
\end{array}
\]
and where $\mA$ is the $3\times d$ matrix defined by 
\[
\mA=
\left[
\begin{array}{cccc}
1 & 1 & \cdots & 1     \\ 
0 & 0 & \cdots & 0    \\ 
0 & 0 & \cdots & 0    
\end{array}
\right] 
\]
Finally, we define $\tK^R$ as
\[
\begin{array}{c}
\tK^R_{1,1,:,:}  = 
\left[
\begin{array}{c|c|c|c}
\phantom{\mV} & \phantom{\mV} & \phantom{\mV} &    \\ \hline
 & &     \\ \hline
 & &    \\ \hline
 & & & \phantom{U} 
\end{array}
\right] 
\\
\\
\tK^R_{2,1,:,:}  = 
\left[
\begin{array}{c|c|c|c}
\phantom{\mV} & \phantom{\mV} & \phantom{\mV} &    \\ \hline
 & &     \\ \hline
 & &    \\ \hline
 \mA & \mB &  & \mR_{2} 
\end{array}
\right] 
\end{array}
\]
where $\mR_2$ is the $3\times 3$ matrix defined by
\[
\mR_2=
\left[
\begin{array}{ccc}
1 & 0 & 0     \\ 
0 & 1 & 0    \\ 
0 & 0 & 0  
\end{array}
\right] 
\]
and where $\mB$ is the $3\times d$ matrix defined by 
\[
\mB=
\left[
\begin{array}{cccc}
0 & 0 & \cdots & 0    \\ 
1 & 1 & \cdots & 1     \\ 
0 & 0 & \cdots & 0    
\end{array}
\right] 
\]

The bias tensors $\tB^U$ and $\tB^F$ are $\tens{0}$ (the tensor with all values $0$).
Finally, to construct tensor $\tB^R$ we consider the matrix $\mD$ of dimension $1\times(3d+3)$ such that
\begin{equation*}
\mD=\left[
\begin{array}{cccccc}
\bm{0}&\bm{0}&\bm{1}& 0 & 0 & 1 \\
\end{array}
\right]
\end{equation*}
Then we let $\tB^R_{i,:,:}=\mD$ for all $i$.
Finally, we consider $f_U=f_R=f_F=\sigmoid$.
The constructed Neural GPU is a uniform Neural GPU.

Before continuing with the proof we note that for every kernel $\tK^X$ and tensor $\tS$ we have that
\[
(\tK^X*\tS)_{i,1,:} = \tS_{i-1,1,:}\tK^X_{1,1,:,:} + \tS_{i,1,:}\tK^X_{2,1,:,:}
\]

\subsection*{Correctness of the construction}

We now prove that the following properties hold for every $t\geq 0$:
\begin{eqnarray}\label{eq:to-prove}
\tS^{t}_{i,1,:} & = & 
\left\{
	\begin{array}{lr}
	[\vzero,\vzero,\bm{\alpha}^i_{t-i},0,0,0]&\;\;\;\;\text{ for }i<t \\
	{[}\vh_i,\vh_i,\vzero,0,1,0]&\;\;\;\;\text{ for }i= t \\
	{[}\vx_i,\bm{0},\vzero,1,1,0]&\;\;\;\;\text{ for }i>t 
\end{array}
\right. 
\end{eqnarray}
where $\bm{\alpha}^k_j$ is given by the recurrence $\bm{\alpha}^k_0=\vh_k$ and $\bm{\alpha}^k_{j} = \sigma(\bm{\alpha}^k_{j-1}\mU)$.
Notice that $\vg_j=\bm{\alpha}_j^n$.
That is, we are going to prove that our construction actually simulates $N$. 
By (\ref{eq:to-prove}) one can see that the intuition in our construction is 
to use the first $d$ components to simulate the encoder part, the next $d$ components
to communicate data between the encoder and decoder simulation, and the next $d$ components to simulate the decoder part.
The last three components are needed as gadgets for the gates to actually simulate a sequencial read of the input, and to 
ensure that the hidden state of the encoder and decoder are updated properly.

We prove the above statement by induction in $t$. First notice that the property trivially holds for $\tS^0$.
Now assume that this holds for $t-1$ and lets prove it for $t$.
We know that $\tU^t$ is computed as
\[
\tU^t = \sigmoid(\tK^U*\tS^{t-1}+\tB^U)=\sigmoid(\tK^U*\tS^{t-1})
\]
Thus we have that:
\begin{eqnarray*}
\tU^t_{i,1,:} & = & \sigmoid((\tK^U*\tS^{t-1})_{i,1,:}) \\
& = & \sigmoid(\tS^{t-1}_{i-1,1,:}\tK^U_{1,1,:,:} + \tS^{t-1}_{i,1,:}\tK^U_{2,1,:,:}) 
\end{eqnarray*}
By the induction hypothesis we have
\begin{eqnarray}\label{eq:HI}
\tS^{t-1}_{i,1,:} & = & 
\left\{
	\begin{array}{lr}
	[\vzero,\vzero,\bm{\alpha}_{t-1-i}^i,0,0,0]&\;\;\;\;\text{ for }i<t-1 \\
	{[}\vh_i,\vh_i,\vzero,0,1,0]&\;\;\;\;\text{ for }i= t-1 \\
	{[}\vx_i,\vzero,\vzero,1,1,0]&\;\;\;\;\text{ for }i>t-1 
\end{array}
\right. 
\end{eqnarray}
Now, notice that $\tK^U_{1,1,:,:}$ and $\tK^U_{2,1,:,:}$ are not zero only in its three last rows, thus we can focus on the three
last components of the vectors in $\tS^{t-1}$, and then we can compute $\tU^t_{i,1,:}$ as 
\begin{eqnarray*}
\tU^t_{i,1,:}  
& = & 
\left\{
	\begin{array}{ll}
	\sigmoid([\_,\_,\_,0,0,0]\tK^U_{1,1,:,:} + [\_,\_,\_,0,0,0]\tK^U_{2,1,:,:})&\;\;\;\;\text{ for }i< t-1 \\ 
	\sigmoid([\_,\_,\_,0,0,0]\tK^U_{1,1,:,:} + [\_,\_,\_,0,1,0]\tK^U_{2,1,:,:})&\;\;\;\;\text{ for }i = t-1 \\ 
	\sigmoid([\_,\_,\_,0,1,0]\tK^U_{1,1,:,:} + [\_,\_,\_,1,1,0]\tK^U_{2,1,:,:})&\;\;\;\;\text{ for }i = t \\ 
	\sigmoid([\_,\_,\_,1,1,0]\tK^U_{1,1,:,:} + [\_,\_,\_,1,1,0]\tK^U_{2,1,:,:})&\;\;\;\;\text{ for }i > t
\end{array}
\right. \\
& = & 
\left\{
	\begin{array}{ll}
	\sigmoid([\vzero,\vzero,\vzero,0,0,0])&\;\;\;\;\text{ for }i < t-1 \\ 
	\sigmoid({[}\vzero,\vzero,\vzero,0,0,0])&\;\;\;\;\text{ for }i = t-1 \\
	\sigmoid({[}\vzero,\vzero,\bm{1},0,0,1])&\;\;\;\;\text{ for }i = t \\ 
	\sigmoid({[}\bm{1},\bm{1},\bm{1},1,1,1])&\;\;\;\;\text{ for }i > t 
\end{array}
\right.\\
& = & 
\left\{
	\begin{array}{lr}
	{[}\vzero,\vzero,\vzero,0,0,0]&\;\;\;\;\text{ for }i < t \\
	{[}\vzero,\vzero,\bm{1},0,0,1]&\;\;\;\;\text{ for }i = t \\ 
	{[}\bm{1},\bm{1},\bm{1},1,1,1]&\;\;\;\;\text{ for }i > t 
\end{array}
\right.
\end{eqnarray*}
Now, for $\tR^t$ we have
\[
\tR^t = \sigmoid(\tK^R*\tS^{t-1}+\tB^R)
\]
and thus for $\tR^t_{i,1,:}$ we have 
\begin{eqnarray*}
\tR^t_{i,1,:} & = & \sigmoid((\tK^R*\tS^{t-1})_{i,1,:}+\tB^R_{i,1,:}) \\
& = & \sigmoid(\tS^{t-1}_{i-1,1,:}\tK^R_{1,1,:,:} + \tS^{t-1}_{i,1,:}\tK^R_{2,1,:,:} +\tB^R_{i,1,:}) \\
& = & \sigmoid(\tS^{t-1}_{i,1,:}\tK^R_{2,1,:,:} +\tB^R_{i,1,:}) \\
& = & \sigmoid(\tS^{t-1}_{i,1,:}\tK^R_{2,1,:,:} + [\vzero,\vzero,\bm{1},0,0,1]) 
\end{eqnarray*}
where we deleted the term with $\tK^R_{1,1,:,:}$ since it is the null matrix.
Then by using the definition of $\tS^{t-1}_{i,1,:}$ above (Equation~(\ref{eq:HI})) we have
\begin{eqnarray*}
\tR^t_{i,1,:} & = & 
\left\{
	\begin{array}{ll}
	\sigmoid([\_,\_,\_,0,0,0]\tK^R_{2,1,:,:}+[\vzero,\vzero,\bm{1},0,0,1])&\;\;\;\;\text{ for }i< t-1 \\ 
	\sigmoid([\_,\_,\_,0,1,0]\tK^R_{2,1,:,:}+[\vzero,\vzero,\bm{1},0,0,1])&\;\;\;\;\text{ for }i = t-1 \\ 
	\sigmoid([\_,\_,\_,1,1,0]\tK^R_{2,1,:,:}+[\vzero,\vzero,\bm{1},0,0,1])&\;\;\;\;\text{ for }i = t \\ 
	\sigmoid([\_,\_,\_,1,1,0]\tK^R_{2,1,:,:}+[\vzero,\vzero,\bm{1},0,0,1])&\;\;\;\;\text{ for }i > t
\end{array}
\right. \\
& = & 
\left\{
	\begin{array}{ll}
	\sigmoid([\vzero,\vzero,\bm{1},0,0,1])&\;\;\;\;\text{ for }i < t-1 \\ 
	\sigmoid([\vzero,\bm{1},\bm{1},0,1,1])&\;\;\;\;\text{ for }i = t-1 \\
	\sigmoid([\bm{1},\bm{1},\bm{1},1,1,1])&\;\;\;\;\text{ for }i = t \\ 
	\sigmoid([\bm{1},\bm{1},\bm{1},1,1,1])&\;\;\;\;\text{ for }i > t 
\end{array}
\right. \\
& = & 
\left\{
	\begin{array}{ll}
	[\vzero,\vzero,\bm{1},0,0,1]&\;\;\;\;\text{ for }i < t-1 \\ 
	{[}\vzero,\bm{1},\bm{1},0,1,1]&\;\;\;\;\text{ for }i = t-1 \\
	{[}\bm{1},\bm{1},\bm{1},1,1,1]&\;\;\;\;\text{ for }i \geq t \\ 
\end{array}
\right.
\end{eqnarray*}
We can now compute $\tS^{t}_{i,1,:}$. By the definition of $\tS^{t}$ we have
\[
\tS^{t}_{i,1,:}  =  \tU^t_{i,1,:} \odot \tS^{t-1}_{i,1,:} \, +\, (\tens{1}_{i,1,:} - \tU_{i,1,:} )\odot \sigma((\tK^F*(\tR^t \odot\tS^{t-1}))_{i,1,:})
\]
where we dropped $\tB^F$ that has only zeros.
First, by using what we already computed for $\tU^t_{i,1,:}$ we have that
\begin{eqnarray*}
\tS^{t}_{i,1,:} 
& = & 
\left\{
\begin{array}{lr}
	\sigma((\tK^F*(\tR^t \odot\tS^{t-1}))_{i,1,:}) & \text{ for }i< t \\
	{[}\vzero,\vzero,\bm{1},0,0,1]\odot\tS^{t-1}_{i,1,:} + [\bm{1},\bm{1},\vzero,1,1,0]\odot \sigma((\tK^F*(\tR^t \odot\tS^{t-1}))_{i,1,:}) & \text{ for }i=t \\
	\tS^{t-1}_{i,1,:} & \text{ for }i>t
\end{array}
\right.
\end{eqnarray*}
When $i=t$ we have that $\tS^{t-1}_{i,1,:}=[\vx_i,\vzero,\vzero,1,0,0]$ (Equation~(\ref{eq:HI})), 
thus ${[}\vzero,\vzero,\bm{1},0,0,1]\odot\tS^{t-1}_{i,1,:}=[\vzero,\vzero,\vzero,0,0,0]$. Then
\begin{eqnarray}\label{eq:almost}
\tS^{t}_{i,1,:} 
& = & 
\left\{
\begin{array}{lr}
	\sigma((\tK^F*(\tR^t \odot\tS^{t-1}))_{i,1,:}) & \text{ for }i< t \\
	{[}\bm{1},\bm{1},\vzero,1,1,0]\odot \sigma((\tK^F*(\tR^t \odot\tS^{t-1}))_{i,1,:}) & \text{ for }i=t \\
	{[}\vx_i,\bm{0},\vzero,1,1,0] & \text{ for }i>t
\end{array}
\right.
\end{eqnarray}

We are almost done with the inductive step, we only need to compute $\sigma((\tK^F*(\tR^t \odot\tS^{t-1}))_{i,1,:})$.
Given what we have for $\tR^t$ and $\tS^{t-1}$ we have that $\tR^t \odot\tS^{t-1}$ is 
\begin{eqnarray*}
(\tR^t \odot\tS^{t-1})_{i,1,:} 
& = & 
\left\{
	\begin{array}{ll}
	[\vzero,\vzero,\bm{1},0,0,1]\odot [\vzero,\vzero,\bm{\alpha}_{t-1-i}^i,0,0,0]&\;\;\;\;\text{ for }i < t-1 \\ 
	{[}\vzero,\bm{1},\bm{1},0,1,1]\odot {[}\vh_i,\vh_i,\vzero,0,1,0]&\;\;\;\;\text{ for }i = t-1 \\
	{[}\bm{1},\bm{1},\bm{1},1,1,1]\odot {[}\vx_i,\vzero,\vzero,1,1,0] &\;\;\;\;\text{ for }i \geq t 
\end{array}
\right. \\
& = & 
\left\{
	\begin{array}{ll}
	[\vzero,\vzero,\bm{\alpha}_{t-1-i}^i,0,0,0]&\;\;\;\;\text{ for }i < t-1 \\ 
	{[}\vzero,\vh_i,\vzero,0,1,0] &\;\;\;\;\text{ for }i = t-1 \\
	{[}\vx_i,\vzero,\vzero,1,1,0]&\;\;\;\;\text{ for }i \geq t \\ 
\end{array}
\right. 
\end{eqnarray*}
Lets $\tT^t=\sigmoid(\tK^F*(\tR^t \odot\tS^{t-1}))$.
Notice that from Equation~(\ref{eq:almost}) we actually need to know the values in $\tT^t_{i,1,:}$ only for $i\leq t$.
Now we have that
\begin{eqnarray*}
\tT^t_{i,1,:} & = & \sigma((\tK^F*(\tR^t \odot\tS^{t-1}))_{i,1,:}) \\
& = & \sigmoid((\tR^t \odot\tS^{t-1}))_{i,1,:}\tK^F_{1,1,:,:} + (\tR^t \odot\tS^{t-1}))_{i,1,:}\tK^F_{2,1,:,:}) \\
& = & 
\left\{
	\begin{array}{ll}
	\sigmoid([\vzero,\vzero,\bm{\alpha}_{t-i}^{i-1},0,0,0]\tK^F_{1,1,:,:} + [\vzero,\vzero,\bm{\alpha}_{t-1-i}^i,0,0,0]\tK^F_{2,1,:,:}) &\;\;\;\;\text{ for }i < t-1 \\ 
	\sigmoid({[}\vzero,\vzero,\bm{\alpha}_{t-i}^{i-1},0,0,0]\tK^F_{1,1,:,:} + {[}\vzero,\vh_i,\vzero,0,1,0]\tK^F_{2,1,:,:}) &\;\;\;\;\text{ for }i = t-1 \\
	\sigmoid({[}\vzero,\vh_{i-1},\vzero,0,1,0]\tK^F_{1,1,:,:} + {[}\vx_i,\vzero,\vzero,1,1,0]\tK^F_{2,1,:,:})&\;\;\;\;\text{ for }i = t \\ 
\end{array}
\right. \\
& = & 
\left\{
	\begin{array}{ll}
	\sigmoid([\vzero,\vzero,\bm{\alpha}_{t-1-i}^i\mU,0,0,0]) &\;\;\;\;\text{ for }i < t-1 \\ 
	\sigmoid({[}\vzero,\vzero,\vh_i\mU,0,0,0]) &\;\;\;\;\text{ for }i = t-1 \\
	\sigmoid({[}\vx_i\mW+\vh_{i-1}\mV,\vx_i\mW+\vh_{i-1}\mV,\vzero,0,1,0])&\;\;\;\;\text{ for }i = t \\ 
\end{array}
\right. \\
& = & 
\left\{
	\begin{array}{ll}
	[\vzero,\vzero,\bm{\alpha}_{t-i}^i,0,0,0] &\;\;\;\;\text{ for }i < t-1 \\ 
	{[}\vzero,\vzero,\bm{\alpha}_1^i,0,0,0] &\;\;\;\;\text{ for }i = t-1 \\
	{[}\vh_i,\vh_i,\vzero,0,1,0]&\;\;\;\;\text{ for }i = t \\ 
\end{array}
\right. \\
& = & 
\left\{
	\begin{array}{ll}
	[\vzero,\vzero,\bm{\alpha}_{t-i}^i,0,0,0] &\;\;\;\;\text{ for }i < t \\ 
	{[}\vh_i,\vh_i,\vzero,0,1,0]&\;\;\;\;\text{ for }i = t \\ 
\end{array}
\right.
\end{eqnarray*}
Putting the value of $\tT^t_{i,1,:}$ in Equation~(\ref{eq:almost}) we obtain
\begin{eqnarray*}
\tS^{t}_{i,1,:}
& = & 
\left\{
\begin{array}{lr}
	\tT^t_{i,1,:} & \text{ for }i< t \\
	{[}\bm{1},\bm{1},\vzero,1,1,0]\odot \tT^t_{i,1,:} & \text{ for }i=t \\
	{[}\vx_i,\bm{0},\vzero,1,1,0] & \text{ for }i>t
\end{array}
\right. \\
& = & 
\left\{
\begin{array}{lr}
	[\vzero,\vzero,\bm{\alpha}_{t-i}^i,0,0,0] & \text{ for }i< t \\
	{[}\vh_i,\vh_i,\vzero,0,1,0] & \text{ for }i=t \\
	{[}\vx_i,\bm{0},\vzero,1,1,0] & \text{ for }i>t
\end{array}
\right. 
\end{eqnarray*}
which is exactly what we needed to prove (Equation~(\ref{eq:to-prove})).

Now, lets focus on $\tS^{n+t}_{n,1,:}$ for $t\geq 1$. By what we have just proved, we obtain that
\[
\tS^{n+t}_{n,1,:} = [\vzero,\vzero,\bm{\alpha}_{t}^n,0,0,0]=[\vzero,\vzero,\vg_{t},0,0,0]
\]
which is the decoder part of the RNN $N$. 
Thus, we can simulate the complete network $N$ with a Neural GPU.

\subsection{Proof of Proposition~\ref{prop:ngpu-not-tc}}
We first prove the following claim:
Assume that $\tS\in \sQ^{h\times w\times d}$ is a tensor that satisfies the following property:
there exists a $p\geq 1$ that divides $h$ and such that, for every $i\in \{1,2,\ldots,h-p\}$ it holds that
\[
\tS_{i,:,:} = \tS_{i+p,:,:}.
\]
Given that we will be considering circular convolutions, then we have that
for every $\ell\leq 0$ we have that $\tS_{\ell,j,:}=\tS_{h+\ell,j,:}$
and for every $\ell>h$ we have that $\tS_{\ell,j,:}=\tS_{h-\ell,j,:}$.
With this we have that for every $i\in \sN$ it holds that
$\tS_{i,:,:} = \tS_{i+p,:,:}$ that is, we do not need to restrict to values in $i\in \{1,2,\ldots,h-p\}$. 

Now lets $\tK$ be an arbitrary kernel bank of shape $(k_H,k_W,d,d)$.
Let $\tT=\tK\circledast \tS$ where $\circledast$ denotes the circular convolution.
We prove next that 
\[
\tT_{i,:,:} = \tT_{i+p,:,:}
\]
for every $i$.
This is a simple fact that follows from the way in which the convolution is defined.
We use the notation in the body of the paper for $\tT=\tK\circledast\tS$, that is,
we denote by $\vs_{ij}$ the vector $\tS_{i,j,:}$ and $\mK_{ij}$ the matrix $\tK_{i,j,:,:}$.
Notice that $\vs_{ij}=\vs_{i+p,j}$ for every $i\in \N$.
Now the circular convolution is
\[
\tT_{i,j,:}=(\tK\circledast\tS)_{i,j,:}=\sum_{u=1}^{k_H}\sum_{v=1}^{k_W} \, \vs_{i+\Delta_1(u),j+\Delta_2(v)}\, \mK_{uv}
\]
where $\Delta_1(u)=u-\lfloor{k_H/2}\rfloor -1$ and $\Delta_2(v)=v-\lfloor{k_W/2}\rfloor -1$.
Then, given that $\vs_{ij}=\vs_{i+p,j}$ for every $i\in \N$ we have that
\[
\tT_{i,j,:}=(\tK\circledast\tS)_{i,j,:}=\sum_{u=1}^{k_H}\sum_{v=1}^{k_W} \, \vs_{i+\Delta_1(u)+p,j+\Delta_2(v)}\, \mK_{uv}=
(\tK\circledast\tS)_{i+p,j,:}=\tT_{i+p,j,:}
\]
and then, $\tT_{i,:,:} = \tT_{i+p,:,:}$.

Consider now an arbitrary uniform Neural GPU that processes tensor $\tS$ above, 
and assume that $\tS^1,\tS^2,\ldots, \tS^r$ is the sequence produced by it. 
Next we prove that for every $t$ and for every $i$ it holds that
$\tS^t_{i,:,:}=\tS^t_{i+p,:,:}$. 
We prove it by induction in $t$. For the case $\tS^0$ it holds by definition.
Thus assume that $\tS^{t-1}$ satisfies the property.
Let
\begin{eqnarray*}
\tU^t & = & f_U(\tK^U*\tS^{t-1} + \tB^U) \\
\tR^t & = & f_R(\tK^R*\tS^{t-1} + \tB^R) \\
\tS^{t} & = & \tU^t\odot \tS^{t-1}\, +\, (\tens{1} - \tU)\odot f_F(\tK^F*(\tR^t\odot\tS^{t-1}) + \tB^F)
\end{eqnarray*}
Since we are considering uniform Neural GPUs, we know that there exist three matrices $\mB^U$, $\mB^R$ and $\mB^F$ such that
for every $i$ it holds that $\tB^U_{i,:,:}=\mB^U$, $\tB^R_{i,:,:}=\mB^R$, and $\tB^F_{i,:,:}=\mB^F$. 
It is easy to prove that $\tU^t_{i,:,:}=\tU^t_{i+p,:,:}$.
First note that by inductive hypothesis, we have that $\tS^{t-1}_{i,:,:}=\tS^{t-1}_{i+p,:,:}$ and thus by 
the property proved above we have that $(\tK^U*\tS^{t-1})_{i,:,:}=(\tK^U*\tS^{t-1})_{i+p,:,:}$.
Thus we have that
\[
\tU^t_{i,:,:}  = f_U((\tK^U*\tS^{t-1})_{i,:,:} + \mB^U) = f_U((\tK^U*\tS^{t-1})_{i+p,:,:} + \mB^U)= \tU^t_{i+p,:,:}
\]
With a similar argument we can prove that $\tR^t_{i,:,:}=\tR^t_{i+p,:,:}$.
Moreover, notice that $(\tR^t\odot\tS^{t-1})_{i,:,:}=(\tR^t\odot\tS^{t-1})_{i+p,:,:}$,
and thus $(\tK^F*(\tR^t\odot\tS^{t-1}))_{i,:,:}=(\tK^F*(\tR^t\odot\tS^{t-1}))_{i+p,:,:}$.
With all this we finally we have that
\begin{eqnarray*}
\tS^{t}_{i,:,:} & = & \tU^t_{i,:,:}\odot \tS^{t-1}_{i,:,:}\, +\, (\tens{1}_{i,:,:} - \tU_{i,:,:})\odot 
f_F((\tK^F*(\tR^t\odot\tS^{t-1}))_{i,:,:} + \mB^F) \\
&=  & \tU^t_{i+p,:,:}\odot \tS^{t-1}_{i+p,:,:}\, +\, (\tens{1}_{i+p,:,:} - \tU_{i+p,:,:})\odot 
f_F((\tK^F*(\tR^t\odot\tS^{t-1}))_{i+p,:,:} + \mB^F) \\
& = & \tS^{t}_{i+p,:,:}
\end{eqnarray*}
This completes the first part of the proof.

We have shown that if the input of a uniform neural GPU is periodic, then the output is also periodic.
We make a final observation. 
Let $N$ be a uniform Neural GPU, and $\tS\in \sQ^{kp\times w\times d}$ be a tensor such 
that $\tS_{i,:,:} = \tS_{i+p,:,:}$ for every $i$.
Moreover, let $\tT\in \sQ^{k'p\times w\times d}$ be a tensor such that $\tT_{i,:,:} = \tT_{i+p,:,:}$ 
for every $i$, and assume that $\tS_{1:p,:,:}=\tT_{1:p,:,:}$. 
Lets $\tS^1,\tS^2,\ldots$ and $\tT^1,\tT^2,\ldots$ be the sequences produced by $N$.
Then with a similar argument
as above it is easy to prove that for every $t$ it holds that $\tS^t_{1:p,:,:}=\tT^t_{1:p,:,:}$.

From this it is easy to prove that uniform Neural GPUs will no be able to recognize the length of periodic inputs.
Thus assume that there is a language recognizer $A$ defined by of a uniform neural GPU $N$ such that $L(A)$ contains
all strings of even length.
Assume that $u$ is an arbitrary string in $\Sigma$ such that $|u|=p$ with $p$ an odd number, and let $w=uu$ and $w'=uuu$.
Notice that $|w|=2p$ and thus $w\in L(A)$, but $|w'|=3p$ and thus $w'\not\in L(A)$.
    
Let $f:\Sigma\to \sQ^d$ and let $\mX=f(w)=(\vx_1,\vx_2,\ldots,\vx_{2p})$ and $\mX'=f(w')=(\vx_1',\vx_2',\ldots,\vx_{3p}')$.
Consider now the tensor $\tS\in\sQ^{2p\times w\times d}$ such that $\tS_{i,1,:}=\vx_i$ for $i\in\{1,\ldots,2p\}$, thus
$\tS_{i,:,:}=\tS_{i+p,:,:}$.
Similarly, consider $\tT\in \sQ^{3p\times w\times d}$ such that such that $\tT_{i,1,:}=\vx_i'$ for $i\in\{1,\ldots,3p\}$, and thus
$\tT_{i,:,:}=\tT_{i+p,:,:}$.
Notice that $\tS_{1:p,:,:}=\tT_{1:p,:,:}$ then by the property above we have that for every $t$
it holds that $\tS^t_{1:p,:,:}=\tT^t_{1:p,:,:}$. In particular, we have $\tS^t_{p,:,:}=\tT^t_{p,:,:}$.
We also know that $\tS^t_{p,:,:}=\tS^t_{2p,:,:}$ and that $\tT^t_{p,:,:}=\tT^t_{2p,:,:}=\tT^t_{3p,:,:}$.
Thus we have that for every $t$ it holds that $\tS^t_{2p,1,:}=\tT^t_{3p,1,:}$.
From this we conclude that the outputs of $N$ for both inputs $\mX$ and $\mX'$ are the same, and thus if
$A$ accepts $w$ then $A$ accepts $w'$ which is a contradiction.




\begin{thebibliography}{23}
\providecommand{\natexlab}[1]{#1}
\providecommand{\url}[1]{\texttt{#1}}
\expandafter\ifx\csname urlstyle\endcsname\relax
  \providecommand{\doi}[1]{doi: #1}\else
  \providecommand{\doi}{doi: \begingroup \urlstyle{rm}\Url}\fi

\bibitem[Bahdanau et~al.(2014)Bahdanau, Cho, and Bengio]{attention}
Dzmitry Bahdanau, Kyunghyun Cho, and Yoshua Bengio.
\newblock Neural machine translation by jointly learning to align and
  translate.
\newblock \emph{CoRR}, abs/1409.0473, 2014.
\newblock URL \url{http://arxiv.org/abs/1409.0473}.

\bibitem[Chen et~al.(2018)Chen, Gilroy, Maletti, May, and Knight]{RNNWeigthed}
Yining Chen, Sorcha Gilroy, Andreas Maletti, Jonathan May, and Kevin Knight.
\newblock Recurrent neural networks as weighted language recognizers.
\newblock In \emph{{NAACL-HLT} 2018}, pp.\  2261--2271, 2018.

\bibitem[Cho et~al.(2014)Cho, van Merrienboer, G{\"{u}}l{\c{c}}ehre, Bahdanau,
  Bougares, Schwenk, and Bengio]{GRU}
Kyunghyun Cho, Bart van Merrienboer, {\c{C}}aglar G{\"{u}}l{\c{c}}ehre, Dzmitry
  Bahdanau, Fethi Bougares, Holger Schwenk, and Yoshua Bengio.
\newblock Learning phrase representations using {RNN} encoder-decoder for
  statistical machine translation.
\newblock In \emph{Proceedings of the 2014 Conference on Empirical Methods in
  Natural Language Processing, {EMNLP}}, pp.\  1724--1734, 2014.

\bibitem[Cybenko(1989)]{UAT}
George Cybenko.
\newblock Approximation by superpositions of a sigmoidal function.
\newblock \emph{{MCSS}}, 2\penalty0 (4):\penalty0 303--314, 1989.
\newblock \doi{10.1007/BF02551274}.
\newblock URL \url{https://doi.org/10.1007/BF02551274}.

\bibitem[Dehghani et~al.(2018)Dehghani, Gouws, Vinyals, Uszkoreit, and
  Kaiser]{UniversalT}
Mostafa Dehghani, Stephan Gouws, Oriol Vinyals, Jakob Uszkoreit, and Lukasz
  Kaiser.
\newblock Universal transformers.
\newblock \emph{CoRR}, abs/1807.03819, 2018.
\newblock URL \url{https://arxiv.org/abs/1807.03819}.

\bibitem[Freivalds \& Liepins(2018)Freivalds and Liepins]{NeuralGPUImprove}
Karlis Freivalds and Renars Liepins.
\newblock Improving the neural {GPU} architecture for algorithm learning.
\newblock In \emph{Neural Abstract Machines \& Program Induction (NAMPI)},
  2018.

\bibitem[Graves et~al.(2014)Graves, Wayne, and Danihelka]{NeuralTM}
Alex Graves, Greg Wayne, and Ivo Danihelka.
\newblock Neural {Turing Machines}.
\newblock \emph{arXiv preprint arXiv:1410.5401}, 2014.

\bibitem[Grefenstette et~al.(2015)Grefenstette, Hermann, Suleyman, and
  Blunsom]{NeuralStack}
Edward Grefenstette, Karl~Moritz Hermann, Mustafa Suleyman, and Phil Blunsom.
\newblock Learning to transduce with unbounded memory.
\newblock In \emph{Advances in Neural Information Processing Systems}, pp.\
  1828--1836, 2015.

\bibitem[He et~al.(2016{\natexlab{a}})He, Zhang, Ren, and Sun]{Residual1}
Kaiming He, Xiangyu Zhang, Shaoqing Ren, and Jian Sun.
\newblock Deep residual learning for image recognition.
\newblock In \emph{2016 {IEEE} Conference on Computer Vision and Pattern
  Recognition, {CVPR} 2016, Las Vegas, NV, USA, June 27-30, 2016}, pp.\
  770--778, 2016{\natexlab{a}}.

\bibitem[He et~al.(2016{\natexlab{b}})He, Zhang, Ren, and Sun]{Residual2}
Kaiming He, Xiangyu Zhang, Shaoqing Ren, and Jian Sun.
\newblock Identity mappings in deep residual networks.
\newblock In \emph{Computer Vision - {ECCV} 2016 - 14th European Conference,
  Amsterdam, The Netherlands, October 11-14, 2016, Proceedings, Part {IV}},
  pp.\  630--645, 2016{\natexlab{b}}.

\bibitem[Joulin \& Mikolov(2015)Joulin and Mikolov]{StackRNN}
Armand Joulin and Tomas Mikolov.
\newblock Inferring algorithmic patterns with stack-augmented recurrent nets.
\newblock In \emph{Advances in neural information processing systems}, pp.\
  190--198, 2015.

\bibitem[Kaiser \& Sutskever(2016)Kaiser and Sutskever]{NeuralGPU}
Lukasz Kaiser and Ilya Sutskever.
\newblock Neural {GPUs} learn algorithms.
\newblock In \emph{ICLR}, 2016.

\bibitem[Kleene(1956)]{Kleene56}
S.~C. Kleene.
\newblock Representation of events in nerve nets and finite automata.
\newblock In Claude Shannon and John McCarthy (eds.), \emph{Automata Studies},
  pp.\  3--41. Princeton University Press, 1956.

\bibitem[Kurach et~al.(2016)Kurach, Andrychowicz, and Sutskever]{NeuralRAM}
Karol Kurach, Marcin Andrychowicz, and Ilya Sutskever.
\newblock Neural random-access machines.
\newblock In \emph{ICLR}, 2016.

\bibitem[McCulloch \& Pitts(1943)McCulloch and Pitts]{mcculloch43}
Warren McCulloch and Walter Pitts.
\newblock A logical calculus of ideas immanent in nervous activity.
\newblock \emph{Bulletin of Mathematical Biophysics}, 5:\penalty0 127--147,
  1943.

\bibitem[Ollinger(2012)]{UCellularAutomata}
Nicolas Ollinger.
\newblock Universalities in cellular automata.
\newblock In \emph{Handbook of Natural Computing}, pp.\  189--229. 2012.

\bibitem[Price et~al.(2016)Price, Zaremba, and Sutskever]{NeuralGPU2}
Eric Price, Wojciech Zaremba, and Ilya Sutskever.
\newblock Extensions and limitations of the neural {GPU}.
\newblock \emph{CoRR}, abs/1611.00736, 2016.
\newblock URL \url{http://arxiv.org/abs/1611.00736}.

\bibitem[Shaw et~al.(2018)Shaw, Uszkoreit, and Vaswani]{AttentionRelative}
Peter Shaw, Jakob Uszkoreit, and Ashish Vaswani.
\newblock Self-attention with relative position representations.
\newblock In \emph{NAACL-HLT}, pp.\  464--468, 2018.

\bibitem[Siegelmann \& Sontag(1992)Siegelmann and Sontag]{Siegelmann92}
Hava~T. Siegelmann and Eduardo~D. Sontag.
\newblock On the computational power of neural nets.
\newblock In \emph{Proceedings of the Fifth Annual {ACM} Conference on
  Computational Learning Theory, {COLT}}, pp.\  440--449, 1992.

\bibitem[Siegelmann \& Sontag(1995)Siegelmann and Sontag]{Siegelmann95}
Hava~T. Siegelmann and Eduardo~D. Sontag.
\newblock On the computational power of neural nets.
\newblock \emph{J. Comput. Syst. Sci.}, 50\penalty0 (1):\penalty0 132--150,
  1995.

\bibitem[{Smith III}(1971)]{CellularAutomata}
Alvy~Ray {Smith III}.
\newblock Simple computation-universal cellular spaces.
\newblock \emph{Journal of the ACM (JACM)}, 18\penalty0 (3):\penalty0 339--353,
  1971.

\bibitem[Vaswani et~al.(2017)Vaswani, Shazeer, Parmar, Uszkoreit, Jones, Gomez,
  Kaiser, and Polosukhin]{Transformer}
Ashish Vaswani, Noam Shazeer, Niki Parmar, Jakob Uszkoreit, Llion Jones,
  Aidan~N Gomez, {\L}ukasz Kaiser, and Illia Polosukhin.
\newblock Attention is all you need.
\newblock In \emph{NIPS}, pp.\  5998--6008, 2017.

\bibitem[Weiss et~al.(2018)Weiss, Goldberg, and Yahav]{FinitePrecision}
Gail Weiss, Yoav Goldberg, and Eran Yahav.
\newblock On the practical computational power of finite precision {RNNs} for
  language recognition.
\newblock In \emph{{ACL} 2018}, pp.\  740--745, 2018.

\end{thebibliography}
\end{document}